\documentclass[twoside,11pt]{article}
\usepackage[utf8]{inputenc}
\usepackage{amsmath}
\usepackage{amssymb}
\usepackage{mathtools}
\usepackage{bm}
\usepackage{algorithm}
\usepackage{algpseudocode}
\usepackage{url}
\usepackage{graphicx}

\newcommand*{\Ex}[2]{\mathop{\mathbb{E}}_{#1}{\left[#2\right]}}
\newcommand*{\KL}[2]{D_{\rm KL}\left[#1 || #2\right]}
\newcommand{\inv}{^{-1}}
\newcommand{\vzero}{\mathbf{0}}

\newcommand{\vb}{\mathbf{b}}
\newcommand{\vf}{\mathbf{f}}
\newcommand{\vk}{\mathbf{k}}
\newcommand{\vm}{\mathbf{m}}

\newcommand{\vw}{\mathbf{w}}
\newcommand{\vx}{\mathbf{x}}
\newcommand{\vy}{\mathbf{y}}
\newcommand{\vz}{\mathbf{z}}

\newcommand{\vH}{\mathbf{H}}
\newcommand{\vI}{\mathbf{I}}
\newcommand{\vK}{\mathbf{K}}
\newcommand{\vM}{\mathbf{M}}

\newcommand{\vW}{\mathbf{W}}
\newcommand{\vSigma}{\boldsymbol{\Sigma}}

\newcommand{\vmu}{\boldsymbol{\mu}}
\newcommand{\vnu}{\boldsymbol{\nu}}
\newcommand{\vzeta}{\boldsymbol{\zeta}}

\newcommand{\vpsi}{\boldsymbol{\psi}}
\newcommand{\vPsi}{\boldsymbol{\Psi}}
\newcommand{\cL}{\mathcal{L}}
\newcommand{\cD}{\mathcal{D}}

\newcommand{\cR}{\mathcal{R}}

\newcommand{\cX}{\mathcal{X}}
\newcommand{\cY}{\mathcal{Y}}
\newcommand{\argmax}{\mathop{\rm arg~max}\limits}
\newcommand{\argmin}{\mathop{\rm arg~min}\limits}

\newcommand{\T}{^{\rm T}}
\algnewcommand{\IIf}[1]{\State\algorithmicif\ #1\ \algorithmicthen}
\algnewcommand{\EndIIf}{\unskip\ \algorithmicend\ \algorithmicif}

%

\usepackage{jmlr2e}



\jmlrheading{1}{2021}{1-32}{4/21}{10/00}{Hideaki Ishibashi and Hideitsu Hino}


\ShortHeadings{Stopping Criterion for Active Learning Based on Error Stability}{Ishibashi and Hino}
\firstpageno{1}

\begin{document}

\title{Stopping Criterion for Active Learning Based on Error Stability}

\author{\name Hideaki Ishibashi \email ishibashi@brain.kyutech.ac.jp \\
       \addr Department of Human Intelligence Systems\\
       Kyushu Institute of Technology\\
       Kitakyushu-shi, Fukuoka 808-0135, Japan
       \AND
       \name Hideitsu Hino \email hino@ism.ac.jp \\
       \addr Department of Statistical Modeling\\
       The Institute of Statistical Mathematics\\
       Tachikawa, Tokyo 190-8562, Japan, and\\
       RIKEN AIP, Nihonbashi, Tokyo 103-0027, Japan
       }

\editor{}

\maketitle

\begin{abstract}
\end{abstract}
Active learning is a framework for supervised learning to improve the predictive performance by adaptively annotating a small number of samples. To realize efficient active learning, both an acquisition function that determines the next datum and a stopping criterion that determines when to stop learning should be considered. In this study, we propose a stopping criterion based on error stability, which guarantees that the change in generalization error upon adding a new sample is bounded by the annotation cost and can be applied to any Bayesian active learning. We demonstrate that the proposed criterion stops active learning at the appropriate timing for various learning models and real datasets.

\begin{keywords}
supervised learning, active learning, stopping criterion, PAC-Bayesian learning
\end{keywords}

\section{Introduction}
In supervised learning, increasing the size of the training dataset would improve the predictive accuracy. However, a sufficient number of data often cannot be obtained since the annotation cost is high. Active learning (AL) is a framework for acquiring data to improve the generalization error of a predictor and it is effective when only a small number of data is available~\citep{Settles2009,Dasgupta2011,hino2020active}. The effectiveness of AL depends on both the acquisition function used for selecting effective data to improve the generalization error of the predictor and the stopping criterion used for determining the stopping timing. However, most conventional AL methods adopt the ``fixed budget'' approach, which acquires samples until the size of the labeled dataset reaches to a predetermined budget. The fixed budget approach considers only the annotation cost and the generalization error is not considered; hence, it often causes overexploitation or a deficiency. Therefore, a stopping criterion considering the generalization error is required to utilize AL effectively.

The optimal stopping timing of AL is determined subjectively by users since there is a trade-off between decreasing the annotation cost and increasing the performance of the predictor. Therefore, it is essential for a stopping criterion of AL to have the ability to determine whether to aggressively or conservatively stop learning, taking into account the trade-off~\citep{Altschuler2019}. However, it is not easy for a stopping criterion to satisfy this requirement for the following two reasons. First, it is not easy to evaluate the performance of the predictor in AL. Although the generalization error is estimated by using the test dataset in a standard setting of supervised learning, it is not appropriate for AL, which aims to learn from a small amount of data. Second, it is not easy to compare the annotation cost and performance numerically on the same scale since the units of measurement of the annotation cost and performance are different. 

In this study, we propose a stopping criterion based on the upper bound of the change in generalization error upon adding a new sample. The proposed criterion guarantees that the change in generalization error is less than a pre-determined threshold. The change of generalization errors guaranteed by the proposed criterion is normalized with respect to the range of the generalization error. Therefore, the proposed criterion stops learning at a similar timing for any dataset when the same threshold is used. Furthermore, the proposed criterion can be applied to arbitrary Bayesian predictive models. We demonstrate the effectiveness of the proposed criteria for several Bayesian ALs.

The contributions of this study are as follows.
\begin{enumerate}
    \item {\it Versatile stopping criteria for Bayesian active learning are proposed.}\\
    The proposed criterion can be applied to arbitrary Bayesian AL active learning algorithms. The criterion controls the stop timing of AL based on the stability of the generalization error. We experimentally demonstrate that the proposed criterion has a high correlation with the generalization error and its threshold can be determined without depending on the dataset as long as the same learning model is used.
    \item {\it A bound of the difference between generalization errors is proved.}\\
    We prove that the difference between expected generalization errors with respect to Bayes posteriors can be bounded based on a probably approximately correct (PAC) Bayesian framework. Unlike conventional PAC-Bayesian learning, our bound does not assume independence among samples. In this sense, the bound is suitable for AL. Moreover, we can guarantee that the proposed bound converges to zero when the posterior converges.
    \item {\it The proposed criteria are applied to several Bayesian AL algorithm.}\\
    In this study, we demonstrate that the proposed criteria can stop the following Bayesian AL algorithms at the appropriate timing: Bayesian linear regression, Bayesian logistic regression, Gaussian process regression, and dropout-based Bayesian deep learning. In particular, we derive analytical expressions for both the Kullback-Leibler (KL) divergence between GP posteriors and the bound of the KL-divergence between posteriors of deep Bayesian learning.
\end{enumerate}

The rest of the paper is organized as follows. Section~2 summarizes the AL framework and the existing measures used for stopping AL algorithms. Section~3 defines the optimal stopping timing. Sections~4 describes the method for automatically stopping AL based on error stability and another interpretation of the proposed method from the viewpoint of a martingale. Sections~5 demonstrates the effectiveness of the proposed method for four learning models. Section~6 is devoted to conclusions.
We note that a preliminary version of this work is presented in~\citep{Ishibashi2020}.

\section{Active learning and its stopping criteria}
\subsection{Active learning}
Let $\vx \in \cX$ and $y \in \cY$ be an input variable and the corresponding output variable, respectively. The purpose of supervised learning is to estimate a predictor $f : \vx \mapsto \mathbb{E}[y|\vx]$ from a training dataset. Active learning is a framework for selecting a small number of datasets, which is useful for improving the generalization error of the predictive model by iterating the following two processes: (i) estimate the predictor from current training data, and (ii) acquire new training data by maximizing an acquisition function. We denote the acquisition function by $g$. Then, the next data is selected as
\begin{equation*}
    \vx^\ast = \argmax_{\vx \in \cX}g\left(\vx \mid f\right).
\end{equation*}
We note that there are batch active learning methods, but for simplicity, we consider only one sample at a time for annotation. The acquisition functions are classified into two approaches, namely, an informative sample selection approach and a representative sample selection approach~\citep{Dasgupta2011}. The informative sample selection approach selects the most uncertain datum for the predictor~\citep{Lewis1994,Scheffer2001,Yang2015,Seung1992,Freund1992,Houlsby2011,Kirsch2019}. On the other hand, the representative sample selection approach selects a datum representing the overall input distribution~\citep{Nguyen2004,Settles2008}. These two approaches can be combined to achieve the optimal acquisition function~\citep{Xu2003,Donmez2007,Huang2010,Karzand2020}. Recently, as another approach, methods of learning acquisition functions have also been proposed~\citep{Konyushkova2017,Sener2018,THK2021}.

\subsection{Conventional stopping criteria for active learning}
Most active learning methods adopt the fixed-budget approach, which stops learning when the number of annotated data reaches a predetermined size. However, this approach tends to cause undersampling or oversampling since it is rare to know the appropriate sampling size for satisfactory prediction accuracy in advance. While the predictor may be useless because of the lack of generalization error in the case of undersampling, the efficient sampling in AL may be wasted in the case of oversampling. Therefore, a stopping criterion considering the generalization error is required.

The conventional stopping criterion for AL can be classified into three approaches: accuracy-based, confidence-based and stability-based approaches. In the accuracy-based approach, predictive error is evaluated by using unlabeled or past training data. A typical method is to evaluate the predictive error by using queried or selected unlabeled data~\citep{Zhu2007,Zhu2008,Zhu2008b,Laws2008}. In the confidence-based approach, the stopping timing is determined by the confidence of prediction with respect to unlabeled data. For example, the margin of a support vector machine (SVM), entropy, mutual information or the agreement of learners is used for evaluating uncertainty~\citep{Schohn2000,Vlachos2008,Zhu2007,Krause2007,Tomanek2007,Olsson2009}. In the stability-based approach, active learning is stopped by evaluating the difference between values before and after obtaining a new training datum. Stopping criteria based on stability such as the parameter change given new data, the agreement between the most recent learner and the previous learner and the predicted change in F-measure have been proposed~\citep{Bloodgood2009,Bloodgood2013,Altschuler2019}. However, except for a small number of methods~\citep{Krause2007,Bloodgood2013,Altschuler2019}, most of these methods lack theoretical underpinning. Moreover, the stopping criterion proposed by~\citet{Krause2007} assumes Gaussian process regression as a predictive model and it is necessary to discretize the domain of explanatory variables. The criteria proposed by~\citet{Bloodgood2013} and \citet{Altschuler2019} can be applied to any classification model but not to regression models.

Theoretically, a stopping criterion for disagreement-based active learning~\citep{Balcan2009, Hanneke2014} could be applicable to wide variety of learning models. It is also equipped with a learning-theoretical stopping criterion. However, it is difficult to apply disagreement-based active learning to practical problems.

\subsection{PAC-Bayesian learning}

As another approach, we can consider terminating active learning by using PAC-Bayesian learning. Let $\cD$ be the true distribution for a pair of input variable $\vx\in\cX$ and its corresponding output variable $y\in\cY$. We assume that a training dataset $S=\{(\vx_i,y_i)\}^n_{i=1}$ is generated by $\cD$. We specify the dataset at time $t$ in AL as $S_t$, which is the union of the initial dataset $S_0=\{(\vx_i,y_i)\}^{n_0}_{i=1}$ and the acquired dataset $\{(\vx_i,y_i)\}^t_{i=1}$, and the size of $S_t$ is denoted by $n_t$. Denoting a predictor parameterized by $\theta$ as $f_\theta$, we respectively define the training and generalization errors for $f_\theta$ as
\begin{equation*}
    \hat{\cL}(\theta) \coloneqq \frac{1}{n}\sum^n_{i=1}l(f_\theta(\vx_i),y_i), \quad
    \cL(\theta) \coloneqq \Ex{\cD}{l(f_\theta(\vx),y)}
\end{equation*}
where $l(f_\theta(\vx),y) \in [a,b]$ is the loss function for the predictor. Let $p(\theta|S)$ be a Bayesian posterior distribution given $S$. In the PAC-Bayesian learning framework~\citep{McAllester1999,Guedj2019,Germain2016}, we consider evaluating the upper bound of the expectation of the generalization error with respect to the Bayesian posterior by using the expectation of the training error and the Kullback--Leibler (KL) divergence between the posterior and the prior~\citep{McAllester1999}. For example, let the expected training error and generalization error with respect to the Bayesian posterior $p(\theta|S)$ be
\begin{equation*}
    \hat{\cL}(p(\theta|S)) \coloneqq \Ex{p(\theta|S)}{\hat{\cL}(\theta)}, \quad \cL(p(\theta|S)) \coloneqq \Ex{p(\theta|S)}{\cL(f)},
\end{equation*}
and the KL-divergence is defined as $\KL{p(\theta)}{q(\theta)}\coloneqq \Ex{p(\theta)}{\log p(\theta)/q(\theta)}$. Then, \cite{McAllester1999} proved that the following bound holds with probability at least $1-\delta$:
\begin{equation*}
   \cL(p(\theta|S))\leq \hat{\cL}(p(\theta|S))+\sqrt{\frac{\KL{p(\theta|S)}{p(\theta)}+\log{\frac{2\sqrt{n}}{\delta}}}{2n}},
\end{equation*}
where $p(\theta)$ is any prior probability distribution independent of $S$. This bound is only applicable to classification problems, but several bounds for regression problems have been considered by~\citet{Germain2016}.

PAC-Bayesian bounds are promising tools for developing a stopping criterion for active learning, but conventional PAC-Bayesian bounds suffer the following two drawbacks. First, most of the PAC-Bayesian bounds assume that samples are realizations of i.i.d. random variables. This assumption is not suitable for active learning since a new sample depends on previous data samples. PAC-Bayesian bounds for non-i.i.d. samples have been proposed~\citep{Alquier2018,Seldin2012}. These approaches assume that the difference between the generalization error and the training error follows $\alpha$-mixing or a martingale, but it is not trivial to apply them to an AL. Therefore, to the best of our knowledge, there is no suitable PAC-Bayesian bound for active learning. Second, it is difficult to determine a suitable threshold for stopping AL because the expected generalization error depends on the dataset and model. This makes it difficult for users to set an appropriate parameter for the dataset at hand. It is also important, depending on the application, to stop active learning rapidly without waiting the convergence of learning or to stop learning once confident of convergence of the generalization error. 

In this study, we propose a stopping criterion based on the error stability of the expected generalization error. The proposed criterion does not require any assumption with respect to models like PAC-Bayesian learning. On the other hand, the proposed criterion does not assume that samples are i.i.d. random variables, unlike PAC-Bayesian learning. Furthermore, it is easy to determine a threshold of the proposed criterion, since the range of the threshold is restricted to $[0,1]$ for any dataset.

\section{Definition of optimal stopping timing based on error stability}
We denote the posterior given $S_t$ by $p(\theta|S_t)$, where $S_t$ is the training dataset at time $t$ in active learning. Let $\cL(\theta) \in [a,b]$ be the generalization error for $\theta$ and $\Ex{p(\theta|S_t)}{\cL(\theta)}$ be the expected generalization error for the posterior. Assuming that the sampling cost is constant  $\kappa\in[0,\infty)$, the standard definition of the optimal stopping timing is
\begin{align}
    t^\ast \coloneqq \argmin_t\left\{\Ex{p(\theta|S_t)}{\cL(\theta)}+\kappa t\right\},
    \label{eq_standard_optimal_stopping_timing}
\end{align}
which balances the expected generalization error and sampling cost. 
However, it is impossible to obtain $t^\ast$ for the following two reasons. First, it is difficult to calculate the first term of Eq.~\eqref{eq_standard_optimal_stopping_timing} since it is not realistic to estimate the generalization error by using a test dataset in active learning problems. Second, it is not easy to know the appropriate sampling cost $\kappa$ in a comparable unit to the generalization error. 

In this study, we consider estimating the optimal stopping timing by using the difference between the generalization errors before and after obtaining a new training datum, which is denoted by $\Delta\cL_i\coloneqq\Ex{p(\theta|S_i)}{\cL(\theta)}-\Ex{p(\theta|S_{i-1})}{\cL(\theta)}$. Note that $\Ex{p(\theta|S_0)}{\cL(\theta)}$ is a constant; thus, the following equation holds:
\begin{align}
    t^*=& \argmin_t\{\Ex{p(\theta|S_t)}{\cL(\theta)}+\kappa t\} \notag \\
    =& \argmin_t\left\{\sum^t_{i=1}\frac{\Delta\cL_i}{\gamma}+\frac{\kappa}{\gamma} t\right\},
    \label{eq_diff_optimal_stopping_timing}
\end{align}
where $\gamma$ is a parameter used to normalize the range of $\lambda \coloneqq \kappa/\gamma$ to $[0,1]$. The optimal stopping timing is equal to the timing satisfying $\Delta\cL_i/\gamma\leq\lambda$ when the first derivative of $\Ex{p(\theta|S_t)}{\cL(\theta)}$ is a monotonically decreasing function, but this does not hold in general. Therefore, instead of assuming the condition, we suppose that $\Delta\cL_i$ gradually decreases with time. Then, the above optimal stopping timing can be approximated as
\begin{equation}
    t^* \approx \min_t\left\{t>0 \middle| \frac{|\Delta\cL_t|}{\gamma}\leq \lambda\right\}.
    \label{eq_optimal_stopping_timing}
\end{equation}
With this definition, we can stop active learning at the optimal stopping timing by monitoring $|\Delta \cL_t|/\gamma$ in the sequential process of adding a new sample and updating the predictive model. Since the condition $|\Delta\cL_t|\leq \kappa$ is called error stability in~\citet{Bousquet2002}, we call the stopping criterion based on Eq.~\eqref{eq_optimal_stopping_timing} the error stability based stopping criterion. We note that in the above formulation, there is no need to determine the sampling cost $\kappa$. Instead, we have to set the threshold $\lambda$, which should be easier than determining the sampling cost for each model and dataset because the range of the threshold is restricted to $[0,1]$ and it has an intuitive interpretation as explained in the next section.

\section{Stopping criterion based on error stability}
The error stability based criterion in Eq.~\eqref{eq_optimal_stopping_timing} contains the difference in generalization error $\Delta \cL_t$, which is not directly available. In this section, we derive the upper bound of $\Delta \cL_t$, estimable by using the posterior distribution of Bayesian predictive models.

\subsection{Proposed stopping criterion}
In this study, we consider Bayesian predictive models. Let $p(\theta)$ and $p(\theta|S)$ respectively be the prior and posterior distributions given a set of observations $S$ defined as
\begin{equation*}
    p(\theta | S)=\frac{1}{Z} \exp\{-\rho\hat{\cL}(\theta)\}p(\theta),
\end{equation*}
where $\rho \geq 0$ is a parameter controlling the trade-off between training error and the prior and $Z$ is the normalization constant corresponds to the Bayesian marginal likelihood. Then, the following theorem holds:
\begin{theorem}
Let $p(\theta|S)$ and $p(\theta|S')$ be the posteriors given $S$ and $S'$, respectively. For the generalization error $\cL(\theta) \in [a,b]$, the following inequality holds\footnote{More generally, for any $v\geq\mathbb{E}[\cL^2(\theta)]$, the following inequality holds:
\begin{equation*}
    -\frac{v}{b-a}(\exp\left\{W_0(u')+1\right\}-1)\leq \Ex{p(\theta|S)}{\cL(\theta)}-\Ex{p(\theta|S')}{\cL(\theta)} \leq \frac{v}{b-a}(\exp\left\{W_0(u)+1\right\}-1),
\end{equation*}
where $u'\coloneqq((b-a)^2\KL{p(\theta|S')}{p(\theta|S)}-v)/ve$ and $u\coloneqq((b-a)^2\KL{p(\theta|S)}{p(\theta|S')}-v)/ve$.
}:
\begin{equation}
    -(b-a)r(p(\theta|S'),p(\theta|S))\leq \Ex{p(\theta|S)}{\cL(\theta)}-\Ex{p(\theta|S')}{\cL(\theta)} \leq (b-a)r(p(\theta|S),p(\theta|S')),
    \label{eq_gap_bound}
\end{equation}
where
\begin{align*}
    r(p(\theta),q(\theta)):=\exp\left\{W_0\left(\frac{\KL{p(\theta)}{q(\theta)}-1}{e}\right)+1\right\}-1
\end{align*}
and $e$ is the base of the natural logarithm. Here, $W_0(\cdot)$ is the principal branch of the Lambert $W$ function~\citep{Corless1996}.
\label{theorem_gap_bound}
\end{theorem}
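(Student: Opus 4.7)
The plan is to derive the claimed bound from a Donsker--Varadhan--type change-of-measure inequality combined with a Bennett-style bound on the moment-generating function of the centred loss. Recall that for any distributions $p,q$ on $\theta$ and any measurable $g$ with finite exponential moment under $q$, one has $\lambda\Ex{p(\theta)}{g(\theta)} \leq \KL{p(\theta)}{q(\theta)} + \log\Ex{q(\theta)}{\exp(\lambda g(\theta))}$ for every $\lambda>0$. I would apply this with $p=p(\theta|S)$, $q=p(\theta|S')$, and $g(\theta)=\cL(\theta)-\Ex{p(\theta|S')}{\cL(\theta)}$, so that the left-hand side equals $\lambda$ times the signed gap $\Ex{p(\theta|S)}{\cL(\theta)}-\Ex{p(\theta|S')}{\cL(\theta)}$ that we wish to bound from above.

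The second step is to control the log moment-generating function on the right-hand side. Since $\cL(\theta)\in[a,b]$, the centred variable $g(\theta)$ is bounded above by $b-a$, and its variance under $q$ is at most $v$ for any upper bound $v$ on $\Ex{p(\theta|S')}{\cL^2(\theta)}$; the main statement uses the universal choice $v=(b-a)^2$, whereas the footnote keeps $v$ general. Bennett's lemma for bounded mean-zero random variables then gives
\begin{equation*}
\log\Ex{p(\theta|S')}{\exp(\lambda g(\theta))} \leq \frac{v}{(b-a)^2}\bigl(\exp(\lambda(b-a))-1-\lambda(b-a)\bigr).
\end{equation*}
Setting $\eta:=\lambda(b-a)$ and combining with the change-of-measure inequality produces, for every $\eta>0$, an upper bound of the form $(b-a)\KL{p(\theta|S)}{p(\theta|S')}/\eta + v(e^\eta-1-\eta)/((b-a)\eta)$ on the gap.

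The third step is to minimise this bound over $\eta>0$. Differentiating and setting the derivative to zero leads to the transcendental equation $(\eta-1)e^{\eta-1}=((b-a)^2\KL{p(\theta|S)}{p(\theta|S')}-v)/(ve)$, whose unique positive solution is $\eta^\ast=W_0(u)+1$ with $u=((b-a)^2\KL{p(\theta|S)}{p(\theta|S')}-v)/(ve)$, by definition of the principal branch of the Lambert $W$ function. Substituting $\eta^\ast$ back and using the first-order condition to collapse the two terms of the bound into $v(e^{\eta^\ast}-1)/(b-a)$ yields precisely the footnote's inequality; when $v=(b-a)^2$, the argument $u$ reduces to $(\KL{p(\theta|S)}{p(\theta|S')}-1)/e$ and we recover $(b-a)\,r(p(\theta|S),p(\theta|S'))$ on the right. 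The lower bound in the theorem then follows immediately by symmetry: running the same argument with $S$ and $S'$ interchanged bounds the negation of the signed gap and produces the stated lower bound involving $\KL{p(\theta|S')}{p(\theta|S)}$.

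I expect the main obstacle to be bookkeeping rather than conceptual depth: one must verify that the critical point $\eta^\ast$ is actually a minimiser, that $u\geq-1/e$ so that $W_0(u)$ is defined (which holds because $\KL{\cdot}{\cdot}\geq 0$ and $v>0$, with the edge case $\KL{\cdot}{\cdot}=0$ degenerating to a trivial zero bound), and that the post-substitution algebra collapses cleanly to the Lambert $W$ form displayed in the theorem. A secondary subtlety is using a second-moment bound $v\geq\Ex{p(\theta|S')}{\cL^2(\theta)}$ rather than the exact variance; this choice is what keeps the final expression independent of the unknown posterior mean of $\cL$ and is absorbed by the universal choice $v=(b-a)^2$ in the main statement.
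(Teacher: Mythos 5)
Your proposal is correct and follows essentially the same route as the paper's proof: a Donsker--Varadhan change-of-measure inequality combined with a Bennett-type bound on $\log\mathbb{E}[e^{sX}]-s\mathbb{E}[X]$, optimization of the free parameter via the principal branch of the Lambert $W$ function, the specialization $v=(b-a)^2$, and the lower bound by exchanging $S$ and $S'$. The only cosmetic difference is that you centre the loss by its mean under $p(\theta|S')$ while the paper shifts by $a$ and subtracts the mean term separately; the resulting optimization and final expression are identical.
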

\begin{proof}
 See Appendix~\ref{sec_proof_theorems}.
\end{proof}
\begin{figure}[t!]
    \centering
    \includegraphics[width=8cm]{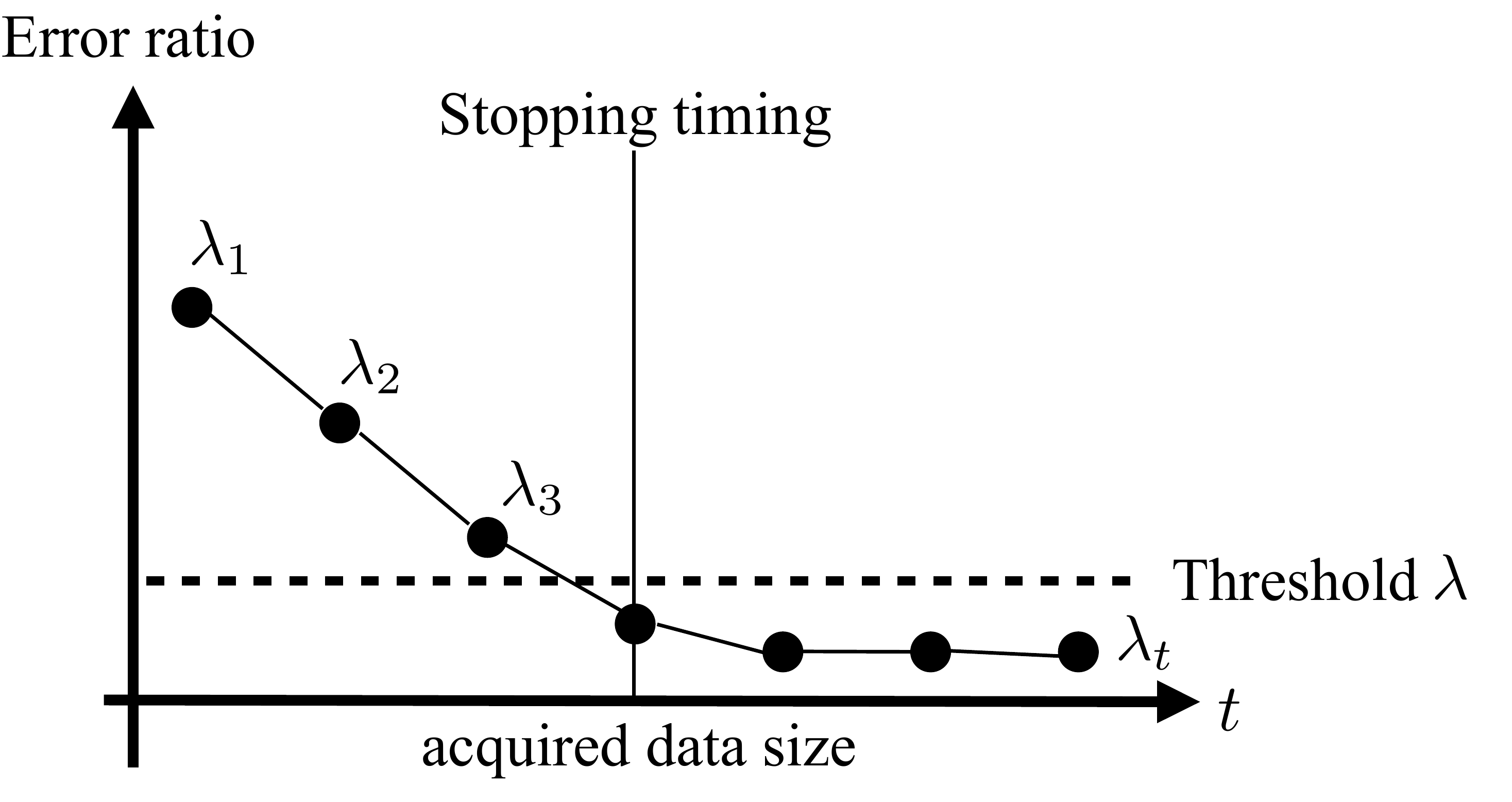} \\
    \caption{Concept of the proposed criterion. We stop active learning when the error ratio is less than a predetermined threshold $\lambda$ as above.}
    \label{fig_concept_of_the_proposed_criterion}
\end{figure}
Let $S_t$ be a training dataset at time $t$ and $p(\theta|S_t)$ be the posterior given $S_t$, and define
\begin{equation*}
    r_t = r(p(\theta|S_t),p(\theta|S_{t-1}))+r(p(\theta|S_{t-1}),p(\theta|S_r)).
\end{equation*}
Then, by using Theorem~\ref{theorem_gap_bound}, the upper bound of $|\Delta\cL_t|/\gamma$ is derived as
\begin{equation*}
    \frac{|\Delta\cL_t|}{\gamma} \leq \frac{(b-a)r_t}{\gamma}.
\end{equation*}
To remove the range $(b-a)$ from the upper bound, we define $\gamma = (b-a) \tilde{\gamma}$. 
In this study, as shown in Fig.~\ref{fig_concept_of_the_proposed_criterion}, we propose to stop active learning when the following condition holds:
\begin{equation}
    \lambda_t \coloneqq \frac{r_t}{\tilde{\gamma}} \leq \lambda,
    \label{eq_error_ratio}
\end{equation}
where the error stability is guaranteed to satisfy $|\Delta\cL_t|/(b-a)\tilde{\gamma}\leq \lambda$. When $r_t$ is larger than $r_1$, it is not less than the threshold; hence, we choose $\tilde{\gamma}=r_1$ in order to ensure that the range of $\lambda$ is $[0,1]$ and call $\lambda_t$ the ``error ratio''\footnote{In practice, since $r_1$ can be extremely larger than the other $r_t, t > 1$, the learning process may stop  before the convergence of generalization error when $\tilde{\gamma}=r_1$. To avoid this problem, we iterate at least $10$ acquisitions and set $\tilde{\gamma}=\min \{r_t\}^m_{t=1}$, where $m=1$ for experiments with a Bayesian deep neural network model, and $m=10$ with other models in our implementation.}. The concrete algorithm is shown in Algorithm~\ref{alg_clt_stopping_criterion}.

The proposed criterion has three favorable properties. First, the proposed criterion does not require any assumption including independence between samples. This is suitable for an active learning framework, since acquired samples affect the selection of the next sample. Second, $\lambda_t$ goes to zero because $W_0\left(-1/e\right)=-1$ holds when the KL-divergence becomes zero. Therefore, the proposed criterion guarantees that after observing a sufficient number of samples, the active learning is terminated. Finally, the proposed criterion is based on the upper bound of the gap between the normalized generalization errors without specifying the range of the generalization error. This is a major advantage of the proposed criterion because, in general, it is difficult to know the range of the generalization error in advance. 

\begin{remark}
In this paper, to apply AL, we explain the proposed bound as the bound for the gap between expected generalization errors with respect to Bayes posteriors. However, Theorem~\ref{theorem_gap_bound} can be applied to any measurable function and any probability density function. 
\end{remark}

\begin{algorithm}[t]
\caption{Error stability based stopping criterion}
\label{alg_clt_stopping_criterion}
\begin{algorithmic}
    \State Input $\lambda>0$, $S_0$, $\tilde{\gamma}$
    \State $t \leftarrow 0$
    \Comment Sample initial dataset
    \State $p_t(\theta)\leftarrow \frac{1}{Z} \exp\{-\rho\hat{\cL}(\theta)\}p(\theta)$
    \Comment Calculate posterior
    \State $\tilde{\lambda}_t \leftarrow 1$
    \While{$\lambda < \lambda_t$}
        \State $t \leftarrow t+1$
        \State $\vx_t=\argmax_{\vx\in\cX}g(\vx|\theta)$
        \Comment Sample by AL
        \State $S_t \leftarrow S_{t-1} \cup \{(\vx_t,y_t)\}$
        \Comment Update dataset
        \State $p_t(\theta)\leftarrow \frac{1}{Z} \exp\{-\rho\hat{\cL}(\theta)\}p(\theta)$
        \Comment Calculate posterior
        \State $r(p_t,p_{t-1}) \leftarrow \exp\left\{W_0\left(\frac{\KL{p_t(\theta)}{p_{t-1}(\theta)}-1}{e}\right)+1\right\}-1$
        \State $r(p_{t-1},p_t) \leftarrow \exp\left\{W_0\left(\frac{\KL{p_{t-1}(\theta)}{p_t(\theta)}-1}{e}\right)+1\right\}-1$
        \State $\lambda_t \leftarrow \frac{r(p_t,p_{t-1})+r(p_{t-1},p_t)}{\tilde{\gamma}}$
        \Comment Calculate error ratio
    \EndWhile
\end{algorithmic}
\end{algorithm}

\subsection{Interpretation of the proposed stopping criterion from the viewpoint of martingale theory}
The proposed criterion can be interpreted from the viewpoint of martingale theory. Denoting the expected generalization error with respect to the posterior distribution $q_t(\theta)$ by $E_t$, we consider the sequence $E^t_1=\{E_1,E_2,\ldots,E_t\}$. 
Then, the following theorem holds:
\begin{theorem}
\label{theorem_stopping_timing_azuma}
The sequence $E^t_1$ is assumed to be a supermartingale\footnote{The assumption is reasonable because the expected generalization error is expected to decrease with increasing sample size.}, namely, $\mathbb{E}[E_i | E^{i-1}_1] \leq E_{i-1}$. For any $0<\eta<1$ and $0<\delta<1$ satisfying
\begin{equation*}
    \lambda_t\leq \sqrt{-\frac{2}{\log(\delta/2)}}\eta=\lambda,
\end{equation*}
the following inequality holds with probability at least  $1-\delta$:
\begin{align*}
    \left|E_t-\mathbb{E}[E_t|E^{t-1}_1]\right| \leq (b-a)\gamma\eta.
\end{align*}
\end{theorem}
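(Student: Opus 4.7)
The plan is to apply a one-step Azuma--Hoeffding concentration inequality to the random variable $E_t$ conditional on the history $E_1^{t-1}$, using Theorem~\ref{theorem_gap_bound} to bound the per-step deviation pathwise and the supermartingale hypothesis to anchor the conditional mean. First I would invoke Theorem~\ref{theorem_gap_bound} with $S = S_t$ and $S' = S_{t-1}$ to get the pathwise bound $|E_t - E_{t-1}| \leq (b-a)\,r_t$, where $r_t$ is exactly the quantity defined before Eq.~\eqref{eq_error_ratio}. Conditioning on $E_1^{t-1}$, this confines $E_t$ to an interval of length at most $2(b-a)r_t$ centred at $E_{t-1}$; the supermartingale property $\mathbb{E}[E_t \mid E_1^{t-1}] \leq E_{t-1}$ places the conditional mean inside the same interval, so the centred variable $X_t := E_t - \mathbb{E}[E_t \mid E_1^{t-1}]$ is zero-mean with support of diameter at most $2(b-a)r_t$.

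Next I would apply Hoeffding's lemma to $X_t$, obtaining the conditional moment generating function bound $\mathbb{E}[e^{s X_t} \mid E_1^{t-1}] \leq \exp\!\bigl(s^2 (b-a)^2 r_t^2 / 2\bigr)$, and then combine this with the usual Chernoff--Markov step on both tails to deduce $\Pr\!\bigl(|X_t| \geq \epsilon\bigr) \leq 2\exp\!\bigl(-\epsilon^2 / (2(b-a)^2 r_t^2)\bigr)$. Setting the right-hand side equal to $\delta$ and solving for $\epsilon$ yields $|X_t| \leq (b-a)\, r_t \sqrt{-2 \log(\delta/2)}$ with probability at least $1-\delta$.

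Finally, the stopping-criterion hypothesis $\lambda_t = r_t/\tilde{\gamma} \leq \sqrt{-2/\log(\delta/2)}\,\eta$ rearranges to $r_t \leq \tilde{\gamma}\sqrt{-2/\log(\delta/2)}\,\eta$; plugging this into the high-probability bound above and using $\gamma = (b-a)\tilde{\gamma}$ makes the two radicals cancel, recovering a bound of the advertised form. I expect the main obstacle to be bookkeeping rather than conceptual: one must be careful in tracking the constants threading through $\gamma$, $\tilde{\gamma}$, $r_t$ and $\lambda$, and also justify that Theorem~\ref{theorem_gap_bound} yields a genuinely \emph{pathwise} inequality on $|E_t - E_{t-1}|$ so that Hoeffding's lemma can be invoked conditionally on the filtration generated by $E_1^{t-1}$. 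This last point relies on the posterior update $S_{t-1} \mapsto p(\theta \mid S_{t-1})$ being a deterministic function of the observed data, which is automatic for the Bayesian models considered throughout the paper.
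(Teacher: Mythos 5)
Your overall strategy is exactly the paper's: use Theorem~\ref{theorem_gap_bound} to confine the one-step increment, apply Hoeffding's lemma to the conditionally centred increment, run a two-sided Chernoff bound, solve for $\epsilon$, and translate the resulting condition on $r_t$ into a condition on $\lambda_t$. The paper dresses up the centering step via the Doob--Meyer decomposition ($E_t = M_t + A_t$ with $M_t - M_{t-1} = E_t - \mathbb{E}[E_t\mid E^{t-1}_1]$), but your direct definition of $X_t$ is the same object, so that difference is cosmetic.

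There is, however, one concrete slip that costs you a factor of $2$ and prevents the final constants from matching the statement. Theorem~\ref{theorem_gap_bound} gives the \emph{asymmetric} two-sided bound
\begin{equation*}
-(b-a)\,r(p(\theta|S_{t-1}),p(\theta|S_t)) \;\leq\; E_t - E_{t-1} \;\leq\; (b-a)\,r(p(\theta|S_t),p(\theta|S_{t-1})),
\end{equation*}
so $E_t$ (and hence the centred variable $X_t$, after subtracting the $E^{t-1}_1$-measurable quantity $\mathbb{E}[E_t\mid E^{t-1}_1]$) lives in an interval of total length $(b-a)\bigl(r(p_t,p_{t-1})+r(p_{t-1},p_t)\bigr) = (b-a)r_t$, precisely because $r_t$ is \emph{defined} as the sum of the two one-sided radii. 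By first symmetrizing to $|E_t - E_{t-1}| \leq (b-a)r_t$ and then declaring the support to have diameter $2(b-a)r_t$, you double the interval length; Hoeffding's lemma then gives $\exp(s^2(b-a)^2r_t^2/2)$ instead of the paper's $\exp(s^2(b-a)^2r_t^2/8)$, the tail bound becomes $2\exp(-\epsilon^2/(2(b-a)^2r_t^2))$ instead of $2\exp(-2\epsilon^2/((b-a)^2r_t^2))$, and solving for $\epsilon$ yields $(b-a)r_t\sqrt{-2\log(\delta/2)}$ rather than $(b-a)r_t\sqrt{-\log(\delta/2)/2}$. Substituting the threshold condition then gives $|X_t| \leq 2(b-a)\gamma\eta$, not $(b-a)\gamma\eta$: the radicals do cancel, but they leave behind a factor of $2$. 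The fix is simply to keep the asymmetric interval of length $(b-a)r_t$; with that correction your argument reproduces the paper's proof verbatim. Your closing remark about needing the pathwise/conditional validity of the increment bound (and the determinism of the posterior update) is a fair point, and is one the paper itself passes over silently.
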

\begin{proof}
 See Appendix~\ref{sec_proof_theorems}.
\end{proof}

$\mathbb{E}[E_i|E^{i-1}_1]$ is the conditional expectation of the expected generalization error $E_i$ with respect to a new sample generated by random sampling given $E^{i-1}_1$. We can conclude that the stopping criterion guarantees the effectiveness of active learning since $E_i-\mathbb{E}[E_i|E^{i-1}_1]<0$ indicates that the expected generalization error by using AL is larger than the conditional expectation of the expected generalization error.

\section{Experimental results}

In this section, the versatility of the proposed criterion is demonstrated through active learning experiments using datasets from the UCI machine learning repository~\citep{Dua2017}\footnote{Source code to reproduce the experimental results are available from~\url{https://github.com/hideaki-ishibashi/stopping_AL}}. As Bayesian active learners, four AL models are considered: Bayesian ridge regression (BRR), Bayesian logistic regression (BLR), Gaussian process regression (GPR) and Bayesian deep neural network (BDNNs). A description of the datasets is given in Table~\ref{tbl_dataset}. Every feature of these datasets is normalized to have zero mean and a standard deviation of one.

\begin{table}[thb]
  \centering
  \caption{Description of datasets.}
  \scalebox{0.9}{
  \begin{tabular}{l|c|c|c|c}
                  Name of dataset  & Sample size & Feature dimension & Test size & AL model \\ \hline \hline
    {\tt{Power plant}} & 9568 & 4 & 2000 & BRR, GPR \\ \hline
    {\tt{Protein}} & 45730 & 9 & 2000 & BRR, GPR \\ \hline
    {\tt{Gas emission}} & 36733 & 8 & 2000 & BRR, GPR \\ \hline
    {\tt{Grid stability for regression}} & 10000 & 11 & 2000 & BRR, GPR \\ \hline
    {\tt{Grid stability for classification}} & 10000 & 11 & 5000 & BLR \\ \hline
    {\tt{Skin}} & 245057 & 3 & 5000 & BLR \\ \hline
    {\tt{HTRU2}} & 17897 & 7 & 5000 & BLR \\ \hline
    {\tt{MNIST}} & 70000 & 784 & 10000 & BDNNs \\
    \end{tabular}
  }
  \label{tbl_dataset}
\end{table}

\subsection{Evaluation measure}
It is difficult to evaluate the effectiveness of the stopping timing because of the lack of a subjective, ground truth optimal stopping timing.
In fact, the appropriate threshold to stop learning should be problem-dependent. The proposed stopping criterion is based on the estimate of the upper bound of the generalization error. To demonstrate the validity of the proposed method, we evaluate the correlation coefficient between the error ratio and the expected generalization error, which is estimated by using a sufficient number of test data. When the error ratio is highly correlated with the expected generalization error, it provides additional evidence that our proposed method is suitable for stopping active learning based on the expected generalization error. Let $E^t_1=\{E_1,E_2,\ldots,E_t\}$ and $\Lambda^t_1=\{\lambda_1,\lambda_2,\ldots,\lambda_t\}$ be sequences of expected generalization errors and error ratios, respectively. Since $\lambda_t$ satisfying $\lambda_t\neq \min \Lambda^t_1$ does not affect the stopping timing of AL, we evaluate the correlation between the following two sets:
\begin{align*}
    \hat{E}^t_1 &= \left\{E_i \mid i = \argmin \Lambda^i_1, \quad i\in\{1,2,\ldots,t\}\right\}, \\
    \hat{\Lambda}^t_1 &= \left\{\lambda_i \mid i= \argmin \Lambda^i_1, \quad i\in\{1,2,\ldots,t\}\right\}.
\end{align*}

The form of the likelihood function is highly dependent on the predictive model, and the appropriate threshold will be different for different models. However, when we use the same predictive model, regardless of the dataset, it is desirable that the same threshold results in approximately the same stopping timing in terms of the convergence of the generalization error. In this experiment, we verify that the proposed criterion stops AL at about the same timing for various datasets when the learning model is fixed. We set thresholds $\lambda = 0.02, 0.015, 0.01$ for BRR; $\lambda = 0.3, 0.2, 0.1$ for BLR; $\lambda = 0.05, 0.04, 0.03$ for GPR; and $\lambda = 0.2, 0.15, 0.1$ for BDNNs.

\subsection{Active learning models}

We consider the following four active learning models: 

\mbox{}\\
\noindent{\textbf{1. Bayesian ridge regression}}\mbox{}\\
In BRR, the predictor $f : \mathcal{X} \rightarrow \mathcal{Y}$ is modeled by a linear combination of $J$ basis functions, $\vpsi=(\psi_1,\psi_2,\ldots,\psi_J)\T$, that is,
\begin{equation*}
    f(\vx) = \vw\T\vpsi(\vx),
\end{equation*}
where $\vw \in \mathbb{R}^{J}$. We assume that the likelihood function $p(y|\vw,\vx)$ given $S$ and the prior distribution $p(\vw)$ for parameter $\vw$ are written as
\begin{align*}
    p(\vy|\vw)&=\prod^n_{i=1}\mathcal{N}(y_i|\vw\T\vpsi(\vx_i),\beta^{-1}), \\
    p(\vw)&=\mathcal{N}(\vw|\vzero,\alpha\inv\vI),
\end{align*}
where $\vy=(y_1,y_2,\ldots,y_n)\T$, $\beta$ is the accuracy of Gaussian noise and $\alpha$ is the accuracy of the prior. Let $\vPsi=(\vpsi(\vx_1),\vpsi(\vx_2),\ldots,\vpsi(\vx_n))$ be the matrix of feature vectors. Then, the posterior of parameter $\vw$ is derived as
\begin{align*}
    p(\vw|S)=&\mathcal{N}(\vw|\vmu,\vSigma), \\
    \vmu=& \beta\vSigma\vPsi\vy, \\
    \vSigma\inv=&\beta\vPsi\vPsi\T+\alpha\vI.
\end{align*}
We define the acquisition function of BRR by the variance of the predictive distribution
\begin{equation*}
    g(\vx) = \vpsi\T(\vx)\vSigma\vpsi(\vx).
\end{equation*}

To apply the proposed criterion, we have to calculate the KL-divergence between $p(\vw|S)=\mathcal{N}(\vw|\vmu,\vSigma)$ and $p(\vw|S')=\mathcal{N}(\vw|\vmu',\vSigma')$. In BRR, since the posterior becomes a normal distribution, the KL-divergence is explicitly written as
\begin{equation*}
    \KL{p(\vw|S')}{p(\vw|S)} =\frac{1}{2}\left\{{\rm Tr}[\vSigma\inv\vSigma']-\log{\vSigma\inv\vSigma'}+(\vmu-\vmu')\T\vSigma\inv(\vmu-\vmu')-J\right\}.
\end{equation*}

In this experiment, we use the additive model for radial basis function (RBF) bases to define the predictor $f$ as
\begin{equation*}
    f(\vx) = \sum^D_{d=1}\sum^M_{m=1}w_{md}\psi_{md}(\vx),
\end{equation*}
where $\psi_{md}(\vx)$ is the $d$th dimension of the $m$th basis,
\begin{equation*}
    \psi_{md}(\vx)=\exp\left(-\frac{1}{2l^2}(x_d-\xi_m)^2\right).
\end{equation*}
The $M$ centers of bases $\xi_m$ are common for all dimensions and are arranged at equal intervals in the range of the observed explanatory variables. The bandwidth parameter $l$ is set to be $l=\Delta_\xi$, where $\Delta_{\xi}$ is the length between adjacent centers $\xi_m$. The hyperparameters $\alpha$ and $\beta$ are estimated by maximizing the marginal likelihood with the training dataset, that is, by solving the following self-consistent equations:
\begin{align*}
    \alpha =& \frac{\tau}{\vmu\T\vmu},
    \quad 
    \frac{1}{\beta} = \frac{1}{N_T-\tau}(\vy-\vmu\T\vPsi)^2,
\end{align*}
where $\tau=\sum_i\tau_i/(\tau_i+\alpha)$ with the $i$th eigenvalue $\tau_i$ of the matrix $\beta\vPsi\T\vPsi$.

\mbox{}\\
\noindent{\textbf{2. Bayesian logistic regression}}\mbox{}\\
BLR considers the two-class classification problem. When $S$ is observed, the predictor $p(y=1|\vx)$ is modeled by using a linear combination of $J$ basis functions and the logistic function $\sigma(a)=1/(1+\exp(-a))$ as
\begin{equation*}
    p(y=1|\vx) = \sigma(\vw\T\vpsi(\vx)).
\end{equation*}
The likelihood function and the prior distribution of $\vw$ are respectively assumed to have the following forms:
\begin{align*}
    p(\vy|\vw) =& \prod^t_{i=1}{\rm Bern}(y_i|p_i), \\
    p(\vw) =& \mathcal{N}(\vw|\vzero,\alpha^{-1}\vI),
\end{align*}
where $\alpha$ is the accuracy of the prior. Since the posterior distribution of the Bayesian logistic regression model does not have a closed-form representation, we use the Laplace approximation as
\begin{equation}
    p(\vw|S)\approx \mathcal{N}(\vw|\vw_{\rm MAP},\vH),
    \label{eq_logistic_posterior} 
\end{equation}
where $\vw_{\rm MAP}$ is the MAP estimate of the posterior and $\vH$ is the Fisher information matrix for the posterior at $\vw_{\rm MAP}$. We use the entropy of the predictor as the acquisition function:
\begin{equation*}
    g(\vx) = -\sum_{c\in \{0,1\}} p(y=c|\vx)\log p(y=c|\vx).
\end{equation*}

It is easy to calculate the KL-divergence for BLR since the posterior is a normal distribution. From Eq.~\eqref{eq_logistic_posterior}, the KL-divergence between $p(\vw|S)\approx \mathcal{N}(\vw|\vw_{\rm MAP},\vH)$ and $p(\vw|S')\approx \mathcal{N}(\vw|\vw_{\rm MAP}',\vH')$ is derived as
\begin{align*}
    &\KL{p(\vw|S')}{p(\vw|S)} \\
    =& \frac{1}{2}\left\{{\rm Tr}[\vH\inv\vH']-\log{\vH\inv\vH'}+(\vw_{\rm MAP}-\vw_{\rm MAP}')\T\vH\inv(\vw_{\rm MAP}-\vw_{\rm MAP}')-J\right\}.
\end{align*}

We use the additive model of RBF bases and adopt the same basis functions as for BRR. 

\mbox{}\\
\noindent{\textbf{3. Gaussian process regression}}\mbox{}\\
Let $S$ be the observed dataset. In Gaussian process regression, the loss function is assumed to be the negative log likelihood of the Gaussian distribution with accuracy $\beta$, and the prior distribution is obtained as
\begin{equation*}
  \left[
    \begin{array}{c}
      \vy  \\
      f(\vx)
    \end{array}
  \right]
  \sim \mathcal{N}\left(
  \left[
    \begin{array}{c}
      \vzero  \\
      0
\end{array}
  \right]
  ,
  \left[
    \begin{array}{cc}
      \vK+\beta^{-1}\vI & \vk(\vx) \\
      \vk\T(\vx) & k(\vx,\vx)
    \end{array}
  \right]\right),
\end{equation*}
where $k(\vx,\vx)$ is the kernel function, $\vk(\vx)=k(X,\vx)$, $\vK=k(X,X)$, $X=\{\vx_1,\vx_2,\ldots,\vx_n\}$ and $\vy=(y_1,y_2,\cdots,y_n)$. Then, the posterior is defined as
\begin{equation*}
    f(\vx) | \vx, S \sim \mathcal{N}(\mu(\vx),\sigma(\vx,\vx)),
\end{equation*}
where $\mu(\vx) = \vk\T(\vx)(\vK+\beta^{-1}\vI)^{-1}\vy$ and $ \sigma(\vx,\vx) = k(\vx,\vx) - \vk\T(\vx)(\vK+\beta^{-1}\vI)^{-1}\vk(\vx)$.  We adopt the variance of the predictive distribution as the acquisition function:
\begin{equation*}
    g(\vx) = k(\vx,\vx) - \vk\T(\vx)(\vK+\beta^{-1}\vI)^{-1}\vk(\vx).
\end{equation*}

It is not easy to calculate the KL-divergence between GP posteriors since the KL-divergence diverges to infinity in general, but it is computable when the prior of the GP posteriors is fixed. Let $q(f|S_t)$ and $q(f|S_{t-1})$ be the posteriors of $f$ given $S_t$ and $S_{t-1}$, respectively. Then, the following equalities hold:
\begin{align*}
    \KL{p(f|S_{t-1})}{p(f|S_t)}=&\frac{1}{2}\beta \sigma_{t-1}(\vx_{n_t},\vx_{n_t})-\frac{1}{2}\log{(1+\beta \sigma_{t-1}(\vx_{n_t},\vx_{n_t}))}  \\
    &+\frac{1}{2}\frac{\beta \sigma_{t-1}(\vx_{n_t},\vx_{n_t})}{\sigma_{t-1}(\vx_{n_t},\vx_{n_t})+\beta^{-1}}(y_{n_t}-\mu_{t-1}(\vx_{n_t}))^2,\\
    \KL{p(f|S_t)}{p(f|S_{t-1})}=&\frac{1}{2}\log{(1+\beta\sigma_{t-1}(\vx_{n_t},\vx_{n_t}))}-\frac{1}{2}\frac{\sigma_{t-1}(\vx_{n_t},\vx_{n_t})}{\sigma_{t-1}(\vx_{n_t},\vx_{n_t})+\beta\inv}  \\
    &+\frac{1}{2}\frac{\sigma_{t-1}(\vx_{n_t},\vx_{n_t})}{(\sigma_{t-1}(\vx_{n_t},\vx_{n_t})+\beta\inv)^2}(y_{n_t}-\mu_{t-1}(\vx_{n_t}))^2.
\end{align*}
Details of the derivation are described in Appendix~\ref{sec_kl_gps}.

In this study, the kernel function for the prior is the following Gaussian kernel:
\begin{equation*}
    k(\vx,\vx')=\exp\left(-\frac{1}{2l^2}\|\vx-\vx'\|^2\right).
\end{equation*}
The hyperparameters $l$ and $\beta$ are chosen by maximizing the marginal likelihood with training data.

\mbox{}\\
\noindent{\textbf{4. Dropout-based Bayesian deep learning}}\mbox{}\\
We consider the dropout-based BDNNs of $L$-layer perceptron~\citep{Gal2016}, where the $l$th layer has $H_l$ neurons composed of a weight vector denoted by $\vW_l \in \mathbb{R}^{H_l \times H_{l-1}}$ and a bias denoted by $\vb_l \in \mathbb{R}^{H_l}$. The $0$th layer is the input layer and $H_0$ is the number of its neurons. 
In each epoch during the training, several elements of the weight vectors of the dropout-based BDNNs model are randomly set to zero. The dropped out weight vectors of the $l$th layer are described as $\vW_{l}=\vM_{l}{\rm diag}([e_{lh}]^{H_{l-1}}_{h=1})$, where $e_{lh} \sim {\rm Bern}(p_l)$. Let $\sigma_l : \mathbb{R}^{H_{l-1}}\rightarrow \mathbb{R}^{H_l}$ be the $l$th layer's activation function. The output of the $l$th layer is described as
\begin{equation*}
    \vz_l=\sqrt{\frac{1}{H_l}}\sigma_l(\vW_{l}\vz_{l-1}+\vb_l),
\end{equation*}
where $\vzeta_{l-1}$ is the $(l-1)$th layer's output and $\vzeta_0$ is the input vector $\vx \in \mathcal{X}$. Therefore, the predictor is modeled as
\begin{equation*}
    f(\vx)=\sqrt{\frac{1}{H_L}}\sigma_L(\vW_L\vz_L+\vb_L).
\end{equation*}
As the acquisition function, we use batchBALD~\citep{Kirsch2019}, which is an extension of BALD~\citep{Houlsby2011}, to select multiple points simultaneously. It uses the mutual information $I(y;\theta|S,\vx)$ between the output $y$ for the input $\vx$ given training data $S$ and model parameter $\theta$.

Let $\theta$ be the whole parameter of the model. We assume that the prior is $p(\theta)=\prod^L_{l=1}\prod^{H_{l-1}}_{h=1} p(\vw_{lh})p(\vb_l)$, where $p(\vw_{lh})=p(\vb_l)=\mathcal{N}(\vzero,\vI)$. The posterior probability $q(\theta)$ is, by using the mean field approximation, written as the product of marginals:
\begin{equation*}
    q(\theta)=\prod^L_{l=1}p(\vW_l)p(\vb_l).
\end{equation*}
In this formula,
\begin{align*}
    q(\vW_l)=\prod^{H_{l-1}}_{h=1}q(\vw_{lh}), \quad 
    q(\vb_l)=\mathcal{N}(\vnu_l,\sigma^2_l\vI)
\end{align*}
and
\begin{equation*}
    q(\vw_{lh})=p_l\mathcal{N}(\vm_{lh},\sigma^2_l\vI)+(1-p_l)\mathcal{N}(\vzero,\sigma^2_l\vI).
\end{equation*}
Hence $q(\theta)$ is shown to be a mixture of Gaussians. 

The analytical formula for the KL-divergence between mixture distributions is not known, but its upper bound can be derived by using the chain rule for the  KL-divergence~\citep{Do2003,Hershey2007}. Let $p(\theta)=\sum^K_{k=1}\pi_k p_k(\theta)$ and $q(\theta)=\sum^K_{k=1}\omega_k q_k(\theta)$ be any Gaussian mixture models with $K$ components. The KL-divergence between $p(\theta)$ and $q(\theta)$ is bounded as
\begin{equation*}
    \KL{p(\theta)}{q(\theta)}\leq\sum^K_{k=1}\pi_k\left[\KL{p_k(\theta)}{q_k(\theta)}+\log\frac{\pi_k}{\omega_{k}}\right],
\end{equation*}
By using this bound, the following inequality is derived: 
\begin{align*}
    \KL{p(\theta)}{q(\theta)} \leq& \frac{1}{2}\sum^L_{l=1}\left\{\frac{p_l}{\sigma'^2_l}\|\vM_l-\vM'_l\|^2_F+\frac{1}{\sigma'^2_l}\|\vnu_l-\vnu'_l\|^2+H_{l-1}D_{kl}[p_l||q_l]\right\} \\
    &+\sum^L_{l=1}\frac{(1+p_lH_{l-1})H_{l}}{2}\left\{\frac{\sigma^2_l}{\sigma'^2_l}-\log{\frac{\sigma^2_l}{\sigma'^2_l}}-1\right\},
\end{align*}
where $\vM_l=(\vm_{l1},\vm_{l2},\ldots,\vm_{lH_{l-1}})$. We assume that $\sigma^2_l=\sigma^{\prime 2}_l$ and $p_l=p_{l^{\prime}}$ for any $l \neq l^{\prime}$. Then, the above inequality is reduced to
\begin{equation*}
    \KL{p(\theta)}{q(\theta)} \leq \sum^L_{l=1}\frac{1}{2\sigma^2_l}\left\{p_l\|\vM_l-\vM'_l\|^2_F+\|\vnu_l-\vnu'_l\|^2\right\},
\end{equation*}
which is a weighted sum of the estimated weight matrices and the squared error of the bias.
A detailed derivation is given in Appendix~\ref{sec_kl_bdls}.

\begin{figure}[t!]
    \centering
    \includegraphics[width=12cm]{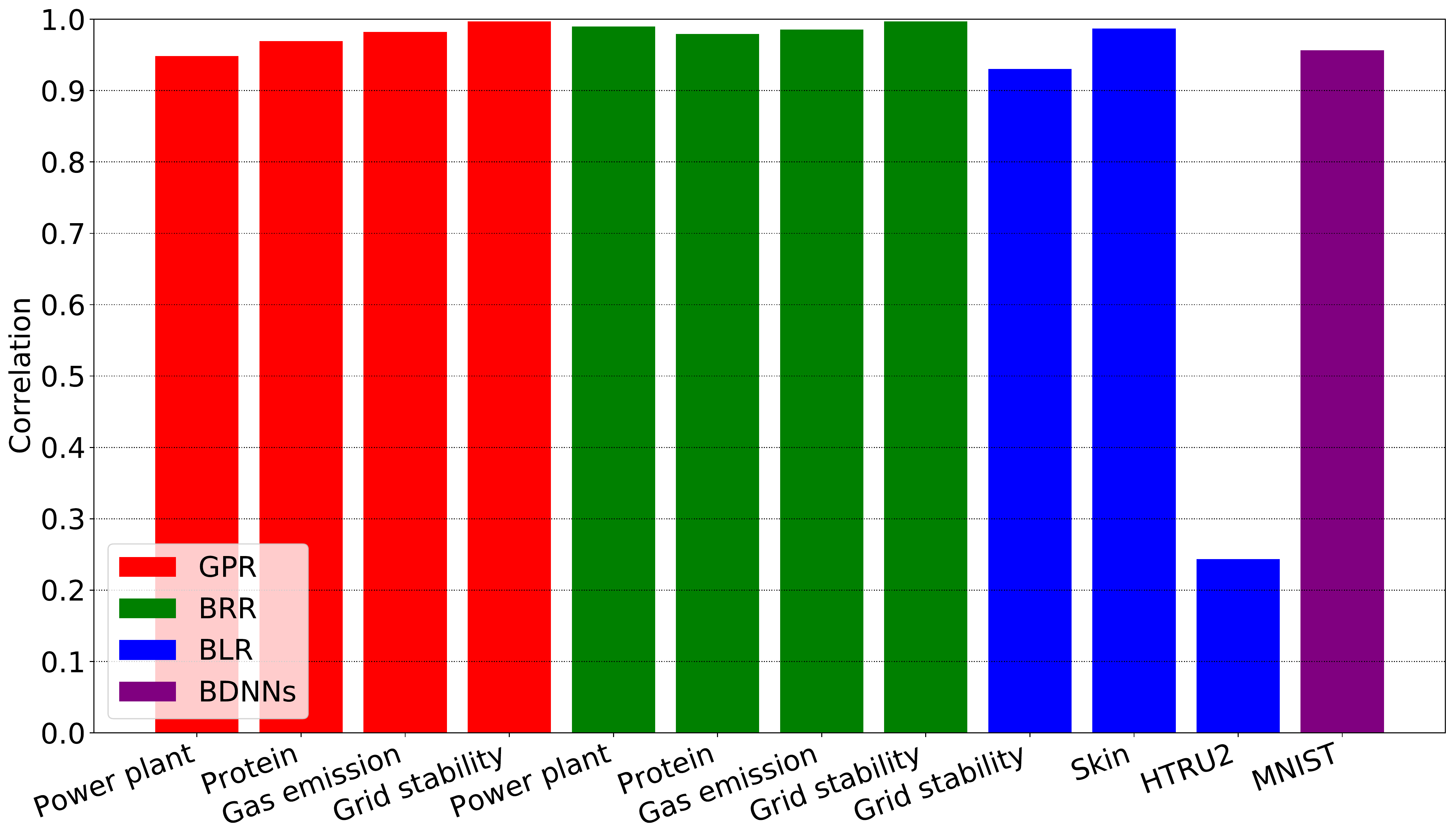} \\
    \caption{Correlation coefficients between expected generalization error and error ratio. The color of the bars indicates a type of learning model (red : GPR, green : BRR, blue : BLR, purple : BDNNs.)}
    \label{fig_correlation_coefficient}
\end{figure}

\subsection{Results}

Figure~\ref{fig_correlation_coefficient} and \ref{fig_scatter_plot} show correlation coefficients and scatter plots between the expected generalization error and the error ratio defined by Eq.~\eqref{eq_error_ratio}, respectively. The correlation coefficients are greater than 0.9 except for BLR applied to the {\tt{HTRU2}} data.  Fig.~\ref{fig_gene_error_BLR}(c) indicates that the expected generalization error increases in the latter half of training in active learning. This is known as the ``less is more'' phenomenon~\citep{Schohn2000}. Since the error ratio bounds the absolute value of the difference between generalization errors, we cannot distinguish whether the difference is positive or negative. Hence, the error ratio has a high correlation with the generalization error, except in the case where the ``less is more'' phenomenon occurs. Even in such cases, the error ratio correlates with the generalization error until the generalization error starts to increase as shown in Fig.~\ref{fig_scatter_plot}(k).

The expected generalization errors evaluated by using the test data and the stopping timings determined by using the proposed criterion with each threshold for each model are shown in Fig.~\ref{fig_gene_error_GPR}, Fig.~\ref{fig_gene_error_BRR}, Fig.~\ref{fig_gene_error_BLR} and Fig.~\ref{fig_gene_error_BDNN}. From  Fig.~\ref{fig_gene_error_GPR} and Fig.~\ref{fig_gene_error_BRR}, the proposed method tends to terminate active learning at about the same timing for any dataset for GPR and BRR when using the same threshold. As shown in Fig.~\ref{fig_gene_error_BLR}, the ``less is more'' phenomenon is occurred in {\tt{Power plant}} and {\tt{Gas emission}} for BLR. Thus, the optimal stopping timing is the timing minimizing the generalization error or earlier. As shown in Figs.~\ref{fig_scatter_plot}(i)--(k), the threshold of the proposed criterion corresponding to the timing minimizing the generalization error depends on the dataset. Therefore, the proposed criterion with a threshold cannot always stop active learning at the optimal stopping timing, but it can stop it at a reasonably good timing as shown in Fig.~\ref{fig_gene_error_BLR}. From Fig.~\ref{fig_gene_error_BDNN}(a), while the proposed method stops active learning when the threshold is $0.2$, the proposed method cannot stop active learning when the thresholds are $0.15$ and $0.1$. This is due to the fact that the proposed criterion converges to around $0.2$ as shown in Fig.~\ref{fig_gene_error_BDNN}(b). Therefore, we cannot guarantee that the proposed criterion stops active learning when the threshold of the proposed criterion approaches zero for BDNNs, unlike for the other models. However, as long as the threshold is set to be large, we can stop active learning at an appropriate timing for BDNNs.

It is shown that the error ratio has a high correlation with the generalizaton error for various datasets and AL models, and does not depend on the dataset.

\begin{figure}[t!]
    \centering
    \begin{tabular}{cccc}
    \includegraphics[width=3.4cm]{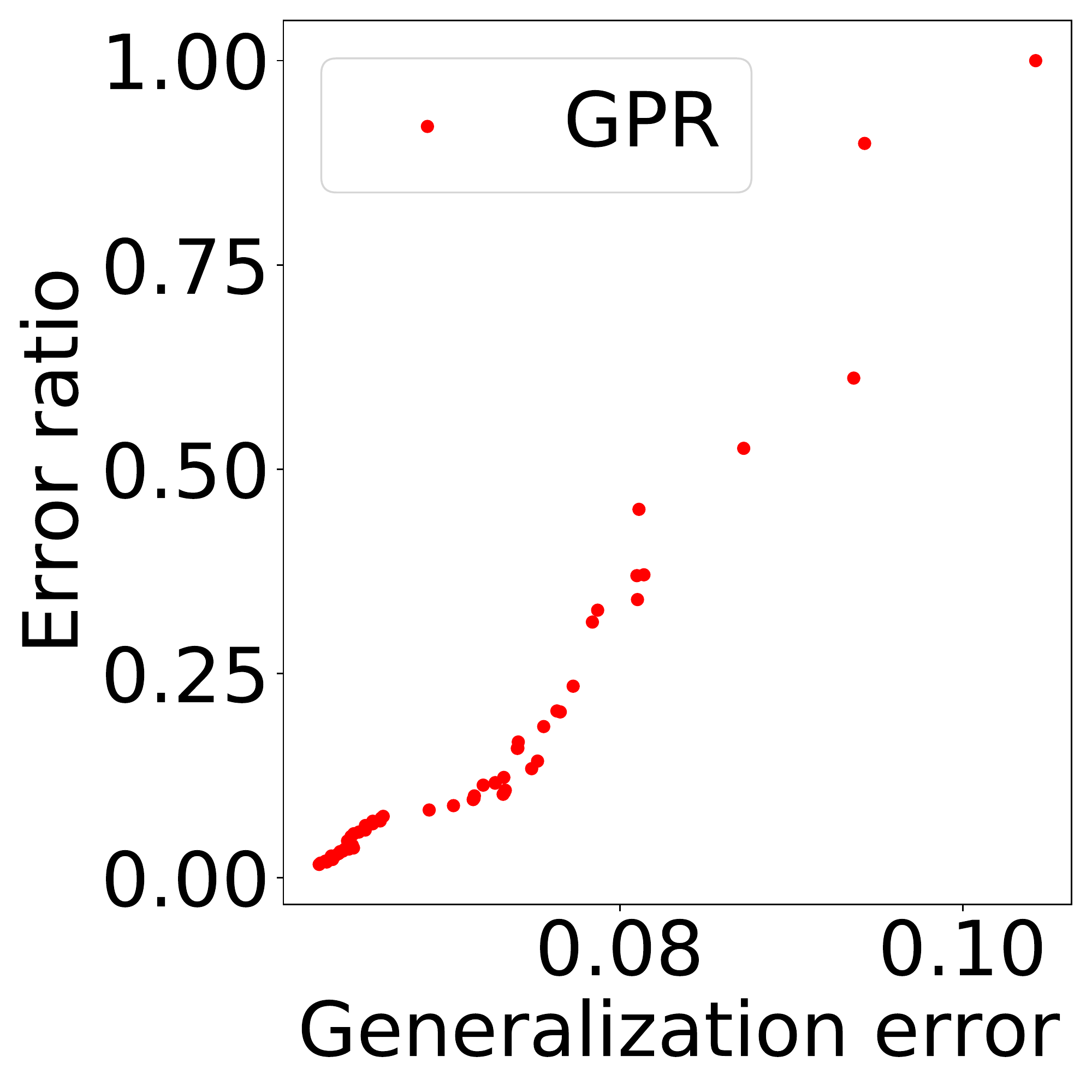}&
    \includegraphics[width=3.4cm]{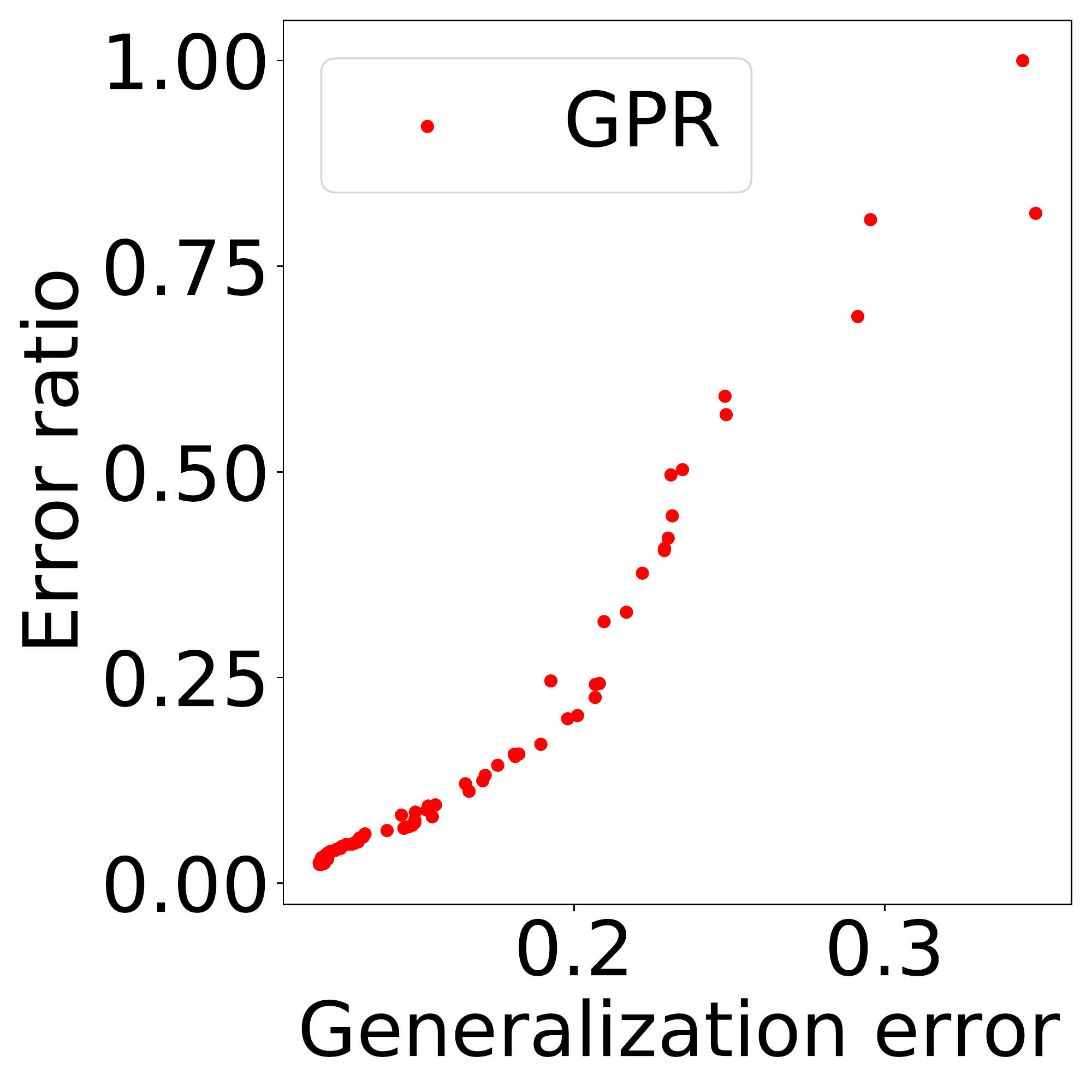}&
    \includegraphics[width=3.4cm]{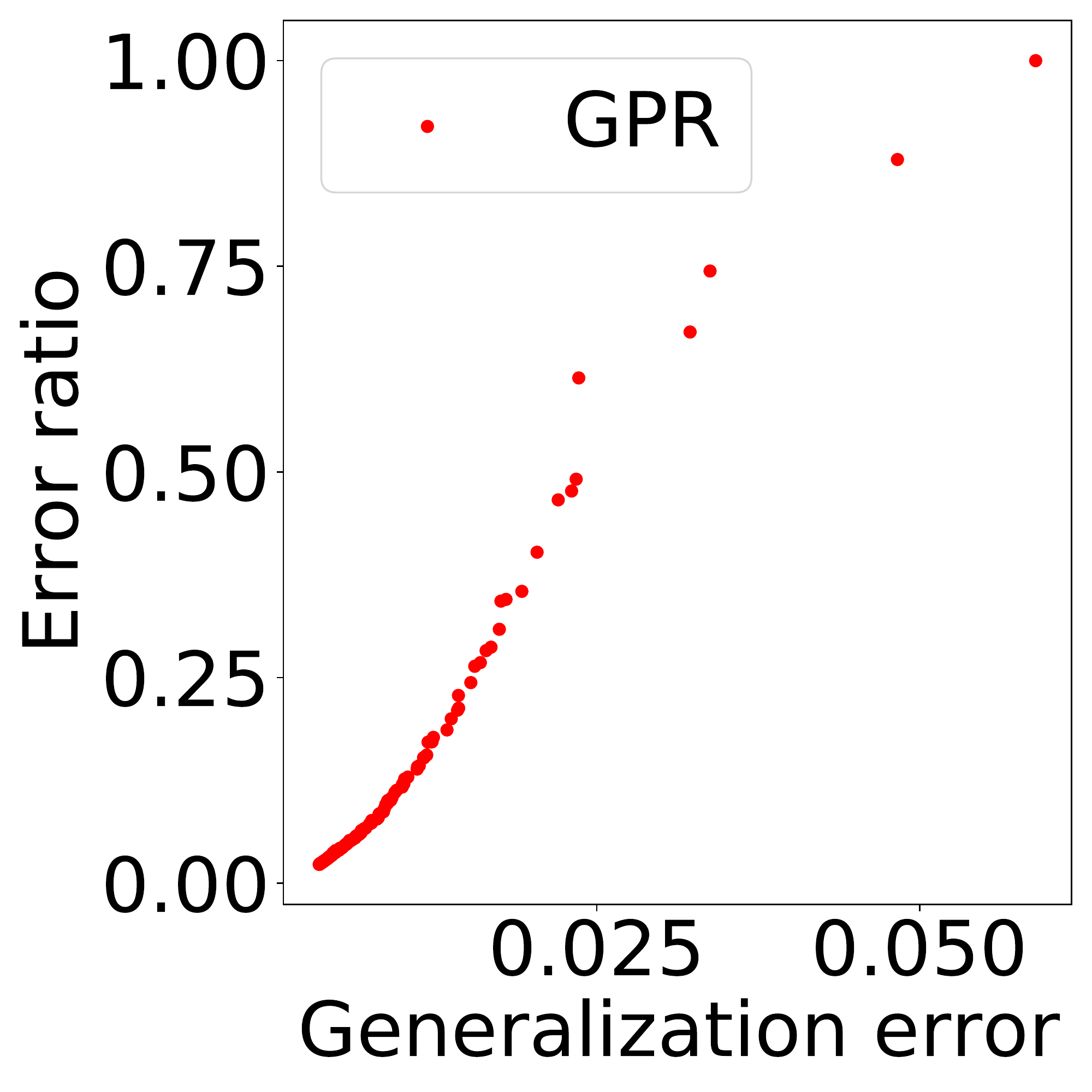}&
    \includegraphics[width=3.4cm]{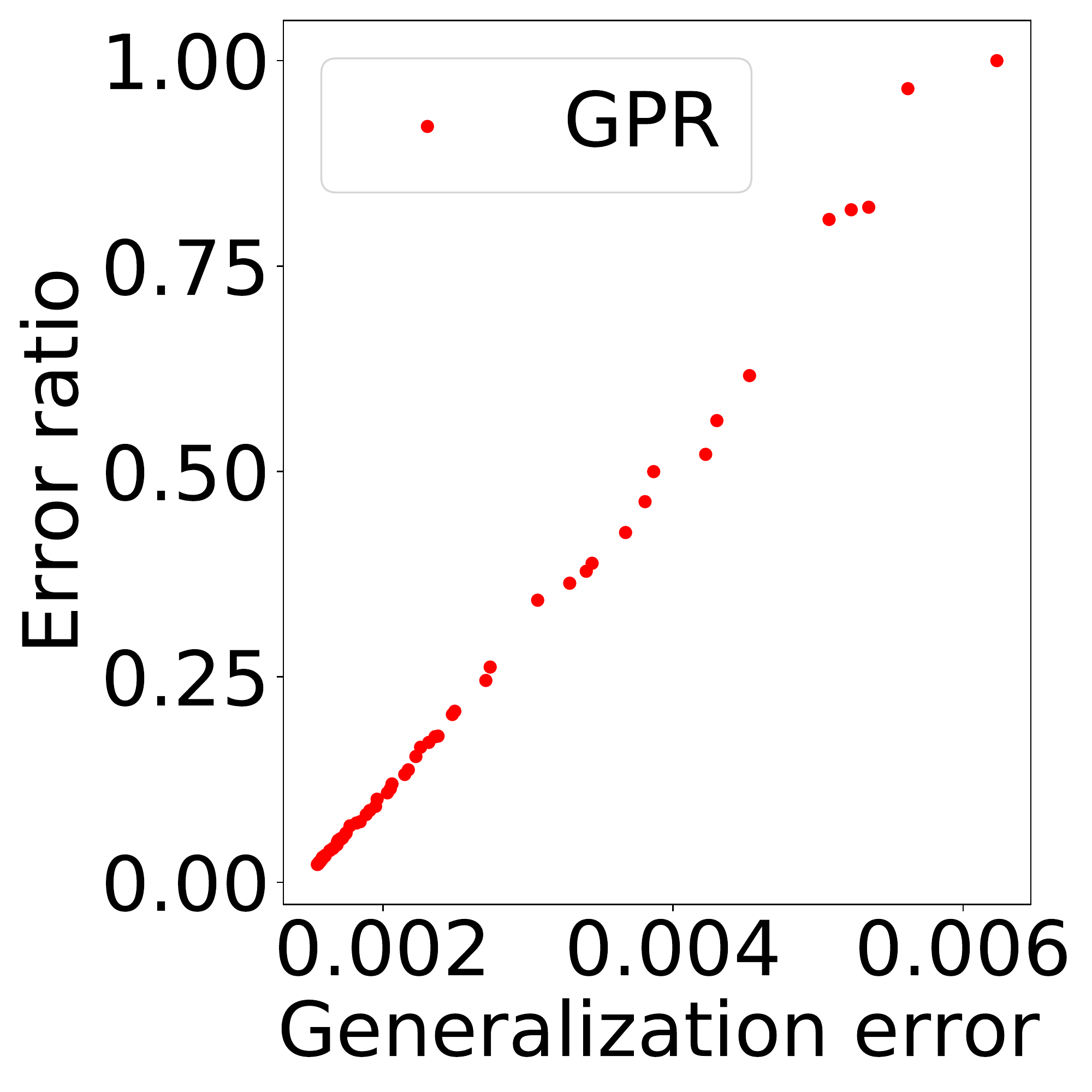}\\
    (a) {\tt{Power plant(r)}}& (b) {\tt{Protein(r)}}& (c) {\tt{Gas emission(r)}}& (d) {\tt{Grid stability(r)}} \\
    \includegraphics[width=3.4cm]{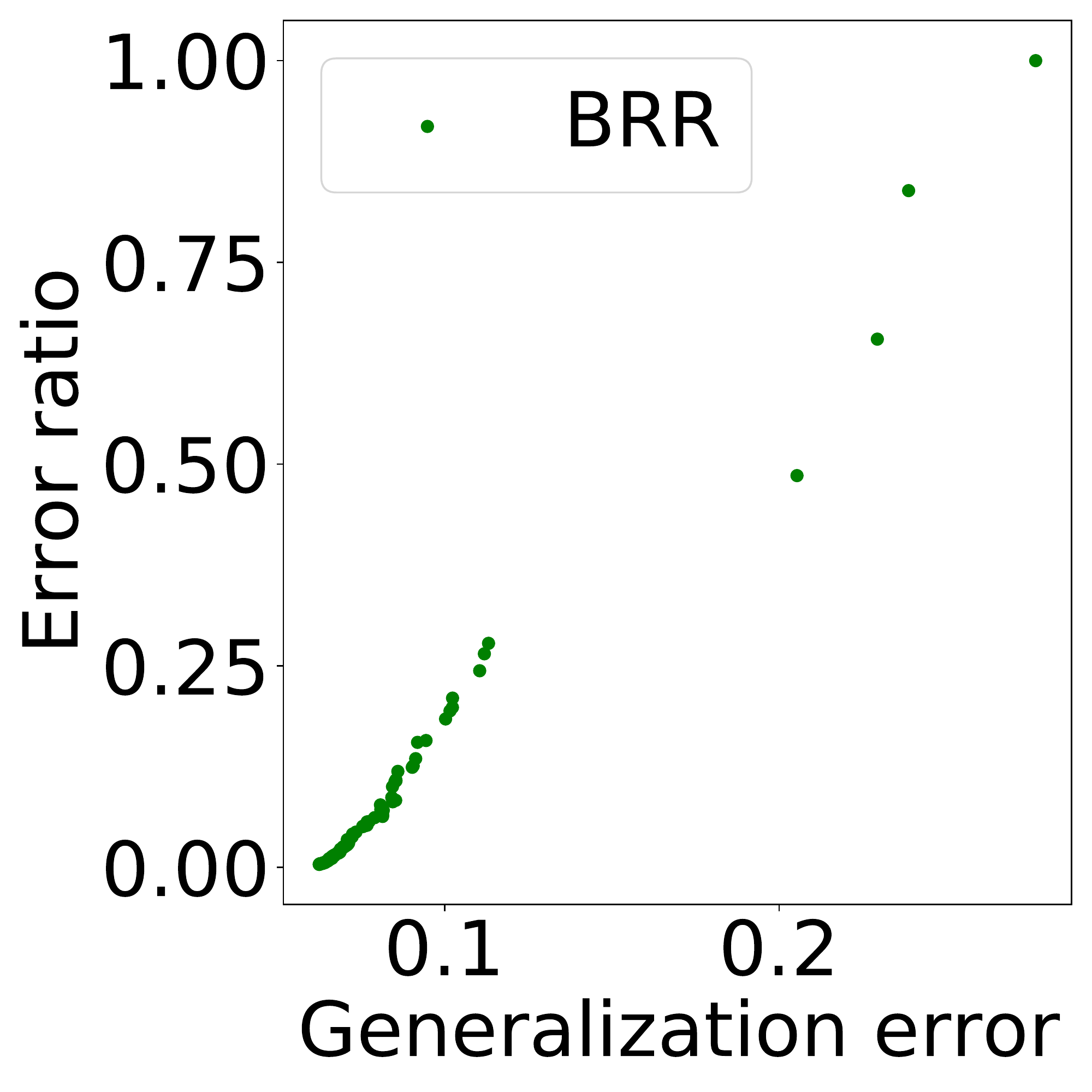}&
    \includegraphics[width=3.4cm]{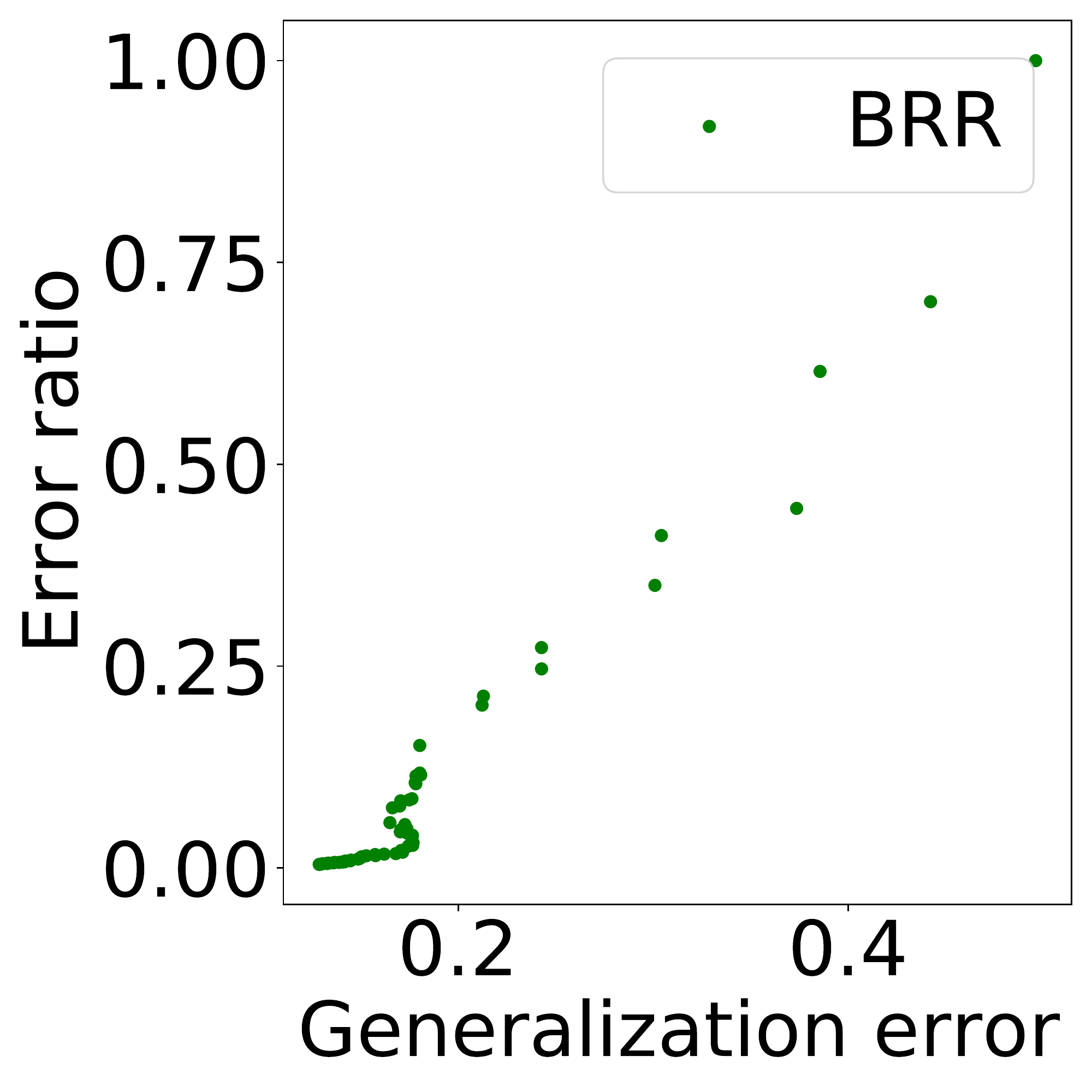}&
    \includegraphics[width=3.4cm]{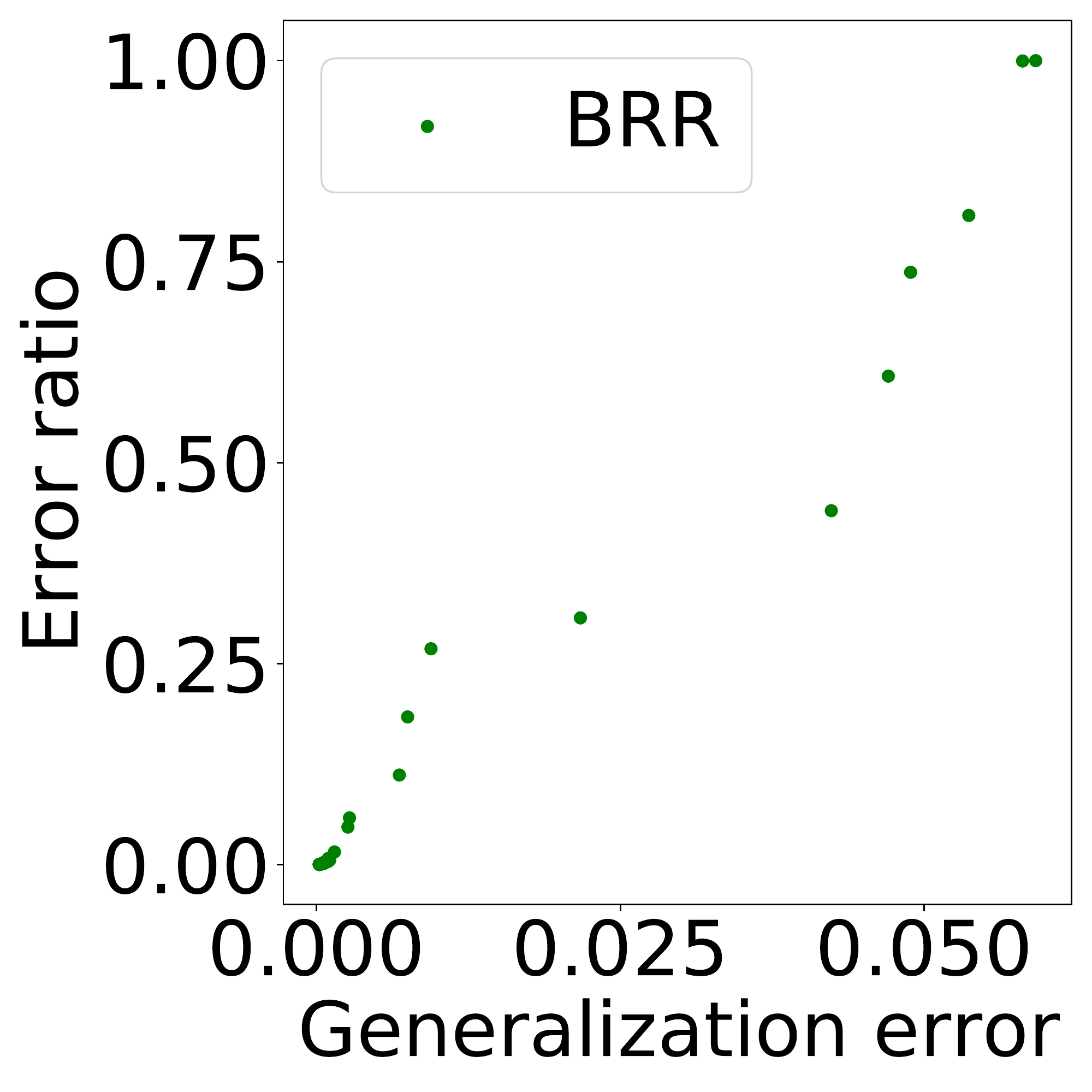}&
    \includegraphics[width=3.4cm]{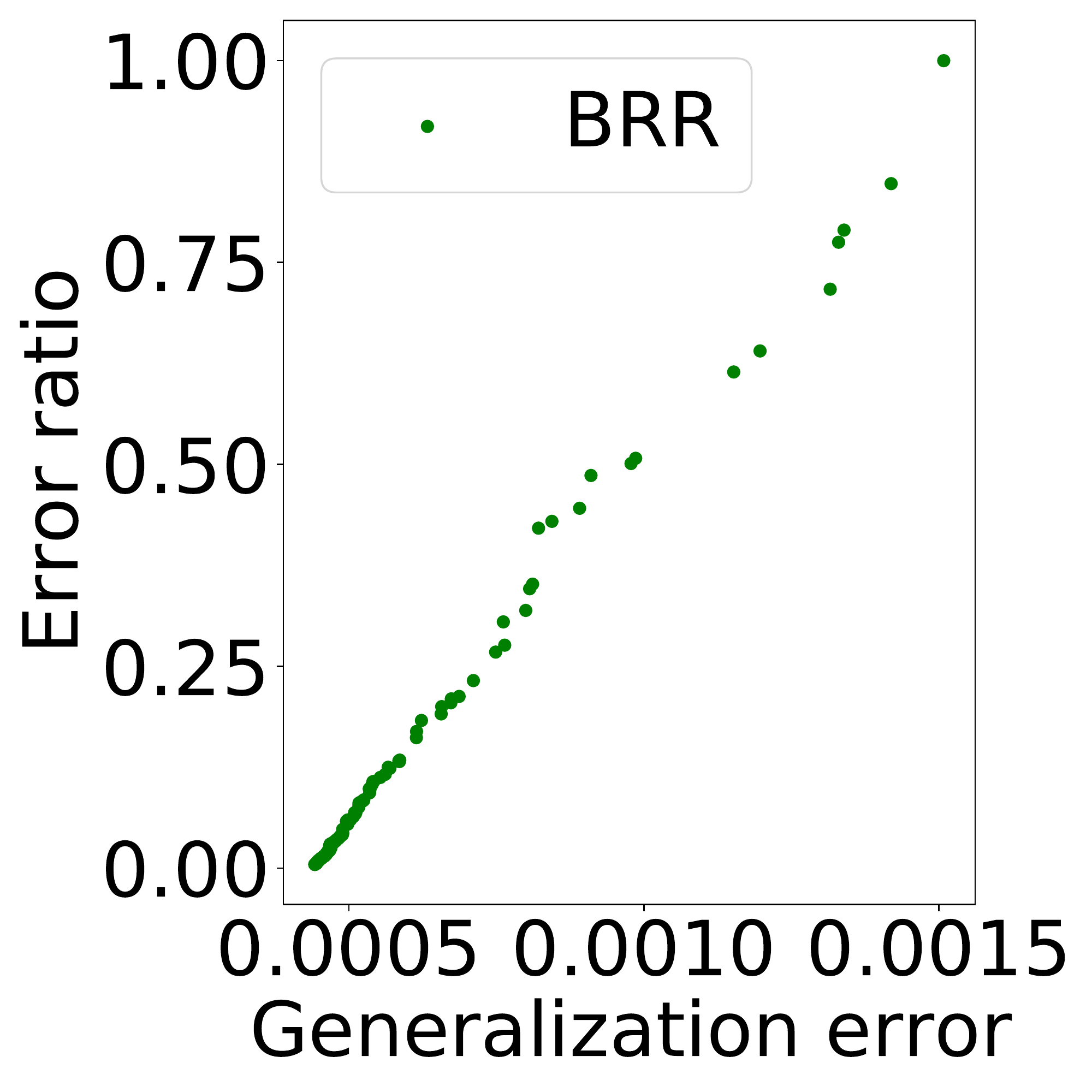} \\
    (e) {\tt{Power plant(r)}}& (f) {\tt{Protein(r)}}&
    (g) {\tt{Gas emission(r)}}& (h) {\tt{Grid stability(r)}} \\
    \includegraphics[width=3.4cm]{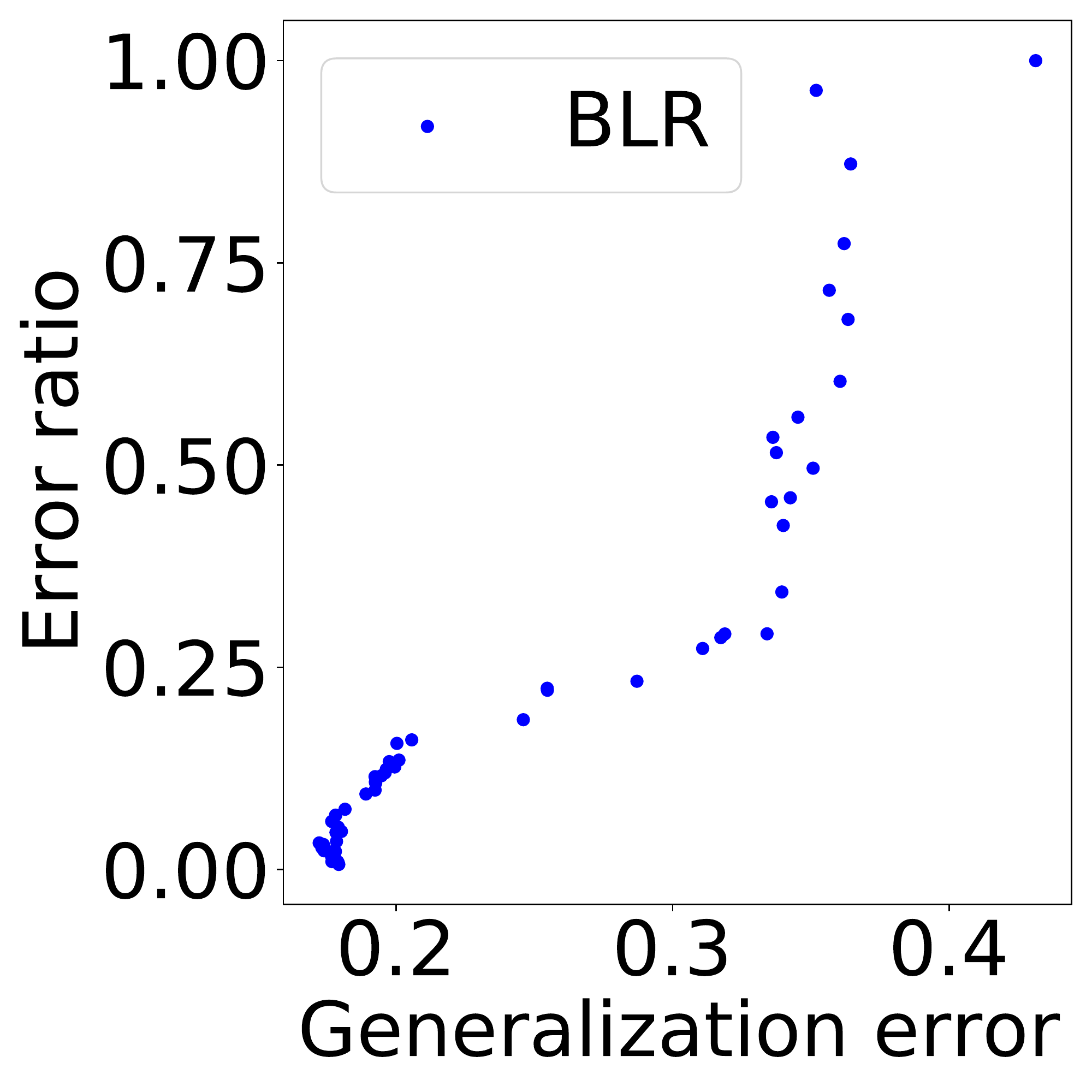}&
    \includegraphics[width=3.4cm]{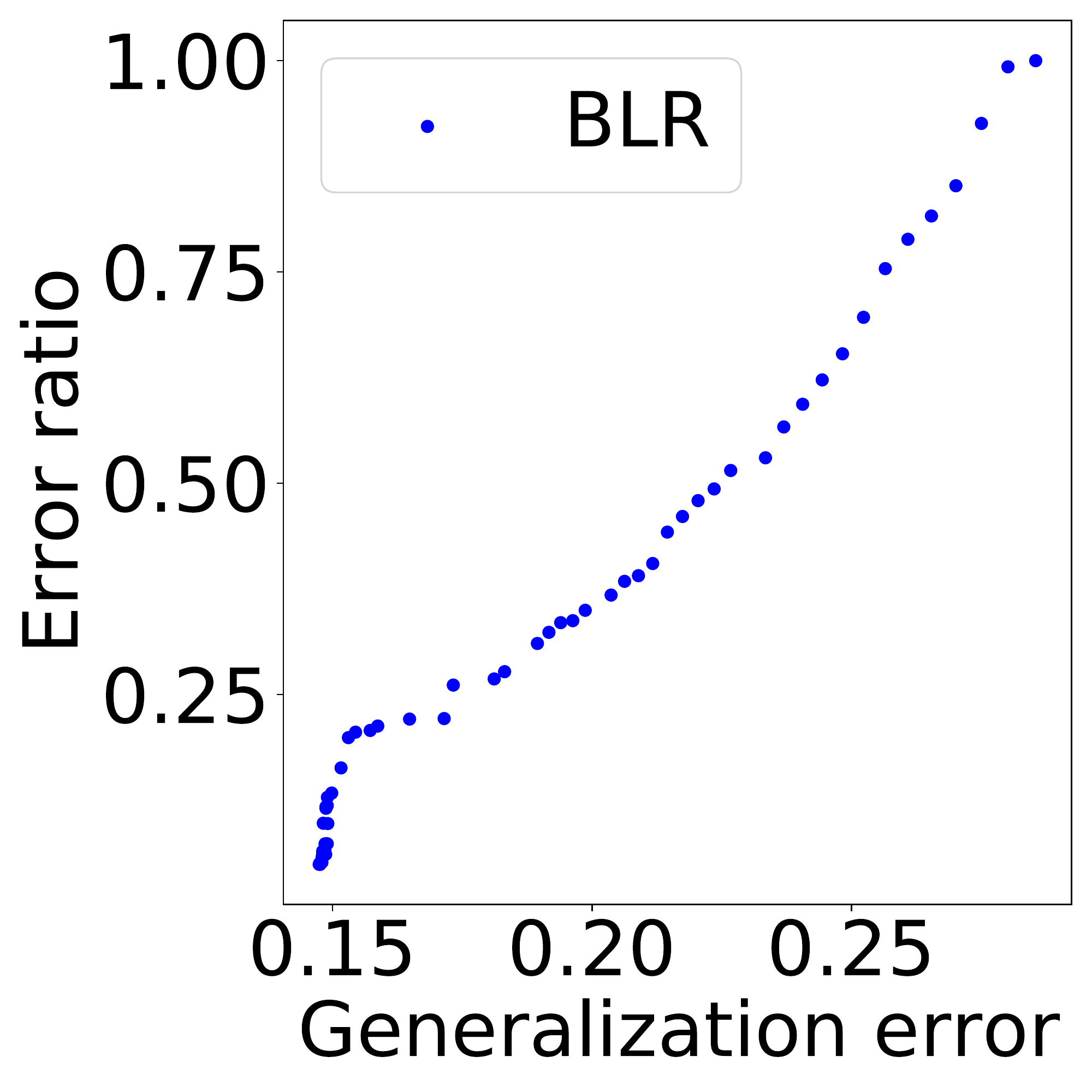}&
    \includegraphics[width=3.4cm]{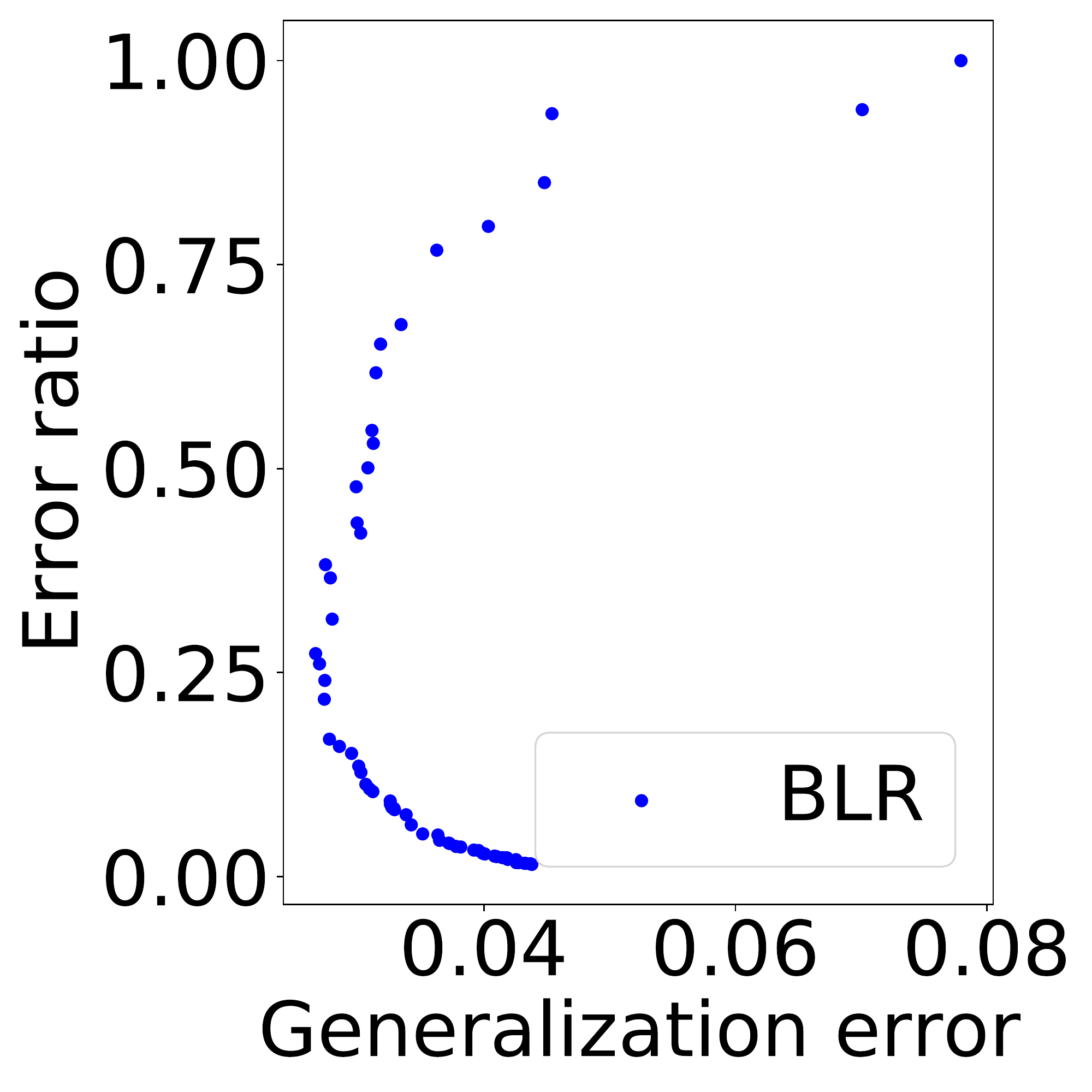}&
    \includegraphics[width=3.4cm]{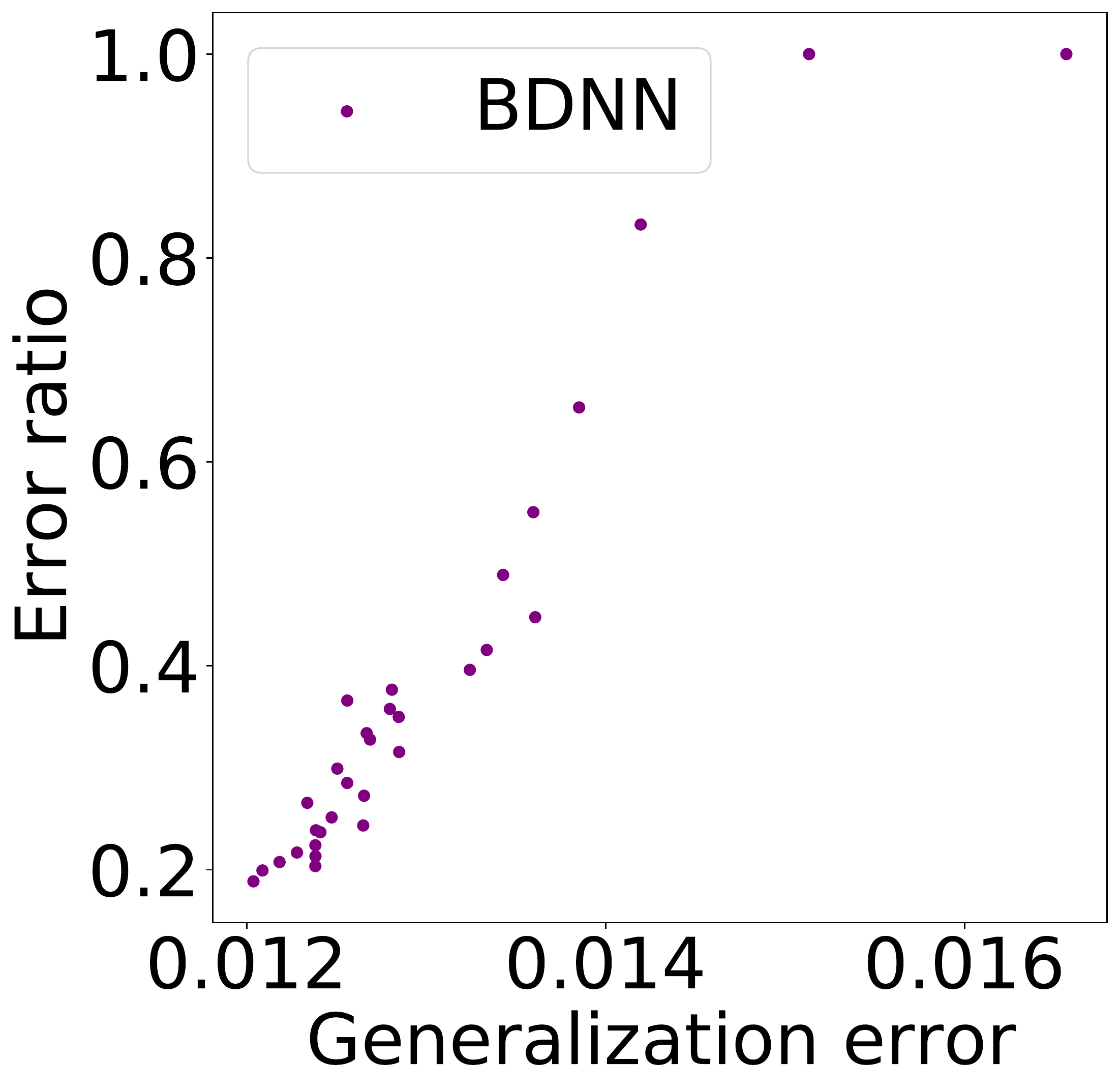} \\
    (i) {\tt{Grid stability(c)}}& (j) {\tt{Skin(c)}}& (k) {\tt{HTRU2(c)}}& (l) {\tt{MNIST(c)}} \\
    \end{tabular}
    \caption{Scatter plots of expected generalization error and error ratio. The names of the datasets are followed by (r) and (c) to show whether the dataset is for regression or classification. 
    The color of the plots indicates the type of learning model (red : GPR, green : BRR, blue : BLR, purple : BDNNs).}
    \label{fig_scatter_plot}
\end{figure}

\begin{figure}[t!]
    \centering
    \begin{tabular}{cc}
    \includegraphics[width=6.6cm]{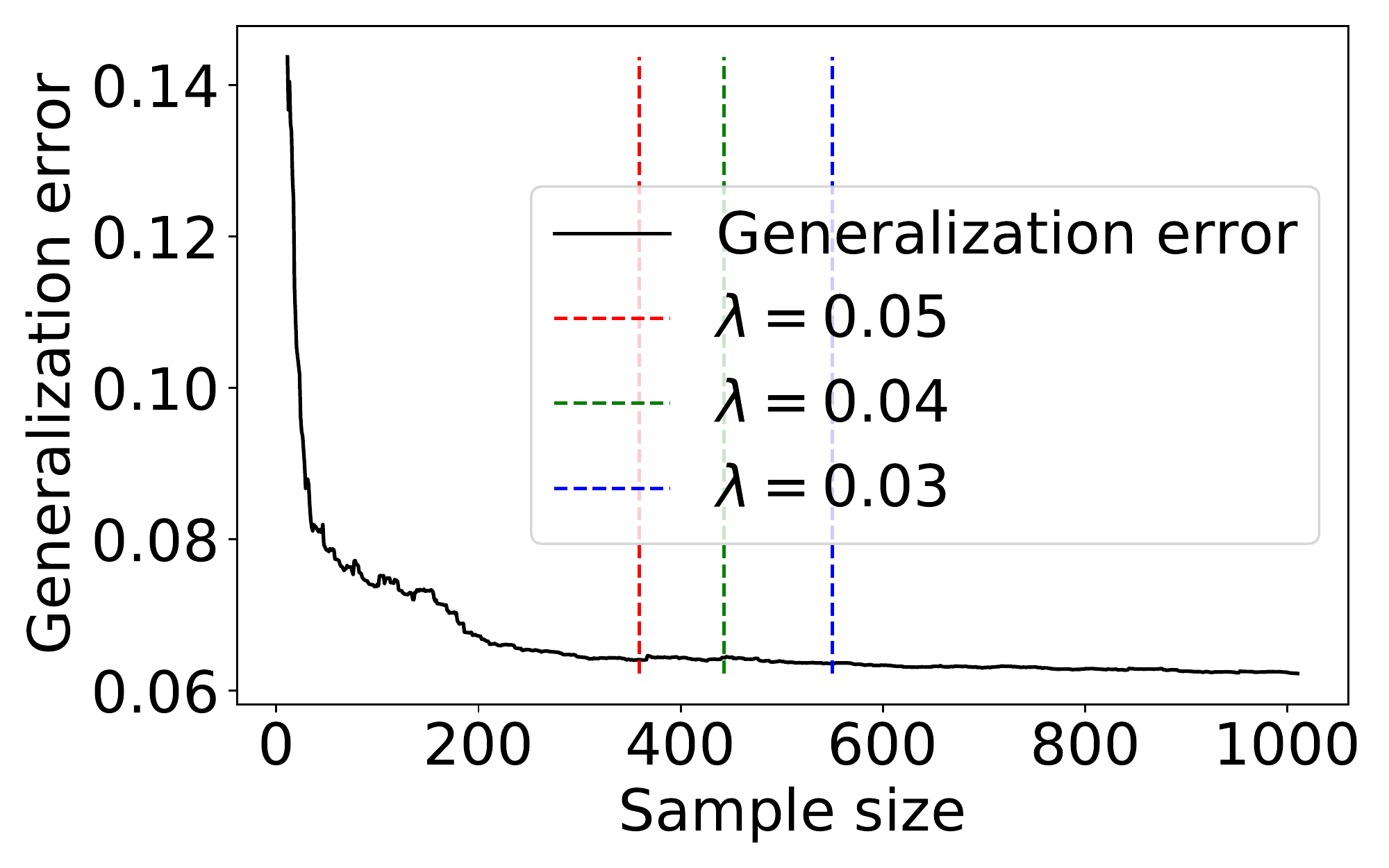}&
    \includegraphics[width=6.6cm]{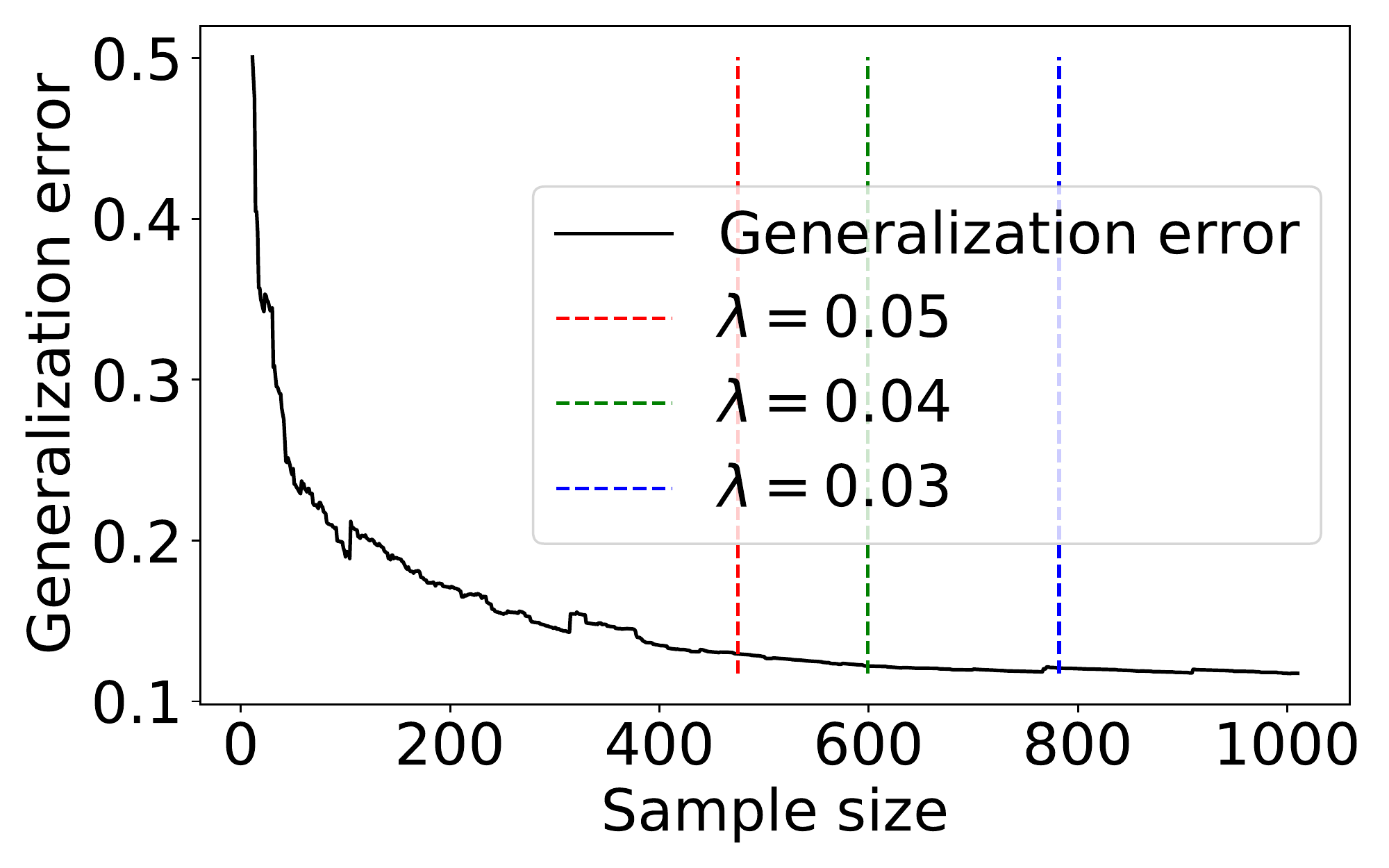} \\
    (a) {\tt{Power plant}}& (b) {\tt{Protein}} \\
    \includegraphics[width=6.6cm]{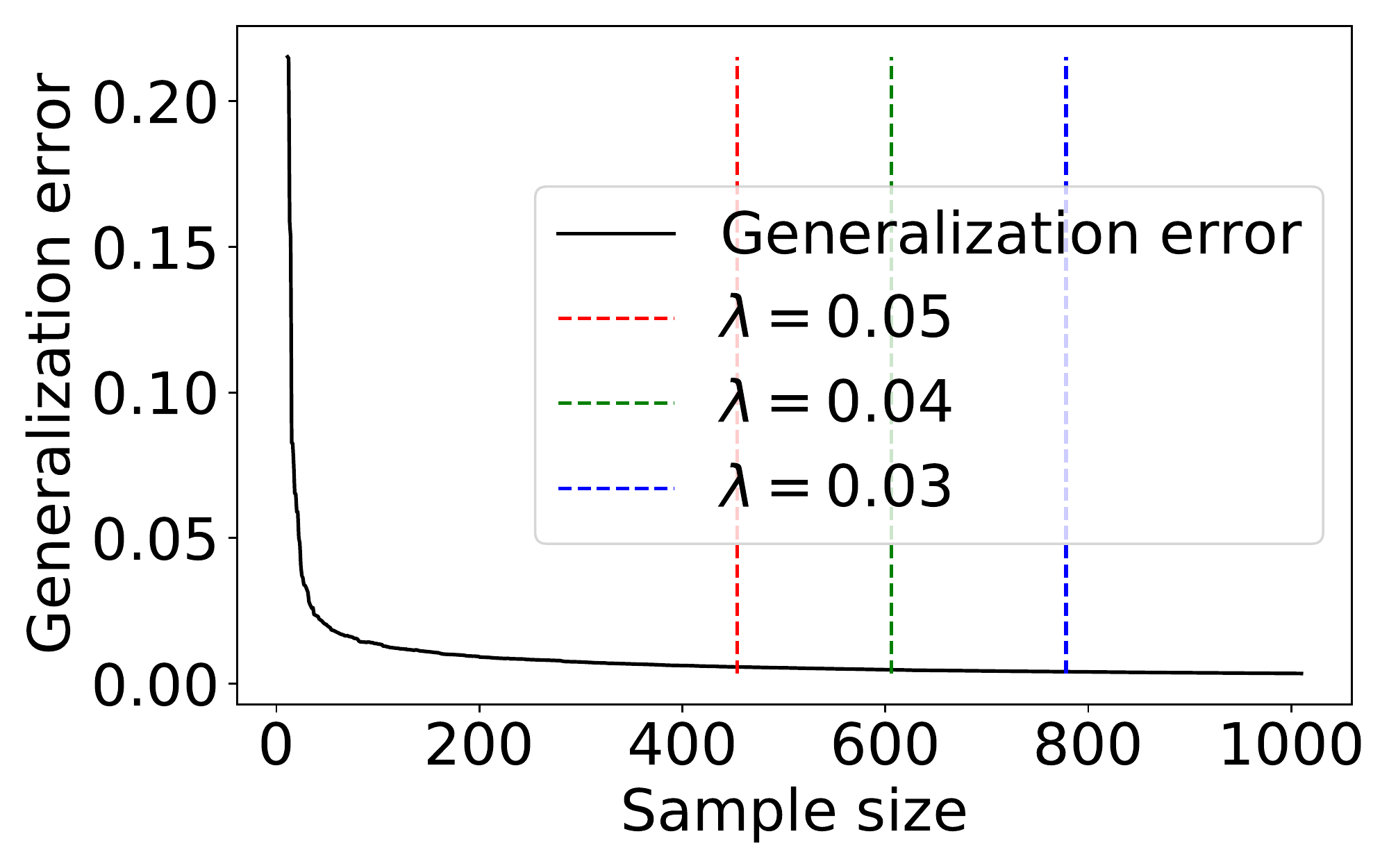}&
    \includegraphics[width=6.6cm]{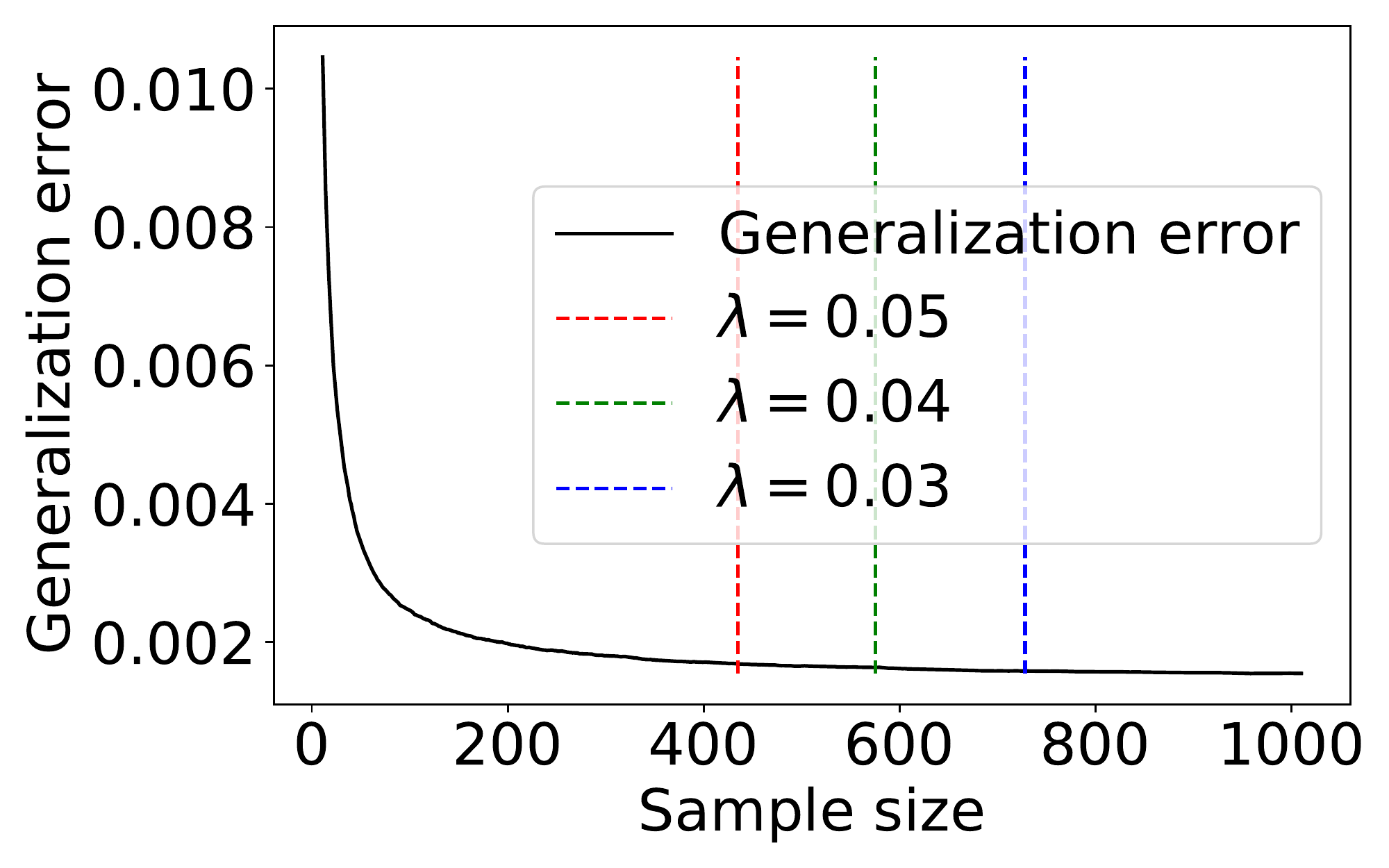} \\
    (c) {\tt{Gas emission}}& (d) {\tt{Grid stability}} \\
    \end{tabular}
    \caption{Expected generalization error and stopping timing for GPR.}
    \label{fig_gene_error_GPR}
\end{figure}

\begin{figure}[t!]
    \centering
    \begin{tabular}{cc}
    \includegraphics[width=6.6cm]{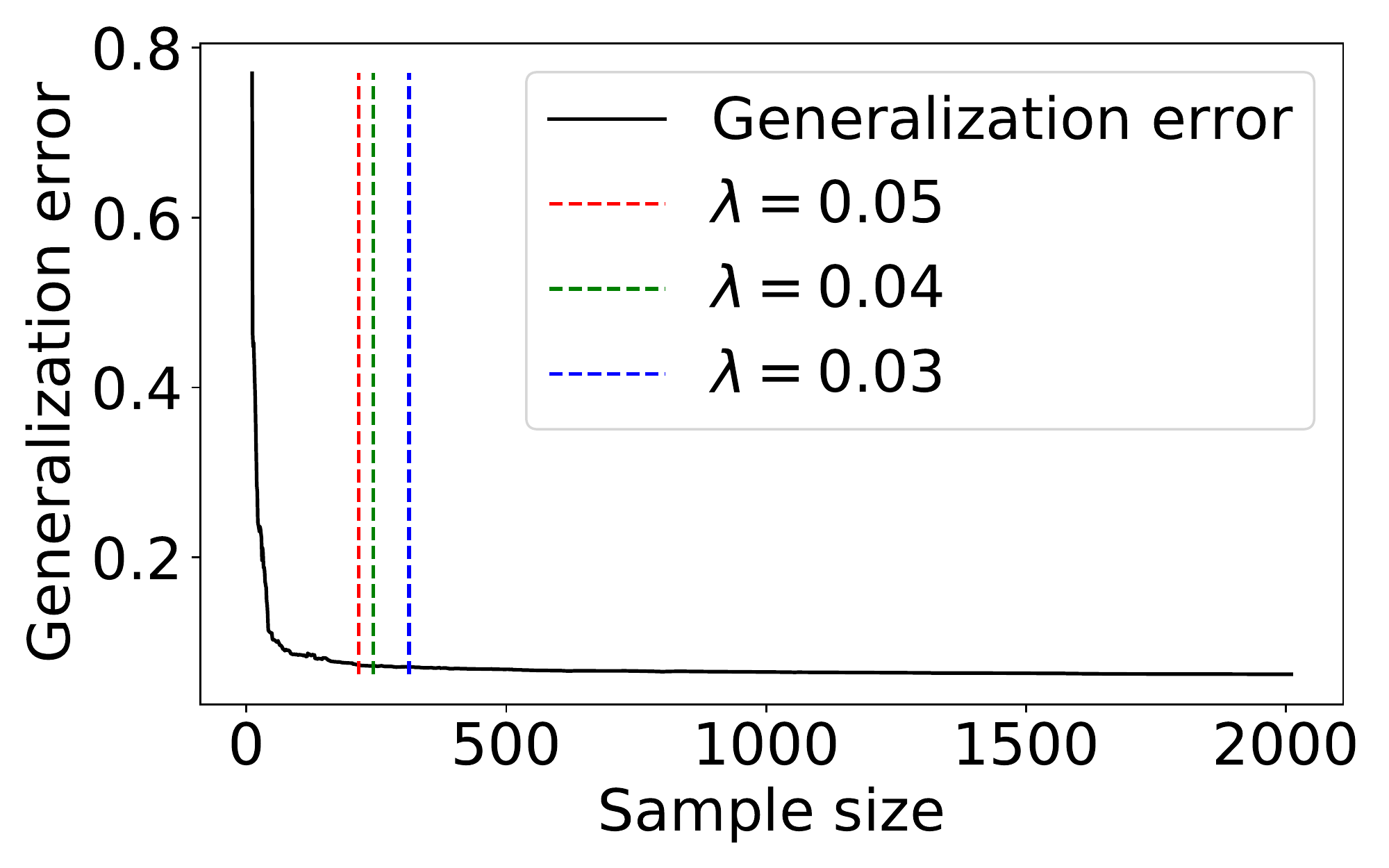}&
    \includegraphics[width=6.6cm]{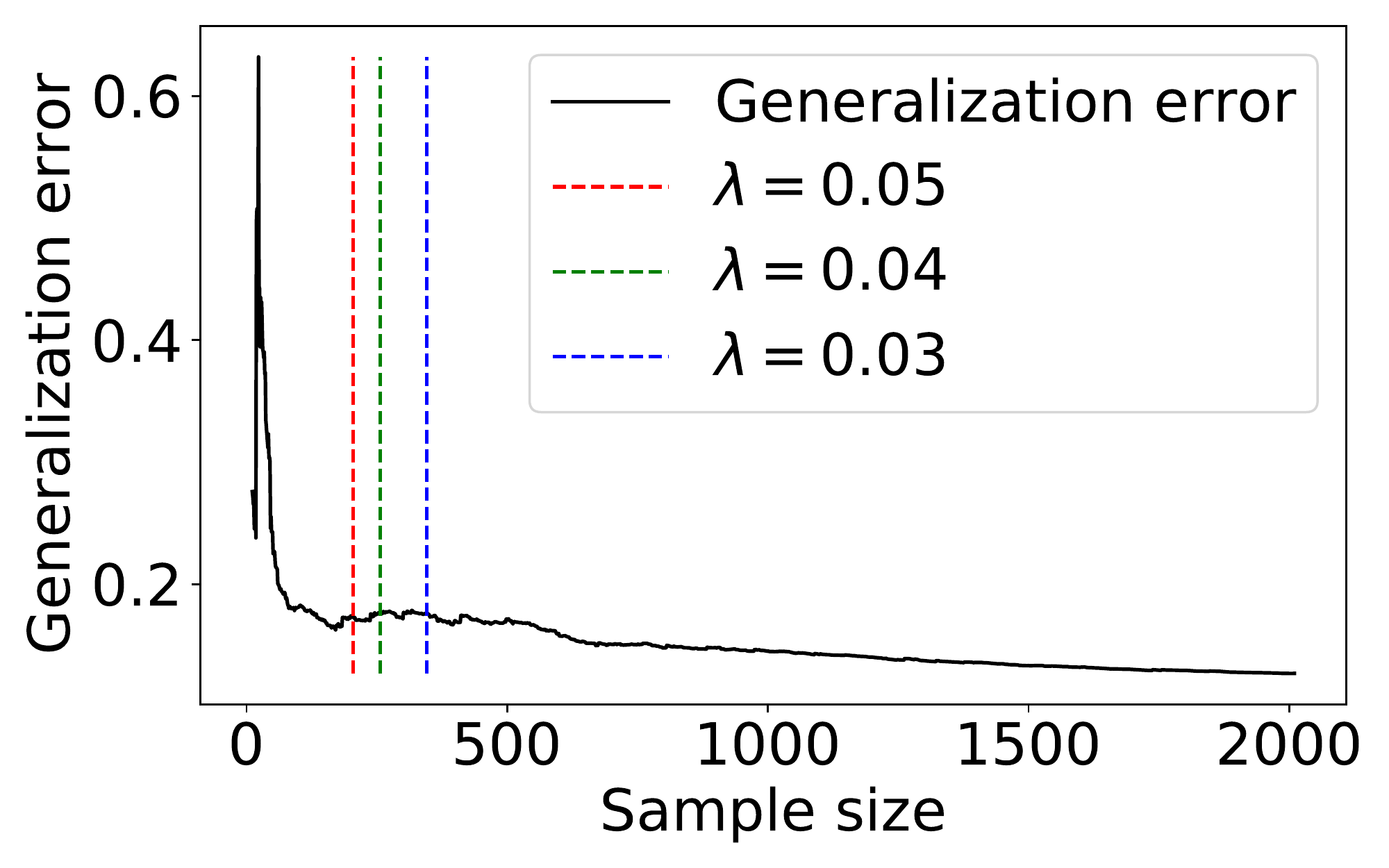} \\
    (a) {\tt{Power plant}}& (b) {\tt{Protein}} \\
    \includegraphics[width=6.6cm]{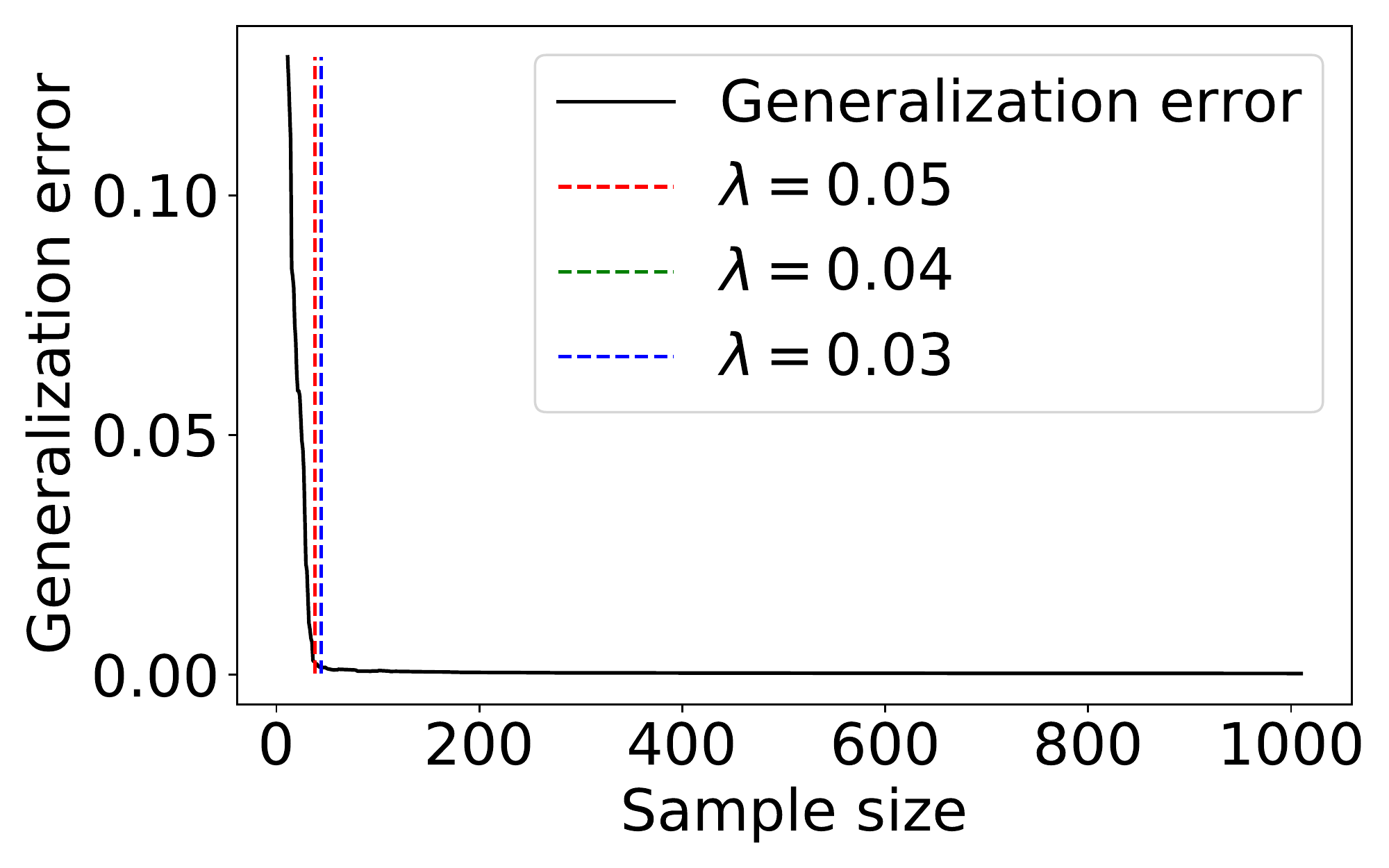}&
    \includegraphics[width=6.6cm]{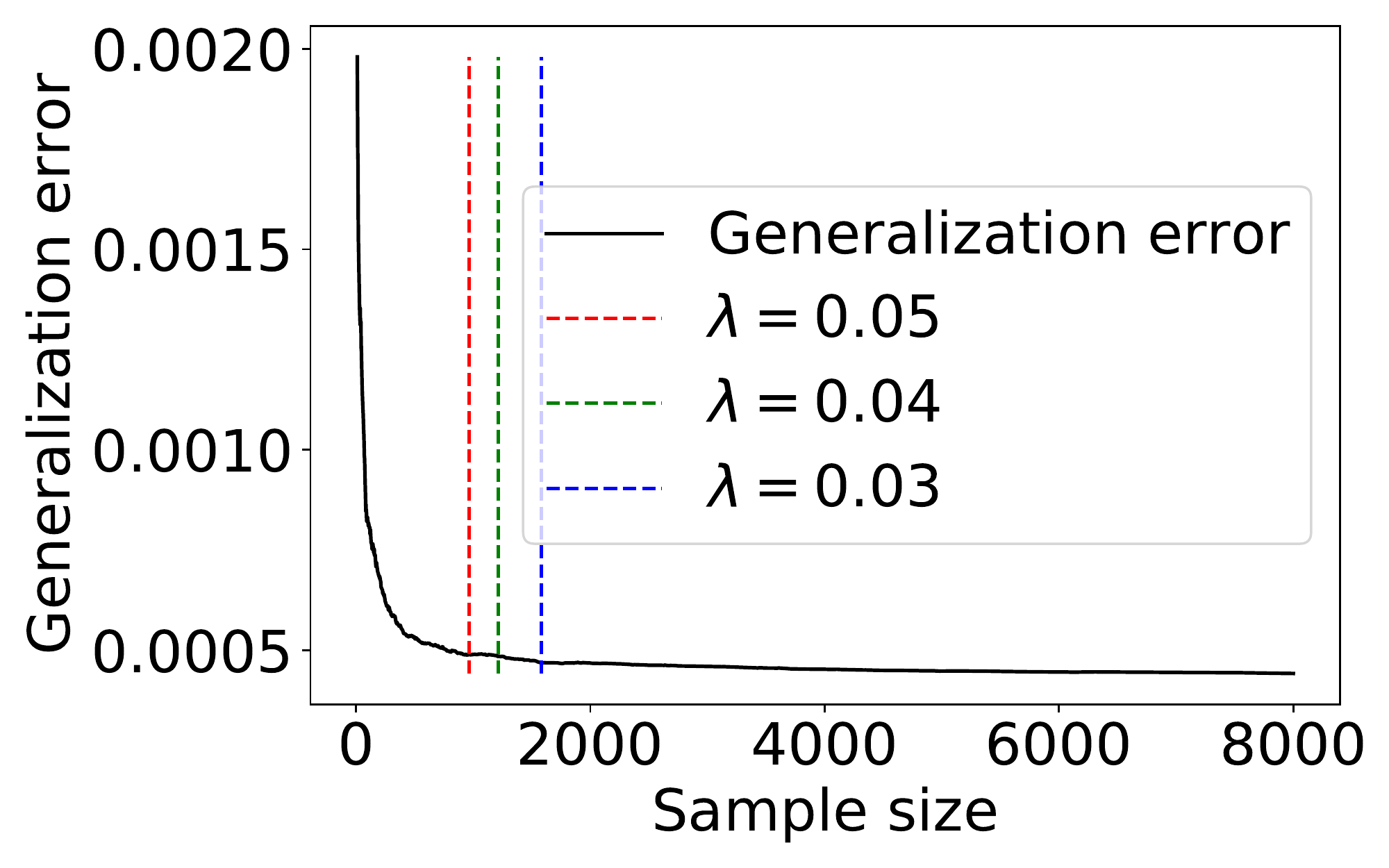} \\
    (c) {\tt{Gas emission}}& (d) {\tt{Grid stability}} \\
    \end{tabular}
    \caption{Generalization error and stopping timing for BRR.}
    \label{fig_gene_error_BRR}
\end{figure}

\begin{figure}[t!]
    \centering
    \begin{tabular}{cc}
    \includegraphics[width=6.6cm]{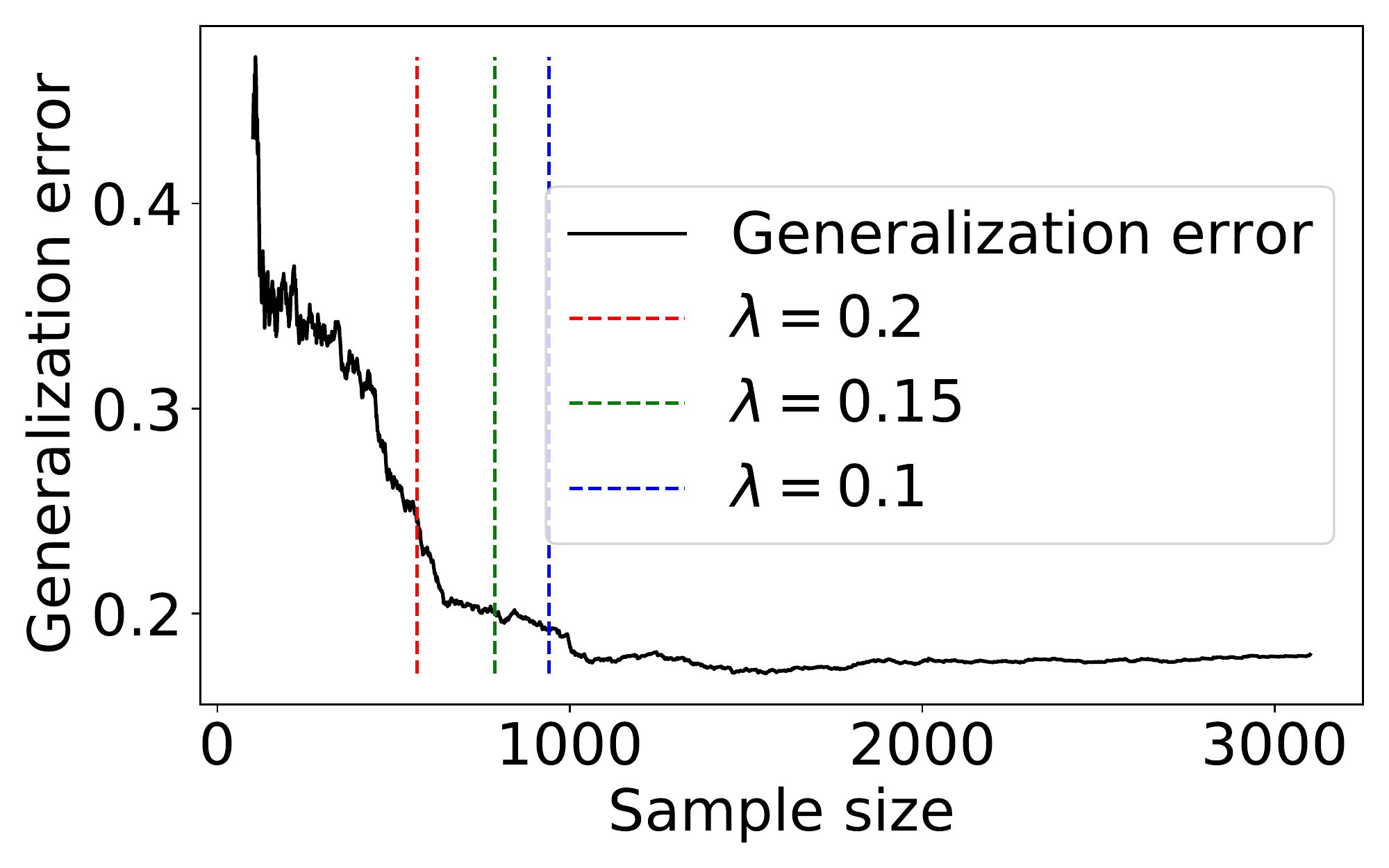}&
    \includegraphics[width=6.6cm]{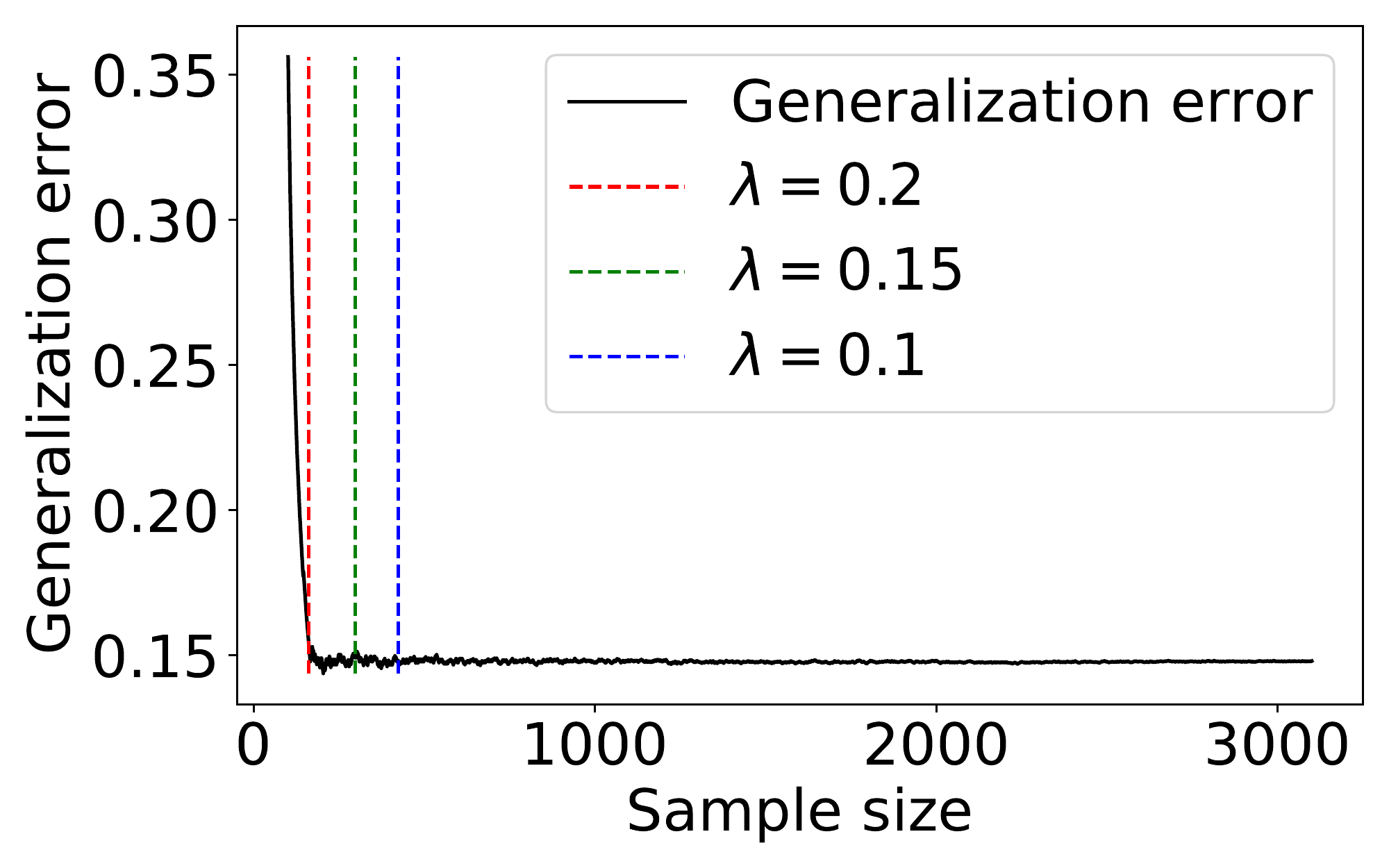} \\
    (a) {\tt{Power plant}}& (b) {\tt{Protein}} \\    \includegraphics[width=6.6cm]{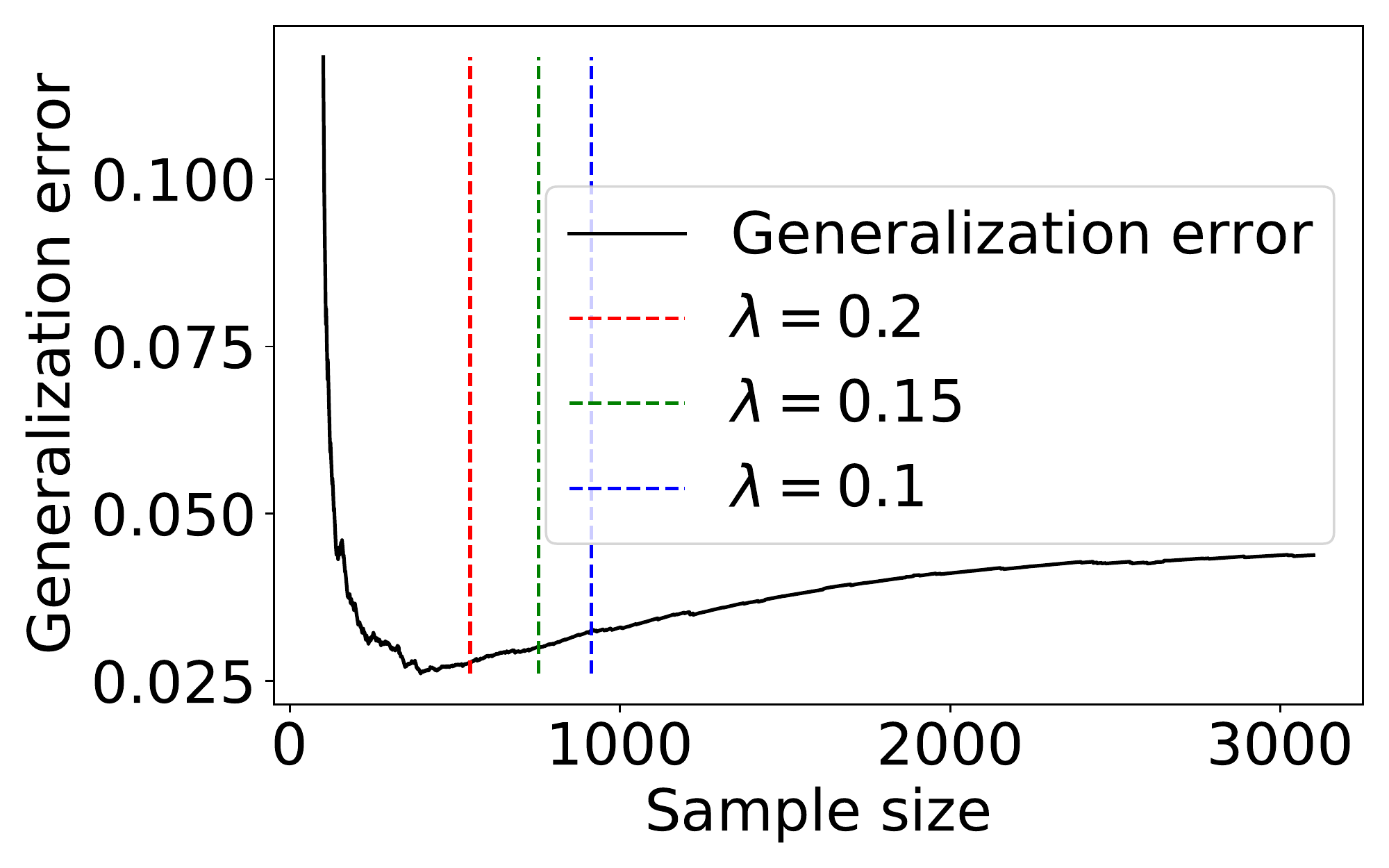}& \\
    (c) {\tt{Grid stability}}& \\
    \end{tabular}
    \caption{Expected generalization error and stopping timing for BLR.}
    \label{fig_gene_error_BLR}
\end{figure}

\begin{figure}[t!]
    \centering
    \begin{tabular}{cc}
    \includegraphics[width=6.6cm]{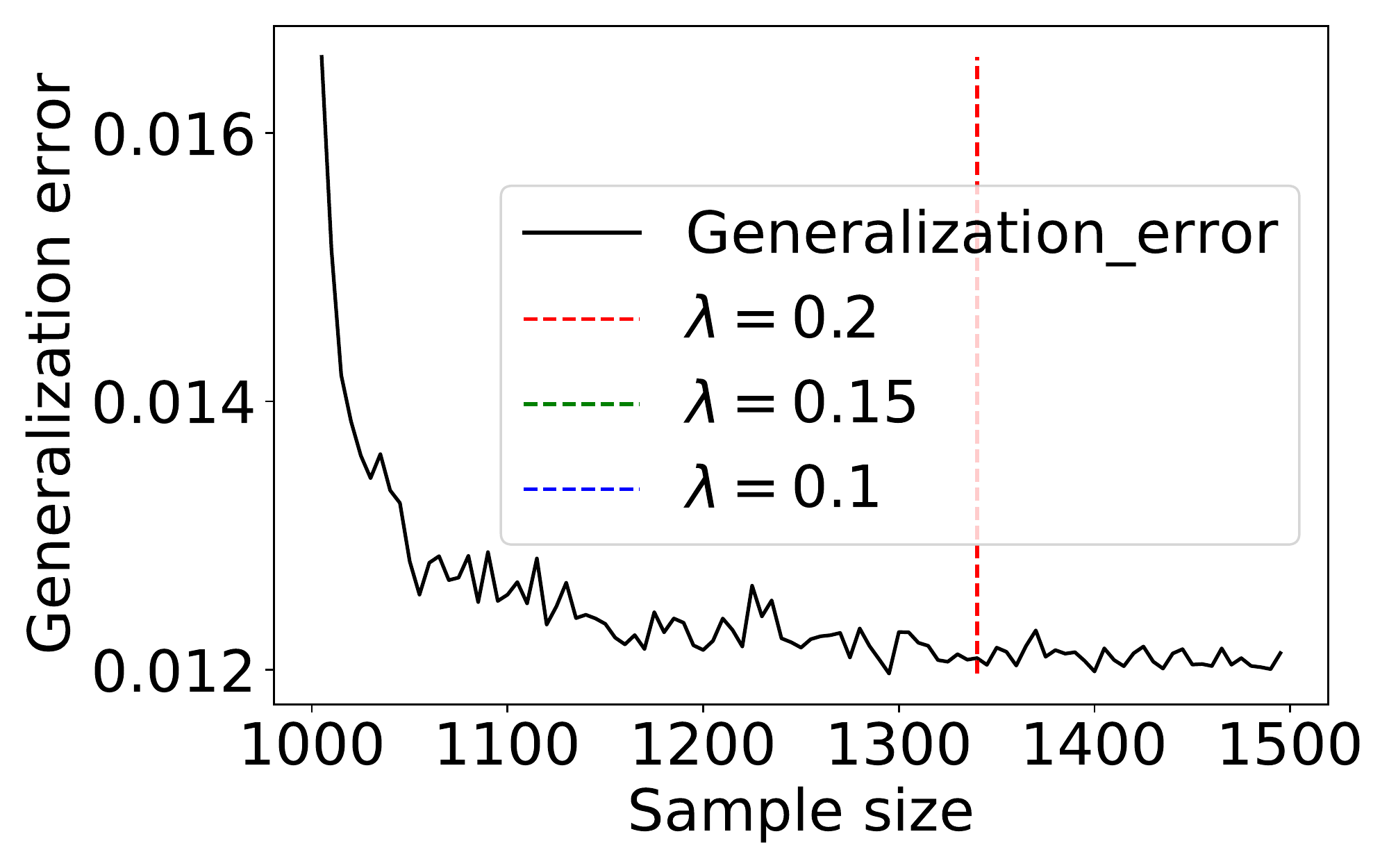}&
    \includegraphics[width=6.6cm]{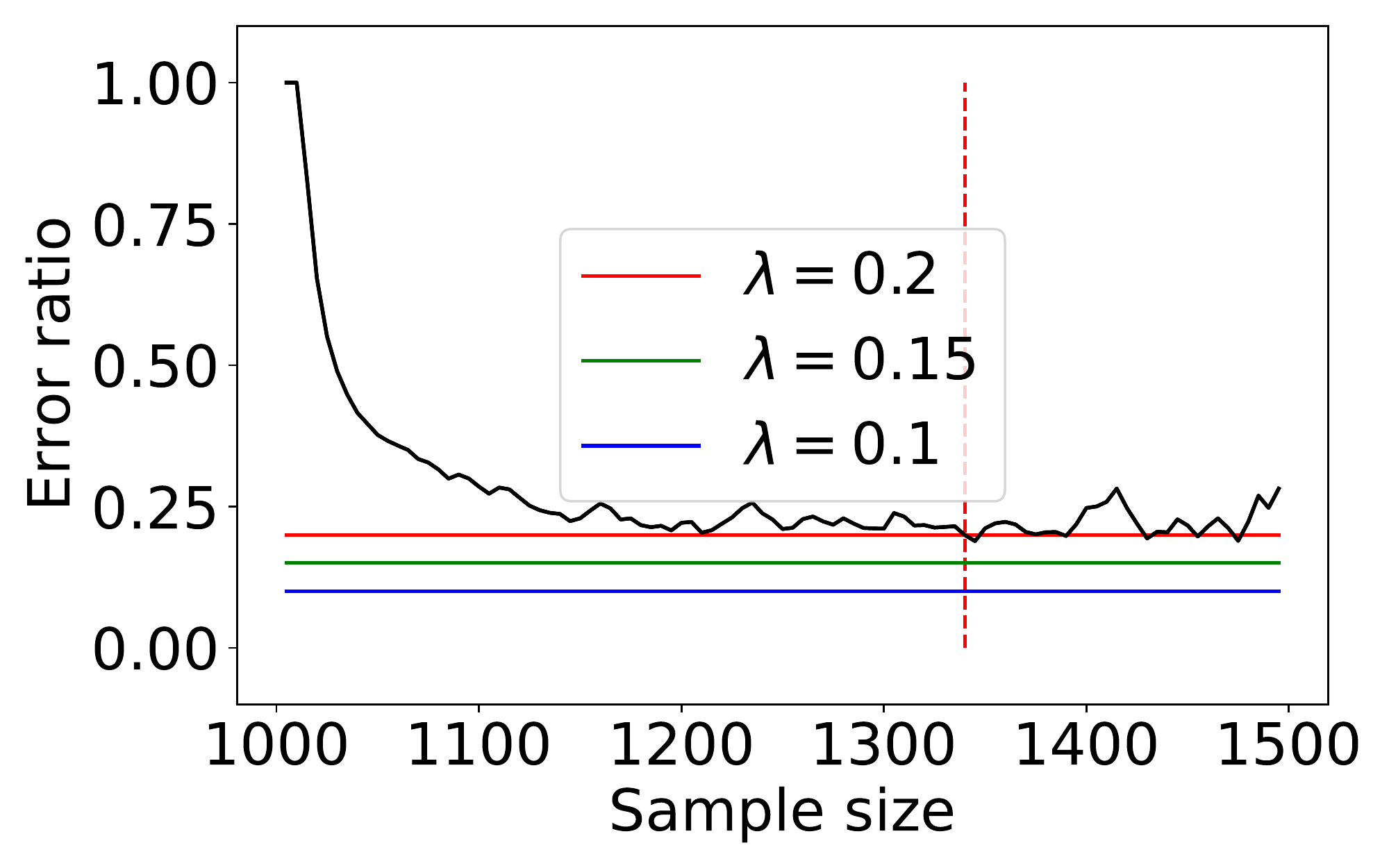} \\
    (a) Expected generalization error& (b) Error ratio
    \end{tabular}
    \caption{Expected generalization error, error ratio and stopping timing for BDNNs.}
    \label{fig_gene_error_BDNN}
\end{figure}

\section{Conclusion}
In this study, we proposed a stopping criterion for active learning based on error stability. The proposed measure of error stability, i.e., the error ratio, can be applied to any posterior distribution like PAC-Bayesian learning. Unlike the conventional PAC-Bayesian approach, the error ratio does not require any assumption including independence between samples. Furthermore, it is easy to determine the threshold for the proposed criterion, since the range of the threshold is normalized to $[0,1]$ for any dataset. Moreover, to apply the proposed criterion to Gaussian process regression and Bayesian deep neural networks, we derived analytical expressions for both the KL-divergence between GP posteriors and the upper bound of the KL-divergence between the posteriors of dropout-based Bayesian deep neural networks.

In the experiments, to demonstrate that the proposed criterion can be widely applied to various active learning methods, we applied the criterion to the following four models: Bayesian ridge regression, Bayesian logistic regression, Gaussian process regression and Bayesian deep neural networks. We demonstrated that the error ratio has a high correlation with the generalization error except in the case where the ``less is more" phenomenon occurs. Furthermore, we also demonstrated that the proposed criterion can stop learning at similar timings for various datasets when the same threshold is used.

Although we have explained the proposed bound as the bound for the gap between expected generalization errors with respect to Bayes posteriors, Theorem~\ref{theorem_gap_bound} can be applied to any measurable function and any probability density function. It is expected that the bound will be applied to stop various online learning algorithms such as Bayesian optimization, reinforcement learning and the multi-armed bandit. The applicability of the derived bound to other learning frameworks is an important future work. 

\appendix
\section{Proof of Theorem~\ref{theorem_gap_bound} and \ref{theorem_stopping_timing_azuma}}
\label{sec_proof_theorems}
We demonstrate the following two lemmas to prove Theorem~\ref{theorem_gap_bound}.
\begin{lemma}(\citet{Donsker1975,McAllester2003})
Let $h:\Theta \rightarrow \mathbb{R}$ be any measurable function. Then, the following inequality holds:
\begin{equation*}
    \Ex{p(\theta)}{h(\theta)} \leq \KL{p(\theta)}{p'(\theta)} + \log{\Ex{p'(\theta)}{e^{h(\theta)}}}.
\end{equation*}
Here, $p$ and $p'$ are probability distributions on $\Theta$.
\label{lemma_change_of_measure}
\end{lemma}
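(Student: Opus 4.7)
The plan is to prove this by exploiting the variational property of the Kullback--Leibler divergence via a tilted (Gibbs) distribution. First I would handle the trivial cases: if $\Ex{p'(\theta)}{e^{h(\theta)}} = \infty$, the right-hand side is $+\infty$ and the inequality is vacuous; likewise, if $p$ is not absolutely continuous with respect to $p'$, then $\KL{p(\theta)}{p'(\theta)} = \infty$ and again the bound is trivial. So I may assume both quantities are finite and $p \ll p'$.

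Next, the key construction: define the tilted distribution
\begin{equation*}
    q(\theta) \coloneqq \frac{p'(\theta)\, e^{h(\theta)}}{Z}, \qquad Z \coloneqq \Ex{p'(\theta)}{e^{h(\theta)}},
\end{equation*}
which is a valid probability density since $Z < \infty$. The point of this choice is that $\log q(\theta) = \log p'(\theta) + h(\theta) - \log Z$, so $h$ appears linearly when one computes the KL divergence between $p$ and $q$.

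Then I would invoke non-negativity of the KL divergence, $\KL{p(\theta)}{q(\theta)} \geq 0$, and expand the log-ratio:
\begin{equation*}
    0 \leq \Ex{p(\theta)}{\log \frac{p(\theta)}{q(\theta)}} = \Ex{p(\theta)}{\log\frac{p(\theta)}{p'(\theta)}} - \Ex{p(\theta)}{h(\theta)} + \log Z.
\end{equation*}
Recognising the first term as $\KL{p(\theta)}{p'(\theta)}$ and rearranging yields the claim. Equivalently, one can derive the bound directly from Jensen's inequality applied to $\log \Ex{p'(\theta)}{e^{h(\theta)}} = \log \Ex{p(\theta)}{(p'/p)\, e^{h(\theta)}}$, but the Gibbs-measure argument makes the equality case (attained at $p = q$) transparent.

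There is no substantive obstacle here; the only care needed is the measure-theoretic bookkeeping around absolute continuity and the degenerate cases, after which the result is a one-line consequence of $\KL{p}{q} \geq 0$ applied to the tilted measure.
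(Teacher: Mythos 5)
Your proof is correct. The paper itself does not prove this lemma --- it is stated as a known result with citations to Donsker--Varadhan and McAllester, and is used as a black box in the proof of Theorem~1 --- so there is no in-paper argument to compare against. What you give is the canonical derivation: introduce the tilted measure $q(\theta) \propto p'(\theta)e^{h(\theta)}$ and rearrange $\KL{p(\theta)}{q(\theta)} \geq 0$; this is exactly the proof found in the cited sources, and your handling of the degenerate cases ($Z = \infty$, failure of absolute continuity) is appropriate. The only remaining pedantic caveat, which you gesture at but could state explicitly, is that splitting $\Ex{p(\theta)}{\log\frac{p(\theta)}{q(\theta)}}$ into the three terms requires $\Ex{p(\theta)}{h(\theta)}$ to be well defined (not of the form $\infty - \infty$); under the standing assumptions that $\KL{p(\theta)}{p'(\theta)} < \infty$ and $Z < \infty$, one checks that $\Ex{p(\theta)}{h^+(\theta)} < \infty$, so the expectation is well defined in $[-\infty, \infty)$ and the rearrangement is legitimate.
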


\begin{lemma}(\citet{Boucheron2013})
Let $X_1,X_2,\ldots,X_t$ be independent random variables with $0\leq X_i \leq b, \;b>0$, let $v\geq \mathbb{E}[X^2_i]$, and define $\phi(\lambda) \coloneqq e^\lambda-\lambda-1$. For any $\lambda>0$, the following inequality holds:
 \begin{equation*}
     \sum^t_{i=1}(\log{\mathbb{E}[e^{\lambda X_i}]-\lambda \mathbb{E}[X_i]})\leq\frac{v}{b^2}\phi(\lambda b).
 \end{equation*}
 \label{lemma_bennett}
\end{lemma}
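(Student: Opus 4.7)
The plan is to establish the Bennett-type bound one summand at a time: I will prove
\[
\log \mathbb{E}[e^{\lambda X_i}] - \lambda \mathbb{E}[X_i] \leq \frac{\mathbb{E}[X_i^2]}{b^2}\phi(\lambda b)
\]
for each $i$ separately and then sum over $i$. Note that independence of the $X_i$ is never actually consumed in this argument---each term is bounded on its own. Independence is listed in the hypothesis because in typical applications the lemma is chained with the MGF factorization $\mathbb{E}[e^{\lambda\sum_i X_i}]=\prod_i \mathbb{E}[e^{\lambda X_i}]$, after which the left-hand side becomes $\log\mathbb{E}[e^{\lambda\sum_i X_i}]-\lambda\mathbb{E}[\sum_i X_i]$. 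I read the hypothesis ``$v\geq \mathbb{E}[X_i^2]$'' as asserting $v\geq \sum_i \mathbb{E}[X_i^2]$, since otherwise a factor of $t$ would have to appear on the right-hand side.

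The first key step is the pointwise inequality
\[
e^{\lambda x} \leq 1 + \lambda x + \frac{x^2}{b^2}\phi(\lambda b),\qquad x \in [0,b].
\]
I would derive this from the monotonicity of the map $x\mapsto (e^{\lambda x}-1-\lambda x)/x^2$ on $[0,\infty)$, which is transparent from the Taylor expansion
\[
\frac{e^{\lambda x}-1-\lambda x}{x^2}=\sum_{k=0}^{\infty}\frac{\lambda^{k+2}\,x^{k}}{(k+2)!}.
\]
Every term in this series is non-negative and non-decreasing in $x\geq 0$ whenever $\lambda>0$, so evaluating at the upper endpoint $x=b$ yields the stated bound (the value at $x=0$ is understood in the limiting sense $\lambda^2/2$, which matches $\phi(\lambda b)/b^2$ as $b\to 0$).

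The remaining steps are routine. Applying the pointwise bound at $X_i$ and taking expectations---valid since $X_i\in[0,b]$ almost surely---gives
\[
\mathbb{E}[e^{\lambda X_i}] \leq 1+\lambda\mathbb{E}[X_i]+\frac{\mathbb{E}[X_i^2]}{b^2}\phi(\lambda b).
\]
The elementary inequality $\log M\leq M-1$ for $M>0$, applied to $M=\mathbb{E}[e^{\lambda X_i}]$, then upgrades this to the per-term bound displayed at the top, and summing over $i=1,\dots,t$ and invoking $\sum_i \mathbb{E}[X_i^2]\leq v$ finishes the proof. The only non-mechanical ingredient is the monotonicity used for the pointwise inequality; everything else collapses to a single line, so I do not anticipate any real obstacle beyond verifying that series-term comparison carefully.
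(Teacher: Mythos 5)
Your proof is correct. The paper itself gives no proof of this lemma---it is quoted directly from Boucheron et al.\ (2013)---and your argument is precisely the standard one from that reference: the pointwise bound $e^{\lambda x}\leq 1+\lambda x+\tfrac{x^2}{b^2}\phi(\lambda b)$ for $x\in[0,b]$ via monotonicity of $x\mapsto(e^{\lambda x}-1-\lambda x)/x^2$ (your term-by-term series comparison is valid on $[0,\infty)$, which suffices here since $X_i\geq 0$), followed by taking expectations, applying $\log M\leq M-1$, and summing. You are also right to flag the hypothesis on $v$: as literally written, ``$v\geq\mathbb{E}[X_i^2]$'' for each $i$ would force a factor of $t$ on the right-hand side, and the reading $v\geq\sum_{i}\mathbb{E}[X_i^2]$ is the one consistent with both the cited source and the paper's application of the lemma (which uses $t=1$, where the distinction vanishes).
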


\textbf{Proof of Theorem~\ref{theorem_gap_bound}}
We denote a difference between $\Ex{q(\theta|S)}{\cL(\theta)}$ and $\Ex{q(\theta|S')}{\cL(\theta)}$ by $\cR(q(\theta|S),q(\theta|S'))$. Supposing that $\tilde{\cL}(\theta) \coloneqq \cL(\theta) - a$, the range of $\tilde{\cL}$ is $[0,b']$, where $b' = b-a$. From Lemma~\ref{lemma_change_of_measure} and \ref{lemma_bennett}, for any $s>0$ we can prove the following inequality:
\begin{align*}
    &s \cR(q(\theta|S),q(\theta|S')) \\
    =& s\left(\Ex{q(\theta|S)}{\cL(\theta)} - a + a - \Ex{q(\theta|S')}{\cL(\theta)}\right) \\
    =& s\left(\Ex{q(\theta|S)}{\tilde{\cL}(\theta)} - \Ex{q(\theta|S')}{\tilde{\cL}(\theta)}\right) \\
    \leq& \KL{q(\theta|S)}{q(\theta|S')}+\log{\Ex{q(\theta|S')}{e^{s \tilde{\cL}(\theta)}}} - \Ex{q(\theta|S')}{s \tilde{\cL}(\theta)} \\
    \leq& \KL{q(\theta|S)}{q(\theta|S')}+\frac{v}{{b'}^2}\phi(s b').
\end{align*}
This implies
\begin{equation}
    \cR(q(\theta|S),q(\theta|S')) \leq \frac{1}{s}\KL{q(\theta|S)}{q(\theta|S')}+\frac{v}{s {b'}^2}\phi(s b'),
    \label{eq_theorem_tight_bound_with_lambda}
\end{equation}
and $\partial \cR/ \partial s$ is explicitly written as
\begin{align*}
    \frac{\partial \cR}{\partial s} =& \frac{\partial}{\partial s} \frac{v \phi(s b') + {b'}^2 D_{\rm KL}}{s {b'}^2}\\
    =& \frac{v(b' e^{s b'}-b')s {b'}^2 - \{v(e^{s b'}-s b'-1) + {b'}^2 D_{\rm KL}\}{b'}^2}{s^2{b'}^4} \\
    =& \frac{vs b' e^{s b'}  - v e^{s b'}+v - {b'}^2 D_{\rm KL}}{s^2{b'}^2},
\end{align*}
where $D_{\rm KL}\coloneqq \KL{q(\theta|S)}{q(\theta|S')}$. Then, equating $\partial \cR/ \partial s$ to zero, we obtain
\begin{align}
    \frac{\partial \cR}{\partial s}=\frac{vs b' e^{s b'}  - v e^{s b'}+v - {b'}^2 D_{\rm KL}}{s^2{b'}^2}&=0 \label{eq_theorem_tight_bound_lambda_1}\\
    (s b'-1) e^{s b'-1}&=\frac{{b'}^2 D_{\rm KL}-v}{v e} \label{eq_theorem_tight_bound_lambda_2}\\
    s b'-1 &=W\left(\frac{{b'}^2 D_{\rm KL}-v}{v e}\right) \label{eq_theorem_tight_bound_lambda_3}\\
    s &=\frac{1}{b'}\left(W\left(\frac{{b'}^2 D_{\rm KL}-v}{v e}\right)+1\right). \label{eq_theorem_tight_bound_lambda_4}
\end{align}
We note that Eq~\eqref{eq_theorem_tight_bound_lambda_3} is a consequence of the fact that the inverse function of $y=xe^x$ is denoted by the Lambert $W$ function. Since $s>0$, we determine that $W$ is the principal branch of the Lambert $W$ function:
\begin{equation}
        s =\frac{1}{b'}\left(W_0\left(\frac{{b'}^2 D_{\rm KL}-v}{v e}\right)+1\right). \label{eq_theorem_tight_bound_lambda_5}
\end{equation}
Substituting Eq.~\eqref{eq_theorem_tight_bound_lambda_5} into Eq.~\eqref{eq_theorem_tight_bound_with_lambda} and defining $u\coloneqq({b'}^2 D_{\rm KL}-v)/ve$ give the following result:
\begin{align}
    \cR(q(\theta|S),q(\theta|S')) \leq& \frac{b'}{W_0(u)+1}D_{\rm KL}+\frac{v}{b'(W_0(u)+1)}\phi(W_0(u)+1) \label{eq_theorem_tight_bound_1}\\
    =&\frac{b'}{W_0(u)+1}D_{\rm KL}+\frac{v}{b'(W_0(u)+1)}\left(e^{W_0(u)+1}-(W_0(u)+1)-1\right) \label{eq_theorem_tight_bound_2}\\
    =&\frac{{b'}^2D_{\rm KL}-v}{b'(W_0(u)+1)}+\frac{v}{b'(W_0(u)+1)}\left(e^{W_0(u)+1}-(W_0(u)+1)\right) \label{eq_theorem_tight_bound_3}\\
    =&\frac{v}{b'(W_0(u)+1)}\left(eu+e^{W_0(u)+1}\right)-\frac{v}{b'} \label{eq_theorem_tight_bound_4}\\
    =&\frac{v}{b'(W_0(u)+1)}\left(W_0(u)e^{W_0(u)+1}+e^{W_0(u)+1}\right)-\frac{v}{b'} \label{eq_theorem_tight_bound_5}\\
    =&\frac{v}{b'}(e^{W_0(u)+1}-1). \label{eq_theorem_tight_bound_6}
\end{align}
In the above derivation, Eq.~\eqref{eq_theorem_tight_bound_3} is reduced to Eq.~\eqref{eq_theorem_tight_bound_4} by using  the fact that $({b'}^2D_{\rm KL}-v)=veu$ because of the definition of $u$, and Eq.~\eqref{eq_theorem_tight_bound_4} is reduced to Eq.~\eqref{eq_theorem_tight_bound_5} by using the fact that $u=W_0(u)e^{W_0(u)}$. Suppose $v={b'}^2$. Then, since $u=(D_{\rm KL}-1)/e$, the following inequality holds:
\begin{equation}
    \cR(q(\theta|S),q(\theta|S'))\leq b' \left\{\exp\left(W_0\left(\frac{D_{\rm KL}-1}{e}\right)+1\right)-1\right\}.
\end{equation}
The lower bound is proved by changing the sign in the proof for the upper bound, completing the proof of Theorem~\ref{theorem_gap_bound}.

\textbf{Proof of Theorem~\ref{theorem_stopping_timing_azuma}}
We assume $\mathbb{E}[E_i | E^{i-1}_1] \leq E_{i-1}$, namely, $E^t_1$ is assumed to be a supermartingale. We can apply the Doob–Meyer decomposition theorem to uniquely decompose $E_t$ into martingale $M_t$ and non-decreasing predictable process $A_t$ as
\begin{equation*}
    E_t = M_t + A_t.
\end{equation*}
Furthermore, $M_t$ is written as
\begin{equation*}
    M_t = E_0 + \sum^t_{i=1}(E_i-\mathbb{E}[E_i|E^{i-1}_1])
\end{equation*}
and 
\begin{equation*}
    M_t-M_{t-1} = E_t-\mathbb{E}[E_t|E^{t-1}_1],
\end{equation*}
where $M_0=E_0$. From Theorem~\ref{theorem_gap_bound}, we have
\begin{equation}
    -(b-a)r(p(\theta|S_{t-1}),p(\theta|S_t)) - (A_t - A_{t-1}) \leq M_t - M_{t-1} \leq (b-a)r(p(\theta|S_t),p(\theta|S_{t-1})) - (A_t - A_{t-1}).
    \label{eq_range_of_mds}
\end{equation}
By applying the Chernoff bound, the following inequality is derived:
\begin{align*}
    {\rm Pr}\left(M_t-M_{t-1} \geq \epsilon\right) \leq& \min_{s>0}e^{-s\epsilon}\mathbb{E}\left[e^{s(M_t-M_{t-1})}\right] \\
    \leq& \min_{s>0}e^{-s\epsilon}\mathbb{E}\left[\mathbb{E}[e^{s(M_t-M_{t-1})}|M^{t-1}_1]\right].
\end{align*}
Since $\mathbb{E}[M_t-M_{t-1}|M^{t-1}_1]=0$, by applying Hoeffding's lemma and Eq.~\eqref{eq_range_of_mds}, we have the following inequality:
\begin{equation*}
    \mathbb{E}\left[e^{s(M_t-M_{t-1})}|M^{t-1}_1\right]\leq \exp\left(\frac{s^2(b-a)^2r^2_t}{8}\right),
\end{equation*}
where $r_t=r(p(\theta|S_{t-1}),p(\theta|S_t))+r(p(\theta|S_t),p(\theta|S_{t-1}))$. Thus,
\begin{align*}
    {\rm Pr}\left(M_t-M_{t-1} \geq \epsilon\right)
    \leq& \min_{s>0}e^{-s\epsilon}\mathbb{E}\left[\mathbb{E}[e^{s(M_t-M_{t-1})}|M^{t-1}_1]\right] \\
    \leq&\min_{s>0}\exp\left(\frac{s^2(b-a)^2r^2_t}{8}-s\epsilon\right).
\end{align*}
Minimizing for $s$ gives the upper bound
\begin{equation*}
    {\rm Pr}\left(M_t-M_{t-1} \geq \epsilon\right)
    \leq\exp\left(-\frac{2\epsilon^2}{(b-a)^2r^2_t}\right).
\end{equation*}
In the same way as above, we also have the following inequality:
\begin{equation*}
    {\rm Pr}\left(M_t-M_{t-1} \leq -\epsilon\right)
    \leq\exp\left(-\frac{2\epsilon^2}{(b-a)^2r^2_t}\right).
\end{equation*}
Combining the above inequalities, we have
\begin{equation*}
    {\rm Pr}\left(|E_t-\mathbb{E}[E_t|E^{t-1}_1]|\leq \epsilon\right)
    \geq1-2\exp\left(-\frac{2\epsilon^2}{(b-a)^2r^2_t}\right).
\end{equation*}

From the inequality, defining $\delta=\{2\exp(-2\epsilon^2/(b-a)^2r^2_t)\}$, we have the following inequality with at least probability $1-\delta$:
\begin{equation*}
    |E_i-\mathbb{E}[E_t|E^{t-1}_1]| \leq (b-a)\sqrt{-\frac{\log(\delta/2)}{2}r^2_t}.
\end{equation*}
We assume that right hand side of the inequality is smaller than $(b-a)\gamma\eta$:
\begin{equation}
    (b-a)\sqrt{-\frac{\log(\delta/2)}{2}r^2_t} \leq (b-a)\gamma\eta.
    \label{eq_stopping_timing_azuma}
\end{equation}
Then, we guarantee that $|E_t-\mathbb{E}[E_t|E^{t-1}_1]|$ is smaller than $(b-a)\gamma\eta$ with at least probability $1-\delta$ when we stop active learning under the condition of Eq.~\eqref{eq_stopping_timing_azuma}.

Equation~\eqref{eq_stopping_timing_azuma} leads to the following inequality:
\begin{equation*}
    \lambda_t=\frac{r_t}{\gamma} \leq \sqrt{-\frac{2}{\log(\delta/2)}}\eta.
\end{equation*}
Therefore, Theorem~\ref{theorem_stopping_timing_azuma} is proved.

\section{KL-divergence between GPs}
\label{sec_kl_gps}
\begin{lemma}
Let $q(f|S)$ and $q(f|S')$ be the posteriors with respect to $\theta$ given $S=(X,Y)$ and $S'=(X',Y')$, respectively. We assume that the prior of $q(f|S)$ is the same as that of $q(f|S')$. Then, the following inequality holds:
\begin{equation*}
    \KL{q(f|S)}{q(f|S')} = \KL{q(\vf_{X_+}|S))}{q(\vf_{X_+}|S'))},
\end{equation*}
\label{lemma_bound_of_kl_divergence}
where $X_+\coloneqq X\cup X'$ and $\vf_{X_+}\coloneqq f(X_+)$.
\end{lemma}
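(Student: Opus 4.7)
The overall strategy is to decompose the function $f$ into its restriction to the finite set $X_+$ and its values outside $X_+$, and then to apply the chain rule for KL divergence to split the infinite-dimensional KL into a finite-dimensional KL (on $\vf_{X_+}$) plus an expected conditional KL on the remaining coordinates. I expect to show that the conditional KL vanishes, leaving exactly the claimed finite-dimensional expression.

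The first step is to write, for either posterior $q(\cdot\mid S)$ or $q(\cdot\mid S')$, the factorisation $q(f\mid S)=q(f_{-X_+}\mid \vf_{X_+},S)\,q(\vf_{X_+}\mid S)$, where $f_{-X_+}$ denotes the values of $f$ outside $X_+$. The chain rule for KL divergence then gives
\begin{equation*}
\KL{q(f\mid S)}{q(f\mid S')}=\KL{q(\vf_{X_+}\mid S)}{q(\vf_{X_+}\mid S')}+\Ex{q(\vf_{X_+}\mid S)}{\KL{q(f_{-X_+}\mid \vf_{X_+},S)}{q(f_{-X_+}\mid \vf_{X_+},S')}}.
\end{equation*}
So it suffices to prove that the conditional factor $q(f_{-X_+}\mid \vf_{X_+},S)$ does not depend on $S$ at all; then the expected conditional KL is zero and the identity follows.

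The key observation is that for a Gaussian process posterior arising from observing $Y$ at inputs $X\subseteq X_+$ under a Gaussian likelihood, the posterior density has the form $q(f\mid S)\propto p(f)\,p(Y\mid \vf_X)$, and the likelihood factor $p(Y\mid \vf_X)$ depends on $f$ only through $\vf_X$, which in turn is a coordinate of $\vf_{X_+}$. Therefore, after conditioning on $\vf_{X_+}$, the likelihood contribution becomes a constant in $f_{-X_+}$ and we obtain
\begin{equation*}
q(f_{-X_+}\mid \vf_{X_+},S)=p(f_{-X_+}\mid \vf_{X_+}),
\end{equation*}
where the right-hand side is the prior conditional; crucially, this does not depend on $Y$ or $X$. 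The same argument applied to $S'$ (using $X'\subseteq X_+$) yields $q(f_{-X_+}\mid \vf_{X_+},S')=p(f_{-X_+}\mid \vf_{X_+})$. Since both posteriors share the same prior by hypothesis, these two prior conditionals coincide, so the conditional KL is identically zero.

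The main obstacle I anticipate is the infinite-dimensional nature of the GP: writing $q(f\mid S)$ as a density and invoking the chain rule needs some care, since the KL on function space is defined via Radon--Nikodym derivatives rather than ordinary densities. The cleanest way to handle this is to appeal to the Kolmogorov consistency characterisation of Gaussian measures: two GP laws coincide as measures on function space iff all their finite-dimensional marginals coincide. Verifying the conditional identity above for every finite marginal of $f_{-X_+}$ (indexed by arbitrary finite sets of test points disjoint from $X_+$) reduces the claim to a routine Gaussian conditioning computation, and then the chain-rule decomposition of KL divergence is justified at the level of the joint finite-dimensional marginals and passed to the limit.
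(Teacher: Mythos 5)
Your proof is correct and follows essentially the same route as the paper's: both apply the chain rule for KL divergence to split off the marginal on $\vf_{X_+}$, and both show the conditional law of $f$ off $X_+$ given $\vf_{X_+}$ reduces to the prior conditional (since the likelihood depends on $f$ only through $\vf_X\subseteq\vf_{X_+}$), so the residual term vanishes. Your additional remarks on justifying the chain rule in the infinite-dimensional setting via finite-dimensional marginals add rigor the paper leaves implicit, but do not change the argument.
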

\begin{proof}
Let $X_\Omega$ be a universal set of input data. We denote $X_\Omega/X_+$ by $X_\ast$. From the chain rule of the KL-divergence~\citep{Gray2011}, the following equation holds:
\begin{align}
    &\KL{q(f|S)}{q(f|S')} \notag \\
    =&\KL{q(\vf_{X_+}|S)}{q(\vf_{X_+}|S')} \notag \\
    &+\Ex{q(\vf_{X_+}|S)}{\KL{q(\vf_{X_\ast}|\vf_{X_+},S)}{q(\vf_{X_\ast}|\vf_{X_+},S')}}.
    \label{chain_rule}
\end{align}
We denote the prior of $q(\vf_{X_\ast},\vf_{X_+}|S)$ and $q(\vf_{X_\ast},\vf_{X_+}|S')$ by $p(\vf_{X_\ast},\vf_{X_+})$. From the Bayesian theorem, the following equation holds:
\begin{align*}
    q(\vf_{X_\ast}|\vf_{X_+},S) =& \frac{p(\vf_{X_\ast},\vf_{X_+}|S)}{p(\vf_{X_+}|S)} \\
    =&\frac{p(Y|\vf_{X_+},X)p(\vf_{X_\ast}|\vf_{X_+})p(\vf_{X_+})}{p(Y|X)}  \frac{p(Y|X)}{p(Y|\vf_{X_+},X)p(\vf_{X_+})}  \\
    =&\frac{p(\vf_{X_\ast},\vf_{X_+})}{p(\vf_{X_+})} =p(\vf_{X_\ast}|\vf_{X_+}).
\end{align*}
Similarly, $q(\vf_{X_\ast}|\vf_{X_+},S')=p(\vf_{X_\ast}|\vf_{X_+})$ also holds. Therefore, if the prior of $q(f|S)$ is the same as that of $q(f|S')$, the second term of Eq.~\eqref{chain_rule} is zero.
\end{proof}

\begin{lemma}
Let $q(f|S_t)$ and $q(f|S_{t-1})$ be the GP posteriors given $S_t=\{(\vx_i,y_i)\}^{n_t}_{i=1}$ and $S_{t-1}=\{(\vx_i,y_i)\}^{n_{t-1}}_{i=1}$, respectively. We assume that the prior of $q(f|S_t)$ is the same as that of $q(f|S_{t-1})$.
Let $\mu_t$, $\sigma_t$ and $\beta$ be the mean and covariance functions of $q(f|S_t)$ and the accuracy of Gaussian noise, respectively. Then, the following equation holds:
\begin{align*}
    \KL{q(f|S_{t-1})}{q(f|S_t)}=&\frac{1}{2}\beta \sigma_{t-1}(\vx_{n_t},\vx_{n_t})-\frac{1}{2}\log{(1+\beta \sigma_{t-1}(\vx_{n_t},\vx_{n_t}))}  \\
    &+\frac{1}{2}\frac{\beta \sigma_{t-1}(\vx_{n_t},\vx_{n_t})}{\sigma_{t-1}(\vx_{n_t},\vx_{n_t})+\beta^{-1}}(y_{n_t}-\mu_{t-1}(\vx_{n_t}))^2.
\end{align*}
\label{lemma_gp_kl2}
\end{lemma}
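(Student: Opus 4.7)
The plan is to reduce the problem step-by-step from a KL between infinite-dimensional Gaussian processes down to a closed-form expression involving only the posterior mean and variance at the single new query point $\vx_{n_t}$. The entire argument exploits two structural facts: Lemma~\ref{lemma_bound_of_kl_divergence} (already proved) and the chain rule for KL-divergence.

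First I would apply Lemma~\ref{lemma_bound_of_kl_divergence} with $X = X_{t-1}$ and $X' = X_t$ (so that $X_+ = X_{t-1} \cup X_t = X_t$, using $S_t = S_{t-1} \cup \{(\vx_{n_t}, y_{n_t})\}$). This yields
\begin{equation*}
\KL{q(f|S_{t-1})}{q(f|S_t)} = \KL{q(\vf_{X_t}|S_{t-1})}{q(\vf_{X_t}|S_t)},
\end{equation*}
replacing an untractable KL between GPs with one between two $n_t$-dimensional Gaussians. Next, I would apply the chain rule for the KL-divergence to the split $\vf_{X_t} = (\vf_{X_{t-1}}, f_{\vx_{n_t}})$, writing the right-hand side as
\begin{equation*}
\KL{q(f_{\vx_{n_t}}|S_{t-1})}{q(f_{\vx_{n_t}}|S_t)} + \Ex{q(f_{\vx_{n_t}}|S_{t-1})}{\KL{q(\vf_{X_{t-1}}|f_{\vx_{n_t}}, S_{t-1})}{q(\vf_{X_{t-1}}|f_{\vx_{n_t}}, S_t)}}.
\end{equation*}

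The crux of the proof is to show that the conditional KL term vanishes. Since the likelihood of $y_{n_t}$ depends on $f$ only through $f_{\vx_{n_t}}$, the observation $y_{n_t}$ is conditionally independent of $\vf_{X_{t-1}}$ given $f_{\vx_{n_t}}$. Therefore $q(\vf_{X_{t-1}}|f_{\vx_{n_t}}, S_t) = q(\vf_{X_{t-1}}|f_{\vx_{n_t}}, S_{t-1}, y_{n_t}) = q(\vf_{X_{t-1}}|f_{\vx_{n_t}}, S_{t-1})$, so the two conditional distributions coincide and the expected KL is zero. I expect this conditional independence argument to be the most subtle step, though it is a short application of Bayes' rule. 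After this reduction we are left with
\begin{equation*}
\KL{q(f|S_{t-1})}{q(f|S_t)} = \KL{q(f_{\vx_{n_t}}|S_{t-1})}{q(f_{\vx_{n_t}}|S_t)},
\end{equation*}
a KL between two univariate normals.

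Finally, I would compute the two scalar parameters explicitly. The first marginal is $\mathcal{N}(\mu_{t-1}(\vx_{n_t}), \sigma_{t-1}(\vx_{n_t},\vx_{n_t}))$ by definition. Because conditioning on $S_{t-1}$ first yields a GP with prior-at-$\vx_{n_t}$ equal to this Gaussian, the extra observation $y_{n_t} \sim \mathcal{N}(f_{\vx_{n_t}}, \beta^{-1})$ updates it via the standard scalar Gaussian-Gaussian conjugate update, giving variance $\sigma_t(\vx_{n_t},\vx_{n_t}) = \sigma_{t-1}(\vx_{n_t},\vx_{n_t})/(1+\beta\sigma_{t-1}(\vx_{n_t},\vx_{n_t}))$ and mean $\mu_t(\vx_{n_t}) = \mu_{t-1}(\vx_{n_t}) + \beta\sigma_{t-1}(\vx_{n_t},\vx_{n_t})(y_{n_t}-\mu_{t-1}(\vx_{n_t}))/(1+\beta\sigma_{t-1}(\vx_{n_t},\vx_{n_t}))$. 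Substituting these into the closed-form KL between two univariate normals, $\tfrac12[\log(\sigma_1^2/\sigma_0^2) + (\sigma_0^2 + (\mu_0-\mu_1)^2)/\sigma_1^2 - 1]$, and simplifying the three resulting terms one-by-one recovers exactly the three summands in the claim: the $\tfrac12\beta\sigma_{t-1}(\vx_{n_t},\vx_{n_t})$ term from $\sigma_0^2/\sigma_1^2 - 1$, the $-\tfrac12\log(1+\beta\sigma_{t-1}(\vx_{n_t},\vx_{n_t}))$ term from the log-ratio, and the data-dependent squared-error term from $(\mu_0-\mu_1)^2/\sigma_1^2$ after rewriting $\beta^2\sigma_{t-1}/(1+\beta\sigma_{t-1}) = \beta\sigma_{t-1}/(\sigma_{t-1}+\beta^{-1})$. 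These manipulations are routine algebra, so the only non-obvious part of the proof is the vanishing of the conditional KL in the chain-rule decomposition.
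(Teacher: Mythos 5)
Your proof is correct, but after the shared first step it takes a genuinely different route from the paper's. Both arguments begin by invoking Lemma~\ref{lemma_bound_of_kl_divergence} to replace the KL between processes with a KL between the finite-dimensional marginals on $X_t$, and both ultimately rest on the same structural fact that the new likelihood $p(y_{n_t}\mid \vf)$ depends on $\vf$ only through $f(\vx_{n_t})$. The paper exploits that fact by writing $q(\vf\mid S_t)=p(y_{n_t}\mid f_t)\,q(\vf\mid S_{t-1})/p(y_{n_t}\mid\vx_{n_t})$ directly inside the KL, which collapses it to $\log p(y_{n_t}\mid\vx_{n_t})-\Ex{q(f_t|S_{t-1})}{\log p(y_{n_t}\mid f_t)}$, i.e.\ a log marginal likelihood minus an expected log likelihood, each a one-dimensional Gaussian integral; notably this direction of the KL then requires no posterior update formulas at all. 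You instead apply the chain rule a second time to the split $(\vf_{X_{t-1}},f_{\vx_{n_t}})$, kill the conditional term by the conditional independence $q(\vf_{X_{t-1}}\mid f_{\vx_{n_t}},S_t)=q(\vf_{X_{t-1}}\mid f_{\vx_{n_t}},S_{t-1})$ (your Bayes-rule justification of this is sound), and finish with the closed-form KL between two univariate Gaussians using the explicit conjugate update of $\mu_t(\vx_{n_t})$ and $\sigma_t(\vx_{n_t},\vx_{n_t})$; your algebraic simplifications check out. What your route buys is symmetry and transparency: the identical reduction handles the reverse divergence $\KL{q(f|S_t)}{q(f|S_{t-1})}$ of Lemma~\ref{lemma_gp_kl} by merely swapping arguments in the univariate formula, whereas the paper treats the two directions by somewhat different manipulations; the cost is that you must carry out the rank-one posterior update explicitly, which the paper defers to the other lemma.
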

\begin{proof}

From Lemma~\ref{lemma_bound_of_kl_divergence}, the following equation holds:
\begin{equation*}
    \KL{q(f|S_{t-1})}{q(f|S_t)}=\KL{q(\vf|S_{t-1})}{q(\vf|S_t)},
\end{equation*}
where $\vf:=(f(\vx_1),f(\vx_2),\cdots,f(\vx_{n_t}))$.
When $S_t=(X_t,Y_t)$ is observed, $q(\vf|S_t)$ can be described as
\begin{align*}
    q(\vf|S_t) &= \frac{p(Y_t|\vf,X_t)p(\vf)}{ p(Y_t|X_t)} \\
    &= \frac{p(y_{n_t}|\vf,\vx_{n_t})p(Y_{t-1}|\vf,X_{t-1})p(\vf)}{\int p(y_{n_t}|\vf',\vx_{n_t})p(Y_{t-1}|\vf',X_{t-1})p(\vf') d\vf'} \\
    &= \frac{p(y_{n_t}|\vf,\vx_{n_t})p(Y_{t-1}|X_{t-1})q(\vf|S_{t-1})}{\int p(y_{n_t}|\vf',\vx_{n_t})p(Y_{t-1}|X_{t-1})q(\vf'|S_{t-1}) d\vf'} \\
    &= \frac{p(y_{n_t}|\vf,\vx_{n_t})q(\vf|S_{t-1})}{p(y_{n_t}|\vx_{n_t})}.
\end{align*}
From this equation, $\KL{q(\vf|S_{t-1})}{q(\vf|S_t)}$ can be rewritten as
\begin{align}
    &\KL{q(\vf|S_{t-1})}{q(\vf|S_t)} \notag \\
    =& \Ex{q(\vf|S_{t-1})}{\log\frac{q(\vf|S_{t-1})p(y_{n_t}|\vx_{n_t})}{p(y_{n_t}|\vf,\vx_{n_t})q(\vf|S_{t-1})}} \notag \\
    =& \log{p(y_{n_t}|\vx_{n_t})}-\Ex{q(\vf|S_{t-1})}{\log{p(y_{n_t}|\vf,\vx_{n_t})}} \notag \\
    =& \log{\int q(f_t|S_{t-1})p(y_{n_t}|f_t) df_t}-\int q(f_t|S_{t-1})\log{p(y_{n_t}|f_t)}df_t,
    \label{eq_kl_expansion}
\end{align}
where $f_t:=f(\vx_{n_t})$. The first term of Eq.~\eqref{eq_kl_expansion} becomes the logarithm of a normal distribution since $p(y_{n_t}|f_t)$ and $q(f_t|S_{t-1})$ are normal distributions. Specifically, from $p(y_{n_t}|f_t)=\mathcal{N}(y_{n_t}|f_t,\beta^{-1})$ and $p(f_t|S_{t-1})=\mathcal{N}(f_t|\mu_{t-1}(\vx_{n_t}),\sigma_{t-1}(\vx_{n_t},\vx_{n_t}))$, the following equation holds:
\begin{align}
    &\log{\int p(y_{n_t}|f_t)q(f_t|S_{t-1}) df_t} \notag \\ =&\log{\mathcal{N}(y_{n_t}|\mu_{t-1}(\vx_{n_t}),\sigma_{t-1}(\vx_{n_t},\vx_{n_t})+\beta^{-1})}.
    \label{eq_log_exp_normal}
\end{align}

The second term can be rewritten as
\begin{align}
    -\int q(f_t|S_{t-1})\log{p(y_{n_t}|f_t)}df_t 
    =& \Ex{q(f_t|S_{t-1})}{\frac{\beta}{2}(y_{n_t}-f_t)^2}+\frac{1}{2}\log{2\pi\beta^{-1}} \notag \\
    =& {\frac{\beta}{2}\left(y^2_{n_t}-2y_{n_t}\mathbb{E}[f_t]+\mathbb{E}[f^2_t]\right)}+\frac{1}{2}\log{2\pi\beta^{-1}} \notag \\
    =& \frac{\beta}{2}(y_{n_t}-\mu_{t-1}(\vx_{n_t}))^2+\frac{\beta}{2}\sigma_{t-1}(\vx_{n_t},\vx_{n_t}) +\frac{1}{2}\log{2\pi\beta^{-1}}.
    \label{eq_exp_log_normal}
\end{align}
From the above equation, the lemma is derived as
\begin{align*}
    &\KL{q(f|S_{t-1})}{q(f|S_t)}  \\
    =&-\frac{(y_{n_t}-\mu_{t-1}(\vx_{n_t}))^2}{2(\sigma_{t-1}(\vx_{n_t},\vx_{n_t})+\beta^{-1})}-\frac{1}{2}\log{2\pi(\sigma_{t-1}(\vx_{n_t},\vx_{n_t})+\beta^{-1})}  \\
    &+\frac{\beta}{2}(y_{n_t}-\mu_{t-1}(\vx_{n_t}))^2+\frac{\beta}{2}\sigma_{t-1}(\vx_{n_t},\vx_{n_t})+\frac{1}{2}\log{2\pi\beta^{-1}} \\
    =&\frac{1}{2}\beta \sigma_{t-1}(\vx_{n_t},\vx_{n_t})-\frac{1}{2}\log{(1+\beta \sigma_{t-1}(\vx_{n_t},\vx_{n_t}))}  \\
    &+\frac{1}{2}\frac{\beta \sigma_{t-1}(\vx_{n_t},\vx_{n_t})}{\sigma_{t-1}(\vx_{n_t},\vx_{n_t})+\beta^{-1}}(y_{n_t}-\mu_{t-1}(\vx_{n_t}))^2.
\end{align*}
\end{proof}

\begin{lemma}
Let $q(f|S_t)$ and $q(f|S_{t-1})$ be the GP posteriors given $S_t=\{(\vx_i,y_i)\}^{n_t}_{i=1}$ and $S_{t-1}=\{(\vx_i,y_i)\}^{n_{t-1}}_{i=1}$, respectively. We assume that the prior of $q(f|S_t)$ is the same as that of $q(f|S_{t-1})$.
Let $\mu_t$, $\sigma_t$ and $\beta$ be the mean and covariance functions of $q(f|S_t)$, and the accuracy of Gaussian noise, respectively. Then, the following equation holds:
\begin{align*}
    \KL{p(f|S_t)}{p(f|S_{t-1})}=&\KL{p(\vf_{X_t}|S_t)}{p(\vf_{X_t}|S_{t-1})} \\
    =&\frac{1}{2}\log{(1+\beta\sigma_{t-1}(\vx_{n_t},\vx_{n_t}))}-\frac{1}{2}\frac{\sigma_{t-1}(\vx_{n_t},\vx_{n_t})}{\sigma_{t-1}(\vx_{n_t},\vx_{n_t})+\beta\inv} \\
    &+\frac{1}{2}\frac{\sigma_{t-1}(\vx_{n_t},\vx_{n_t})(y_{n_t}-\mu_{t-1}(\vx_{n_t}))^2}{(\sigma_{t-1}(\vx_{n_t},\vx_{n_t})+\beta\inv)^2}.
\end{align*}
\label{lemma_gp_kl}
\end{lemma}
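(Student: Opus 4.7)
The plan is to mirror the argument of Lemma~\ref{lemma_gp_kl2} with the roles of $S_t$ and $S_{t-1}$ interchanged. First I would invoke Lemma~\ref{lemma_bound_of_kl_divergence}, which collapses the infinite-dimensional KL divergence $\KL{q(f|S_t)}{q(f|S_{t-1})}$ to the finite-dimensional KL divergence between the joint marginals at $X_t = X_{t-1}\cup\{\vx_{n_t}\}$; this is legitimate because both posteriors share the same GP prior and therefore agree on the conditional of $f$ off $X_t$. Thus it suffices to compute $\KL{q(\vf|S_t)}{q(\vf|S_{t-1})}$, where $\vf := (f(\vx_1),\ldots,f(\vx_{n_t}))$.

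Next I would reuse the Bayesian factorization already exploited in Lemma~\ref{lemma_gp_kl2}, namely
\begin{equation*}
q(\vf|S_t) = \frac{p(y_{n_t}|\vf,\vx_{n_t})\, q(\vf|S_{t-1})}{p(y_{n_t}|\vx_{n_t})},
\end{equation*}
and substitute it into the log-ratio inside the KL divergence. The $q(\vf|S_{t-1})$ factors cancel, leaving
\begin{equation*}
\KL{q(\vf|S_t)}{q(\vf|S_{t-1})} = \Ex{q(f_t|S_t)}{\log p(y_{n_t}|f_t)} - \log p(y_{n_t}|\vx_{n_t}),
\end{equation*}
with $f_t := f(\vx_{n_t})$. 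The marginal-likelihood term is exactly what was computed in Eq.~\eqref{eq_log_exp_normal}, so it equals the log density of $\mathcal{N}(y_{n_t}\mid \mu_{t-1}(\vx_{n_t}),\sigma_{t-1}(\vx_{n_t},\vx_{n_t})+\beta^{-1})$. For the first term I would use the standard one-step GP update
\begin{equation*}
\sigma_t(\vx_{n_t},\vx_{n_t}) = \frac{\sigma_{t-1}(\vx_{n_t},\vx_{n_t})\beta^{-1}}{\sigma_{t-1}(\vx_{n_t},\vx_{n_t})+\beta^{-1}}, \qquad y_{n_t}-\mu_t(\vx_{n_t}) = \frac{\beta^{-1}(y_{n_t}-\mu_{t-1}(\vx_{n_t}))}{\sigma_{t-1}(\vx_{n_t},\vx_{n_t})+\beta^{-1}},
\end{equation*}
combined with the Gaussian expectation of $\log p(y_{n_t}|f_t)$ exactly as in Eq.~\eqref{eq_exp_log_normal}.

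The remaining step is algebraic bookkeeping. The $\log(2\pi)$ contributions cancel between the two terms, the log-variance contributions combine to $\tfrac{1}{2}\log(1+\beta\sigma_{t-1}(\vx_{n_t},\vx_{n_t}))$, and the constant (non-data-dependent) piece $-\tfrac{\beta}{2}\sigma_t(\vx_{n_t},\vx_{n_t})$ reduces to $-\tfrac{1}{2}\sigma_{t-1}(\vx_{n_t},\vx_{n_t})/(\sigma_{t-1}(\vx_{n_t},\vx_{n_t})+\beta^{-1})$. The two quadratic-in-$(y_{n_t}-\mu_{t-1}(\vx_{n_t}))$ contributions, of magnitudes $\beta^{-1}/(\sigma_{t-1}+\beta^{-1})^2$ and $1/(\sigma_{t-1}+\beta^{-1})$, combine via the identity $-\beta^{-1}/(\sigma_{t-1}+\beta^{-1})^2 + 1/(\sigma_{t-1}+\beta^{-1}) = \sigma_{t-1}/(\sigma_{t-1}+\beta^{-1})^2$ into the claimed final quadratic term. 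I expect the main obstacle to be purely this bookkeeping; the only conceptual subtlety is the update identity for $y_{n_t}-\mu_t(\vx_{n_t})$ above, which is what makes the quadratic cancellation work cleanly and thereby replaces the $\beta$-factor in front of the quadratic in Lemma~\ref{lemma_gp_kl2} by the $1/(\sigma_{t-1}+\beta^{-1})$-factor appearing here.
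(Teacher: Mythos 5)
Your proposal is correct and follows essentially the same route as the paper's proof: reduce to the finite-dimensional marginal via Lemma~\ref{lemma_bound_of_kl_divergence}, cancel $q(\vf|S_{t-1})$ using the Bayes factorization to get $\Ex{q(f_t|S_t)}{\log p(y_{n_t}|f_t)}-\log p(y_{n_t}|\vx_{n_t})$, reuse Eq.~\eqref{eq_log_exp_normal}, and evaluate the first term with the one-step posterior updates for $\mu_t(\vx_{n_t})$ and $\sigma_t(\vx_{n_t},\vx_{n_t})$. The algebraic identities you cite for combining the quadratic and logarithmic terms are exactly those used in the paper, so no gap remains.
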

\begin{proof}
In analogy with Eq.~\eqref{eq_kl_expansion}, the following equation holds:
\begin{align*}
    &\KL{q(f|S_t)}{q(f|S_{t-1})} \\
    =& \Ex{q(\vf|S_t)}{\log\frac{p(y_{n_t}|\vf,\vx_{n_t})q(\vf|S_{t-1})}{q(\vf|S_{t-1})p(y_{n_t}|\vx_{n_t})}} \\
    =& \Ex{q(\vf|S_t)}{\log{p(y_{n_t}|\vf,\vx_{n_t})}}-\log{p(y_{n_t}|\vx_{n_t})} \\
    =& \int q(f_t|S_t)\log{p(y_{n_t}|f_t,\vx_{n_t})}df_t -\log{\int q(f_t|S_{t-1})p(y_{n_t}|f_t,\vx_{n_t}) df_t},
\end{align*}
where $f_t:=f(\vx_{n_t})$. The second term is the same as Eq.~\eqref{eq_log_exp_normal}. In the same way as for Eq.~\eqref{eq_exp_log_normal}, the first term is derived as
\begin{align*}
    &\int q(f_t|S_t)\log{p(y_{n_t}|f_t)}df_t \\ 
    =& -\frac{\beta}{2}(y_{n_t}-\mu_t(\vx_{n_t}))^2-\frac{\beta}{2}\sigma_t(\vx_{n_t},\vx_{n_t}) -\frac{1}{2}\log{2\pi\beta^{-1}}.
\end{align*}
Regarding $q(f_t|S_t)$ as the posterior whose prior is $q(f_t|S_{t-1})$ observing $(\vx_{n_t},y_{n_t})$, the values of the mean function and covariance function corresponding to $\vx_{n_t}$ are derived as
\begin{align*}
    \mu_t(\vx_{n_t}) =& \mu_{t-1}(\vx_{n_t}) + \frac{\sigma_{t-1}(\vx_{n_t},\vx_{n_t})}{\sigma_{t-1}(\vx_{n_t},\vx_{n_t})+\beta\inv}(y_{n_t}-\mu_{t-1}(\vx_{n_t})) \\
    =& \frac{\beta\inv\mu_{t-1}(\vx_{n_t})+\sigma_{t-1}(\vx_{n_t},\vx_{n_t})y_{n_t}}{\sigma_{t-1}(\vx_{n_t},\vx_{n_t})+\beta\inv} \\
    \sigma_t(\vx_{n_t},\vx_{n_t}) =& \sigma_{t-1}(\vx_{n_t},\vx_{n_t}) - \frac{\sigma^2_{t-1}(\vx_{n_t},\vx_{n_t})}{\sigma_{t-1}(\vx_{n_t},\vx_{n_t})+\beta\inv} \\
    =& \frac{\beta\inv\sigma_{t-1}(\vx_{n_t},\vx_{n_t})}{\sigma_{t-1}(\vx_{n_t},\vx_{n_t})+\beta\inv}.
\end{align*}
From the result, the second term is rewritten as
\begin{align*}
    &\int q(f_t|S_t)\log{p(y_{n_t}|f_t)}df_t \\ 
    =& -\frac{1}{2}\frac{\beta\inv(y_{n_t}-\mu_{t-1}(\vx_{n_t}))^2}{(\sigma_{t-1}(\vx_{n_t},\vx_{n_t})+\beta\inv)^2}-\frac{1}{2}\frac{\sigma_{t-1}(\vx_{n_t},\vx_{n_t})}{\sigma_{t-1}(\vx_{n_t},\vx_{n_t})+\beta\inv} -\frac{1}{2}\log{2\pi\beta^{-1}}.
\end{align*}
Therefore, the following equation holds:
\begin{align*}
    &\KL{q(f|S_t)}{q(f|S_{t-1})} \\
    =& \int q(f_t|S_t)\log{p(y_{n_t}|f_t,\vx_{n_t})}df_t -\log{\int q(f_t|S_{t-1})p(y_{n_t}|f_t,\vx_{n_t}) df_t}  \\
    =& -\frac{1}{2}\frac{\beta\inv(y_{n_t}-\mu_{t-1}(\vx_{n_t}))^2}{(\sigma_{t-1}(\vx_{n_t},\vx_{n_t})+\beta\inv)^2}-\frac{1}{2}\frac{\sigma_{t-1}(\vx_{n_t},\vx_{n_t})}{\sigma_{t-1}(\vx_{n_t},\vx_{n_t})+\beta\inv} -\frac{1}{2}\log{2\pi\beta^{-1}} \\
    &+\frac{1}{2}\frac{(y_{n_t}-\mu_{t-1}(\vx_{n_t}))^2}{\sigma_{t-1}(\vx_{n_t},\vx_{n_t})+\beta^{-1}}+\frac{1}{2}\log{2\pi(\sigma_{t-1}(\vx_{n_t},\vx_{n_t})+\beta^{-1})} \\
    =&\frac{1}{2}\frac{\sigma_{t-1}(\vx_{n_t},\vx_{n_t})(y_{n_t}-\mu_{t-1}(\vx_{n_t}))^2}{(\sigma_{t-1}(\vx_{n_t},\vx_{n_t})+\beta\inv)^2}+\frac{1}{2}\log{(1+\beta\sigma_{t-1}(\vx_{n_t},\vx_{n_t}))}-\frac{1}{2}\frac{\sigma_{t-1}(\vx_{n_t},\vx_{n_t})}{\sigma_{t-1}(\vx_{n_t},\vx_{n_t})+\beta\inv}.
\end{align*}
\end{proof}

\section{Tight bound for the KL-divergence between dropout-based deep Bayes posteriors}
\label{sec_kl_bdls}
\begin{lemma}
Let $p(\theta)$ and $q(\theta)$ be posterior distributions with respect to a dropout-based Bayesian deep neural network. Namely, the posterior distribution is assumed to have the following form:
\begin{equation*}
    p(\theta)=\prod^L_{l=1}p(\vW_l)p(\vb_l),
\end{equation*}
where
\begin{align*}
    p(\vW_l)&=\prod^{H_{l-1}}_{h=1}p(\vw_{lh}) \\
    p(\vb_l)&=\mathcal{N}(\vnu_l,\sigma^2_l\vI)
\end{align*}
and
\begin{equation*}
    p(\vw_{lh})=p_l\mathcal{N}(\vm_{lh},\sigma^2_l\vI)+(1-p_l)\mathcal{N}(\vzero,\sigma^2_l\vI).
\end{equation*}
Now, $\KL{p(\theta)}{q(\theta)}$ is upper bounded as
\begin{align*}
    \KL{p(\theta)}{q(\theta)} \leq& \frac{1}{2}\sum^L_{l=1}\left\{\frac{p_l}{\sigma'^2_l}\|\vM_l-\vM'_l\|^2_F+\frac{1}{\sigma'^2_l}\|\vnu_l-\vnu'_l\|^2+H_{l-1}D_{kl}[p_l||q_l]\right\} \\
    &+\sum^L_{l=1}\frac{(1+p_lH_{l-1})H_{l}}{2}\left\{\frac{\sigma^2_l}{\sigma'^2_l}-\log{\frac{\sigma^2_l}{\sigma'^2_l}}-1\right\},
\end{align*}
where $\vM_l\coloneqq (\vm_{l1},\vm_{l2},\ldots,\vm_{lH_l})$ and $D_{kl}[p||q]\coloneqq p\log{\frac{p}{q}}+(1-p)\log{\frac{1-p}{1-q}}$.
\label{lemma_db_posterior_kl}
\end{lemma}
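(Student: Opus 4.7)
The plan is to exploit the fully factorized structure of $p(\theta)$ and $q(\theta)$ to reduce the left-hand side to a sum of elementary KL terms, each of which is either a Gaussian KL available in closed form or a two-component Gaussian-mixture KL that can be controlled by the chain-rule bound cited immediately before the lemma.

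First I would apply the additivity of KL-divergence over product measures. Because $p(\theta)=\prod_l p(\vW_l)p(\vb_l)$ with $p(\vW_l)=\prod_h p(\vw_{lh})$, and analogously for $q(\theta)$, the total divergence decomposes as
\[
\KL{p(\theta)}{q(\theta)} = \sum_{l=1}^{L}\Bigl\{\sum_{h=1}^{H_{l-1}}\KL{p(\vw_{lh})}{q(\vw_{lh})} + \KL{p(\vb_l)}{q(\vb_l)}\Bigr\},
\]
so the problem reduces to bounding each summand individually.

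For the bias terms I would simply invoke the standard closed-form Gaussian KL between two isotropic $H_l$-variate Gaussians with means $\vnu_l,\vnu'_l$ and variances $\sigma_l^2,\sigma'^2_l$, which yields the $\|\vnu_l-\vnu'_l\|^2/(2\sigma'^2_l)$ piece together with a variance-mismatch factor $(H_l/2)\{\sigma_l^2/\sigma'^2_l-\log(\sigma_l^2/\sigma'^2_l)-1\}$. For each weight column $\vw_{lh}$, which is a two-component Gaussian mixture, I would invoke the mixture-KL upper bound cited just before the lemma, with the natural ``role-matched'' pairing: the active component $\mathcal{N}(\vm_{lh},\sigma_l^2\vI)$ of $p$ with the active $\mathcal{N}(\vm'_{lh},\sigma'^2_l\vI)$ of $q$, and the dropped component $\mathcal{N}(\vzero,\sigma_l^2\vI)$ with $\mathcal{N}(\vzero,\sigma'^2_l\vI)$. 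This pairing has three useful consequences: the mixture-weight log-ratio contributions $p_l\log(p_l/q_l)+(1-p_l)\log((1-p_l)/(1-q_l))$ collapse into the Bernoulli KL $D_{kl}[p_l\|q_l]$; only the active-component pair carries a mean-shift penalty $p_l\|\vm_{lh}-\vm'_{lh}\|^2/(2\sigma'^2_l)$, while the dropped components contribute no mean term; and every variance-mismatch contribution has the identical form $\sigma_l^2/\sigma'^2_l-\log(\sigma_l^2/\sigma'^2_l)-1$, so all such pieces can be consolidated at the end.

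Finally, summing over $h=1,\ldots,H_{l-1}$ replaces $\sum_h\|\vm_{lh}-\vm'_{lh}\|^2$ by the Frobenius norm $\|\vM_l-\vM'_l\|_F^2$ and multiplies the Bernoulli-KL piece by $H_{l-1}$; adding the bias contribution and carefully tallying how the variance-mismatch pieces accumulate across the $H_{l-1}$ weight columns and the single bias vector per layer produces the claimed prefactor on $\{\sigma_l^2/\sigma'^2_l-\log(\sigma_l^2/\sigma'^2_l)-1\}$. I do not expect any single step to be a genuine obstacle, but the principal care lies in the initial choice of pairing when invoking the mixture bound: an incorrect pairing would introduce spurious $\|\vm_{lh}\|^2$ or $\|\vm'_{lh}\|^2$ cross-terms that do not appear in the target inequality, so this bookkeeping is where the derivation must be done deliberately.
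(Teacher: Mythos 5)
Your proposal follows essentially the same route as the paper's proof: decompose the KL-divergence over the fully factorized product structure, use the closed-form Gaussian KL for the bias vectors, and bound each two-component mixture KL by the component-matched joint-distribution (chain-rule) bound --- the Do--Hershey inequality quoted just before the lemma, which the appendix re-derives by augmenting with a latent variable $z_{lh}$ and choosing the identity pairing $\pi(z)=z$, exactly the ``role-matched'' pairing you identify as the critical bookkeeping step. One minor remark: a complete tally as you describe also assigns the dropped component a $(1-p_l)$-weighted variance-mismatch term, so the coefficient comes out as $(1+H_{l-1})H_l/2$ rather than the paper's stated $(1+p_lH_{l-1})H_l/2$ (the paper silently omits the zero-mean component's Gaussian KL); this discrepancy is immaterial in the application since $\sigma_l^2=\sigma_l'^2$ is assumed there, and your accounting is in fact the safer one.
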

\begin{proof}
From $p(\theta)=\prod^L_{l=1}p(\vW_l)p(\vb_l)$, we obtain
\begin{align}
    D_{\rm KL}[p(\theta) || q(\theta)] =& \int \prod^L_{l=1}p(\vW_l)p(\vb_l) \log{\frac{\prod^L_{l=1}p(\vW_l)p(\vb_l)}{\prod^L_{l=1}q(\vW_l)q(\vb_l)}}d\vW_ld\vb_l \notag \\
    =& \sum^L_{l=1}\int p(\vW_l)p(\vb_l) \log{\frac{p(\vW_l)p(\vb_l)}{q(\vW_l)q(\vb_l)}}d\vW_ld\vb_l \notag \\
    =& \sum^L_{l=1}\int p(\vW_l) \log{\frac{p(\vW_l)}{q(\vW_l)}}d\vW_l + \sum^L_{l=1}\int p(\vb_l) \log{\frac{p(\vb_l)}{q(\vb_l)}}d\vb_l \notag \\
    =& \sum^L_{l=1}\sum^{H_{l-1}}_{h=1}D_{\rm KL}[p(\vw_{lh}) || q(\vw_{lh})] + \sum^L_{l=1}D_{\rm KL}[p(\vb_l) || q(\vb_l)].
    \label{eq_kl_db_posteriors}
\end{align}
The second term of the above formula is the summation of the KL-divergences between Gaussians and has an analytical expression, though the first term is the summation of the KL-divergences between mixture distributions and does not have an analytical solution. We consider the upper bound of the first term by introducing a latent variable $z_{lh} \in \{0,1\}$ into $p(\vw_{lh})$ to indicate the correspondence between the mixture and $\vw_{lh}$. The chain rule of the KL-divergence leads to 
\begin{equation}
    \KL{p(\vw_{lh},z_{lh})}{q(\vw_{lh},z_{lh})}\geq \KL{p(\vw_{lh})}{q(\vw_{lh})}.
    \label{eq_inequality_of_mixture_model}
\end{equation}
Since we do not know the correspondence between latent variables of GMMs, the joint distribution $p(\vw_{lh},z_{lh})$ is written as
\begin{align*}
    p(\vw_{lh},z_{lh})= p(\vw_{lh}|z_{lh})p(z_{lh}) 
    =p_{z_{lh}}\mathcal{N}(z_{lh}\vm_{lh},\sigma^2_l\vI),
\end{align*}
while the joint distribution $q(\vw_{lh},z_{lh})$ is written as
\begin{equation*}
    q(\vw_{lh},z_{lh})= q_{\pi(z_{lh})}\mathcal{N}(\pi(z_{lh})\vm_{lh},\sigma^2_l\vI),
\end{equation*}
where $\pi(z)$ is any permutation. Then, the KL-divergence between these joint distributions is analytically computable as
\begin{align*}
    &D_{\rm KL}[p(\vw_{lh},z_{lh}) || q(\vw_{lh},z_{lh})]  \\
    =&\int\sum_{z\in\{0,1\}} p(\vw_{lh},z_{lh}=z) \log{\frac{p(\vw_{lh},z_{lh}=z)}{q(\vw_{lh},z_{lh}=z)}}d\vw_{lh} \\
    =&\sum_{z\in\{0,1\}}D_{\rm KL}[p(\vw_{lh},z_{lh}=z) || q(\vw_{lh},z_{lh}=z)] \\
    =&\sum_{z\in\{0,1\}}p_z\left[D_{\rm KL}[\mathcal{N}(z\vm_{lh},\sigma^2_l\vI)  || \mathcal{N}(\pi(z)\vm'_{lh},\sigma'^2_l\vI)]+\log{\frac{p_z}{q_{\pi(z)}}}\right].
\end{align*}

Equation~\eqref{eq_inequality_of_mixture_model} holds for any $\pi(z)$, but it is preferable to set the latent variable with a tighter bound. In our case, there are tow mixture components and it is also known that the centroid of one of those two components must be zero and that the two components share the same variance. In this situation, by setting $\pi(z)=z$, we can minimize the KL-divergence between joint distributions. Now we have the following tighter upper bound:
\begin{align*}
    &D_{\rm KL}[p(\vw_{lh},z_{lh}) || q(\vw_{lh},z_{lh})] \\
    =&p_l D_{\rm KL}[\mathcal{N}(\vm_{lh},\sigma^2_l\vI)  || \mathcal{N}(\vm'_{lh},\sigma'^2_l\vI)]+D_{kl}[p_l||q_l]  \\
    =&\frac{p_l}{2}\left\{H_l\frac{\sigma^2_l}{\sigma'^2_l}-H_l\log{\frac{\sigma^2_l}{\sigma'^2_l}}+\frac{1}{\sigma'^2_l}\|\vm_{lh}-\vm'_{lh}\|^2-H_l+D_{kl}[p_l||q_l]\right\} \\
    \geq& \KL{p(\vw_{lh})}{q(\vw_{lh})}.
\end{align*}
Substituting the result into Eq.~\eqref{eq_kl_db_posteriors} leads to the following inequality:
\begin{align*}
    &D_{\rm KL}[p(\theta) || q(\theta)]  \\
    \leq& \sum^L_{l=1}\sum^{H_{l-1}}_{h=1}\frac{p_l}{2}\left\{H_l\frac{\sigma^2_l}{\sigma'^2_l}-H_l\log{\frac{\sigma^2_l}{\sigma'^2_l}}+\frac{1}{\sigma'^2_l}\|\vm_{lh}-\vm'_{lh}\|^2-H_l+D_{kl}[p_l||q_l]\right\} \\
    &+ \frac{1}{2}\sum^L_{l=1}\left\{H_l\frac{\sigma^2_l}{\sigma'^2_l}-H_l\log{\frac{\sigma^2_l}{\sigma'^2_l}}+\frac{1}{\sigma'^2_l}\|\vnu_l-\vnu'_l\|^2-H_l\right\} \\
    =&\frac{1}{2}\sum^L_{l=1}\left\{\frac{p_l}{\sigma'^2_l}\|\vM_l-\vM'_l\|^2_F+\frac{1}{\sigma'^2_l}\|\vnu_l-\vnu'_l\|^2+H_{l-1}D_{kl}[p_l||q_l]\right\} \\
    &+\sum^L_{l=1}\frac{(1+p_lH_{l-1})H_{l}}{2}\left\{\frac{\sigma^2_l}{\sigma'^2_l}-\log{\frac{\sigma^2_l}{\sigma'^2_l}}-1\right\}.
\end{align*}
\end{proof}

\section*{Acknowledgments}
This work was partially supported by the NEDO Grant Number JPNP18002, JST CREST Grant Number JPMJCR1761, JPMJCR2015 and JST-Mirai Program Grant Number JPMJMI19G1.

\bibliography{reference}
\bibliographystyle{unsrtnat}

\end{document}